\crefname{assumption}{assumption}{assumptions}
\newcommand{\nsum}{\sum_{i=1}^N}
\newcommand{\knsum}{\sum_{k,i}}
\newcommand{\pknsum}{\sum_{k',i'}}
\newcommand{\pdl}[1]{#1-\tau(#1, i')}
\newcommand{\psum}{\sum_{i'=1}^N}
\newcommand{\dl}[1]{#1-\tau(#1, i)}
\newcommand{\w}[1]{w_{#1}}
\newcommand{\wopt}{w_*}
\newcommand{\sg}{\tilde{\nabla}}
\newcommand{\hessian}{\nabla^2 f}
\newcommand{\norm}[1]{\left\| #1 \right\|}
\newcommand{\EE}[2][]{\mathbb{E}_{#1}\left[#2\right]}
\newcommand{\PP}{\mathbb{P}}
\newcommand{\calA}[1]{\mathcal{A}_{#1}}
\newcommand{\calB}[1]{\mathcal{B}_{#1}}
\newcommand{\calC}[1]{\mathcal{C}_{#1}}
\newcommand{\calD}[1]{\mathcal{D}_{#1}}
\newcommand{\calF}[1][t]{\mathcal{F}_{#1}}
\newcommand{\calH}[1][]{\mathcal{H}_{#1}}
\newcommand{\calI}[1][]{\mathcal{I}_{#1}}
\newcommand{\calQ}[1][]{\mathcal{Q}_{#1}}
\newcommand{\calS}[1][]{\mathcal{S}_{#1}}
\newcommand{\calT}[1][]{\mathcal{T}_{#1}}
\newcommand{\calU}[1][]{\mathcal{U}_{#1}}
\newcommand{\calV}[1][]{\mathcal{V}_{#1}}
\newcommand{\dotp}[2]{\left<#1, #2\right>}
\newcommand{\dmaxt}[1][T]{\bar{d}_{\max, #1}}
\newcommand{\dbarmaxt}[1][T]{\bar{d}_{\max, {#1}}}
\newcommand{\taubar}{\bar{\tau}}
\newcommand{\taubart}[1][T]{\bar{\tau}_{#1}}
\newcommand{\taubarmaxt}[1][T]{\bar{\tau}_{\max, {#1}}}
\newcommand{\lmaxt}[1][T]{l_{\max,#1}}
\newcommand{\teta}[1][t]{\tilde{\eta}_{#1}}
\newcommand{\cO}{\mathcal{O}}
\newcommand{\cS}{\mathcal{S}}
\newcommand{\E}{\mathbb{E}}
\newcommand{\Ac}{\mathcal{A}}
\newcommand{\Dc}{\mathcal{D}}
\newcommand{\Bc}{\mathcal{B}}
\newcommand{\tDc}{\mathcal{\tilde D}}
\newcommand{\ber}{\text{Ber}}
\newcommand{\KL}{\text{KL}}
\renewcommand{\ss}{\text{\textbf{s}}}
\newcommand{\xinran}[1]{\textcolor{brown}{[Xinran: #1]}}
\newcommand{\algoname}[1][ ]{\texttt{MIFA#1}}
\newcommand{\fedavg}[1][ ]{\texttt{FedAvg#1}}
\newcommand{\fedprox}[1][ ]{\texttt{FedProx#1}}
\newcommand{\scaffold}[1][ ]{\texttt{SCAFFOLD#1}}
\newtheorem{theorem}{Theorem}[section]
\newtheorem{definition}{Definition}[section]
\newtheorem{assumption}{Assumption}
\newtheorem{lemma}{Lemma}[section]
\newtheorem{corollary}{Corollary}[section]
\newtheorem{proposition}{Proposition}[section]
\newtheorem{remark}{Remark}[section]
\DeclarePairedDelimiterX{\inp}[2]{\langle}{\rangle}{#1, #2}
\DeclarePairedDelimiterX{\abs}[1]{\lvert}{\rvert}{#1}
\DeclarePairedDelimiterX{\cbr}[1]{\{}{\}}{#1} 
\DeclarePairedDelimiterX{\rbr}[1]{(}{)}{#1} 
\DeclarePairedDelimiterX{\sbr}[1]{[}{]}{#1} 
\title{Fast Federated Learning in the Presence of\\
Arbitrary Device Unavailability}
\author{%
Xinran Gu\thanks{Equal contribution.}\\
  Department of Industrial Engineering\\
  Tsinghua University\\
  \texttt{gxr17@mails.tsinghua.edu.cn} \\
  \And
  Kaixuan Huang\footnotemark[1]\\
  ECE\\
  Princeton University\\
  \texttt{kaixuanh@princeton.edu} \\
  \AND
  Jingzhao Zhang\\
  EECS\\
  Massachusetts Institute of Technology \\
  \texttt{jzhzhang@mit.edu} \\
  \And
  Longbo Huang\\
  IIIS\\
  Tsinghua University\\
  \texttt{longbohuang@tsinghua.edu.cn} \\
}
\begin{document}

\maketitle

\begin{abstract}
Federated Learning (FL) coordinates with numerous heterogeneous devices to collaboratively train a shared model while preserving user privacy. Despite its multiple advantages, FL faces new challenges. One challenge arises when devices drop out of the training process beyond the control of the central server. In this case, the convergence of popular FL algorithms such as FedAvg is severely influenced by the straggling devices. To tackle this challenge, we study federated learning algorithms under arbitrary device unavailability and propose an algorithm named Memory-augmented Impatient Federated Averaging (MIFA). Our algorithm efficiently avoids excessive latency induced by inactive devices, and corrects the gradient bias using the memorized latest updates from the devices. We prove that MIFA achieves minimax optimal convergence rates on non-i.i.d.~data for both strongly convex and non-convex smooth functions. 
We also provide an explicit characterization of the improvement over baseline algorithms through a case study, and validate the results by numerical experiments on real-world datasets.
\end{abstract} 

\section{Introduction}
Federated learning is a machine learning setting in which a central server coordinates with a large number of devices to collectively train a shared model \citep{mcmahan2017communication,smith2017federated,konevcny2015federated,sattler2019robust,MLSYS2020_38af8613, Li2020On}. Practical advantages of this training scheme are mainly twofold. First, each device keeps the private data locally and hence preserves its data privacy. Second, federated learning can make use of idle computing resources and lower computation costs. Although federated learning successfully scales up with data sizes and accelerates training via more affordable computing power \citep{yu2019linear, wang2019cooperative,stich2019local}, the collaborative setup leads to new challenges due to  large variations among individual computing devices. Our work aims to formulate and investigate the impact of device variations on FL from an optimization perspective.

In FL, a device can differ from its peers in multiple aspects \citep{kairouz2019advances, MLSYS2020_38af8613}. \emph{First}, the data distribution and local task can be different among devices. To address the data variation, non-i.i.d. objective models were proposed and analyzed by~\citep{Li2020On, pmlr-v119-karimireddy20a, khaled2020tighter,yang2021achieving, MLSYS2020_38af8613}. We follow this line of work and formulate our optimization objective as a sum of stochastic functions on individual devices (See Eqn.~\eqref{eq:obj}).

A \emph{second} variation among devices is caused by different computing and communication speeds. One natural way to formulate the variation in computation speeds is to allow asynchronous updates and model the  updates as delayed responses. Lots of novel research has studied the problem with different delay models, e.g.,
\citep{stich2020error,arjevani2020tight,NIPS2015_452bf208,feyzmahdavian2016asynchronous,zhang2020taming,agarwal2012distributed, aytekin2016analysis, glasgow2020asynchronous}. However, the delayed setup assumes that all devices make roughly the same number of (delayed) responses in the end. This behavior may deviate largely from the FL practice, where each device, e.g., personal cell phones, can have very different active duration when participating in the FL training, and hence make different numbers of responses. For this reason, our work aims to address this \emph{third}  discrepancy among devices caused by individual availability patterns.

The \emph{third }device heterogeneity caused by different availability patterns is less studied in optimization for federated learning problems. In this model, instead of making a delayed response, devices can abort the training halfway, e.g., due to battery level, incoming calls, etc, and fail to return their responses upon the central server's requests \citep{mcmahan2017communication, MLSYS2019_bd686fd6, kairouz2019advances}. To handle missing responses, researchers propose algorithms where the central server may collect responses from only a fraction of the devices and make updates \citep{pmlr-v119-karimireddy20a, mcmahan2017communication, yang2021achieving, MLSYS2020_38af8613, Li2020On, kairouz2019advances}.

Previous works on collecting responses from a fraction of devices can be divided into two categories. When the response distribution is known, one could collect only the fastest responses and re-weight according to their response probability \citep{kairouz2019advances,Li2020On}. This model can be restrictive, as in practice, the exact distribution may not be available and may evolve. Another line of work  assumes that the server can arbitrarily decide and sample a set of devices to collect responses accordingly in every communication round  \citep{pmlr-v119-karimireddy20a, mcmahan2017communication, yang2021achieving, MLSYS2020_38af8613}. This model does not require knowing the response possibility. However, the response time can be very long if the selected subset contains unavailable devices.

In this work, we address the above limitations by studying federated learning in the presence of arbitrary device unavailability. Within this practical setup, we propose an algorithm that automatically adapts to the underlying unavailability and allows patterns of the device unavailability to be non-stationary and even adversarial. Furthermore, our algorithm can achieve optimal convergence rates in the presence of device inactivity and automatically reduce to best-known rates if all devices are active. Our contributions are summarized as follows.

\begin{itemize}
    \item  We investigate the federated learning problem with a practical formulation of device participation, which does not require each device to be online according to an  (either known or unknown) distribution. 
    \item We propose the \emph{Memory-augmented Impatient Federated Averaging} (\algoname[]) algorithm that is agnostic to the availability pattern. It efficiently avoids excessive latency induced by inactive devices, successfully exploits the information about the descent direction in stale and noisy gradients, and corrects the gradient bias using the memorized latest updates. 
    \item We prove that \algoname[] achieves minimax optimal convergence rates $\mathcal{O}\left(\tfrac{\taubart+1}{N KT}\right)$ for smooth, strongly convex functions, and $\mathcal{O}\left(\textstyle\sqrt{\tfrac{\bar{\nu}+1}{NKT}}\right)$ for smooth, non-convex functions
    (see definitions in Sections~\ref{sec:setup}, \ref{sec:cvx} and \ref{sec:non_cvx}), and establish matching lower bounds.
    \algoname[] also achieves optimal convergence rates in the degenerated case when all devices are active.
    \item We provide an explicit characterization of the improvement over baseline algorithms through a case study and empirically verify our results on real-world datasets.
\end{itemize}

\section{Related work}\label{sec:related}

\paragraph{Federated learning.}

Federated Averaging (\fedavg[]) was first proposed in \citep{mcmahan2017communication}. \citep{Li2020On,khaled2020tighter,pmlr-v119-karimireddy20a,yang2021achieving} provided convergence analysis for \fedavg[] on non-i.i.d. data and quantified how data heterogeneity degrades the convergence rate. Several variants of \fedavg[] were designed to deal with data heterogeneity. \fedprox[] \citep{MLSYS2020_38af8613} adds a proximal term to local objective functions, while \texttt{FSVRG} \citep{konevcny2016federated} and \scaffold[] \citep{pmlr-v119-karimireddy20a} employ variance reduction techniques.

One line of work focused on variations in computation capabilities among devices~\citep{wang2020tackling,reisizadeh2020straggler,xie2019asynchronous}. These models assume that responses are delayed but not missing. To address the missing response, some work assumes that the server can actively sample a subset of devices to respond \citep{pmlr-v119-karimireddy20a, mcmahan2017communication, yang2021achieving, MLSYS2020_38af8613} or that the pattern of device availability is known \citep{Li2020On, kairouz2019advances,eichner2019semi}. These results do not generalize to adversarial inactive patterns. \citep{ruan2021towards} discussed the impact of device inactivity on convergence but their proposed algorithm diverges if there exists an inactive device in each round of communication. 
However, our setup allows adversarial patterns under certain non-distributional assumptions (see Section~\ref{sec:cvx}) while our proposed algorithm still achieves convergence.

\paragraph{Asynchronous distributed optimization.} Our work is related to literature in the field of traditional asynchronous distributed optimization in that our proposed algorithm uses stale gradients. The problem setup for asynchronous distributed algorithms can be divided into two categories \citep{glasgow2020asynchronous}. One is the shared-data (i.i.d.) setting, where all workers can access the whole dataset. In this setting, the local gradient is an unbiased estimator of the global gradient \citep{stich2020error,arjevani2020tight,NIPS2015_452bf208,feyzmahdavian2016asynchronous,zhang2020taming,agarwal2012distributed}. In contrast, we assume each worker has non-i.i.d. data, and hence the local stochastic gradient can not be viewed as an unbiased estimator of the global gradient.

The other less studied setting in distributed optimization is the distributed-data setting (non-i.i.d.), where data are partitioned among workers. Specifically, \citep{aytekin2016analysis} proposed an asynchronous incremental aggregated gradient algorithm that uses buffered gradients to update the global model. Unlike our setup, this algorithm evaluates full local gradients, performs only one local step, and was analyzed under the bounded delay assumption. \citep{glasgow2020asynchronous} models the delay as stochastic and assumes that the server has knowledge of the distribution, but our formulation is distribution-free. \citep{basu2019qsparse} allows workers to perform multiple local steps and communicate with the server at different times, but the authors assume that all workers are available and compute at the same rate.

\paragraph{Comparison with an independent work.}
While preparing the manuscript, we were unaware of
 an {independent} work \citep{yan2020distributed} that investigated the same setup and proposed a similar algorithm called \texttt{FedLaAvg}. 
 Their main theorem established the convergence rate of $\cO \left(\sqrt{\tfrac{\nu_{\max}}{N^{0.5} T}}(G^2 + \sigma^2)\right)$ for smooth and non-convex problems, where $G^2$ is the uniform upper bound for the squared norm of stochastic gradients and $\nu_{max}$ is the maximum number of inactive rounds.
In comparison, we prove the minimax optimal rate of $\cO(\sqrt{\tfrac{\bar{\nu}}{N K T}}\sigma^2)$ without the bounded gradient assumption, also improving $\nu_{\max}$ to $\bar{\nu}$. Furthermore, our result achieves a linear speedup in $N$ and $K$.

Apart from non-convex functions, we also derive the minimax optimal rates for strongly convex smooth functions under the mild assumption that allows for arbitrary and unbounded number of inactive rounds. Both of our results achieve linear speedups in terms of $N$ and $K$, and automatically recover the best-known rates of $\fedavg[]$ when all devices are active. We also show that our proposed algorithm achieves acceleration over unbiased baseline algorithms in the presence of stragglers.

\section{Problem Setup}\label{sec:setup}

We consider optimizing the following problem in a Federated Learning setting:
\begin{align}\label{eq:obj}
        \min_{w\in \mathbb{R}^d} f(w) := \textstyle \frac{1}{N}  \nsum f_i(w) := \frac{1}{N}  \nsum \E_{\xi_i} [f_i(w, \xi_i)], 
\end{align}
where $w$ is the optimization variable, e.g., parameters of a machine learning model, 
$N$ is the number of participating devices,  $f_i$ is the local loss function on device $i$, and $\xi_i$ describes the randomness in local data distribution.

In the ideal federated learning setup (see Figure~\ref{fig:setup} (a)), all devices return responses within similar time, and hence the central server collects all the local updates. In this case, the computation cost is usually measured by the number of local stochastic oracle evaluations, which is proportional to the number of rounds.  In a delayed FL setup  (see Figure~\ref{fig:setup} (b)), devices are always active upon the central server's request but may return responses with a delay. Here, all devices return almost the same number of responses in the long term.

As we discussed, the above setups do not depict a real-world scenario in which a device can have a longer inactive duration than active duration. In such cases, the communication interval is much longer than the local computation time required for each update, and each device generates an unequal number of responses \citep{mcmahan2017communication, kairouz2019advances}. This motivates our setup in Figure~\ref{fig:setup} (c).

\begin{figure}
\vspace{-0.15in}
\begin{center}
    \subfigure[ Ideal setup]{
        \includegraphics[width=0.315\textwidth]{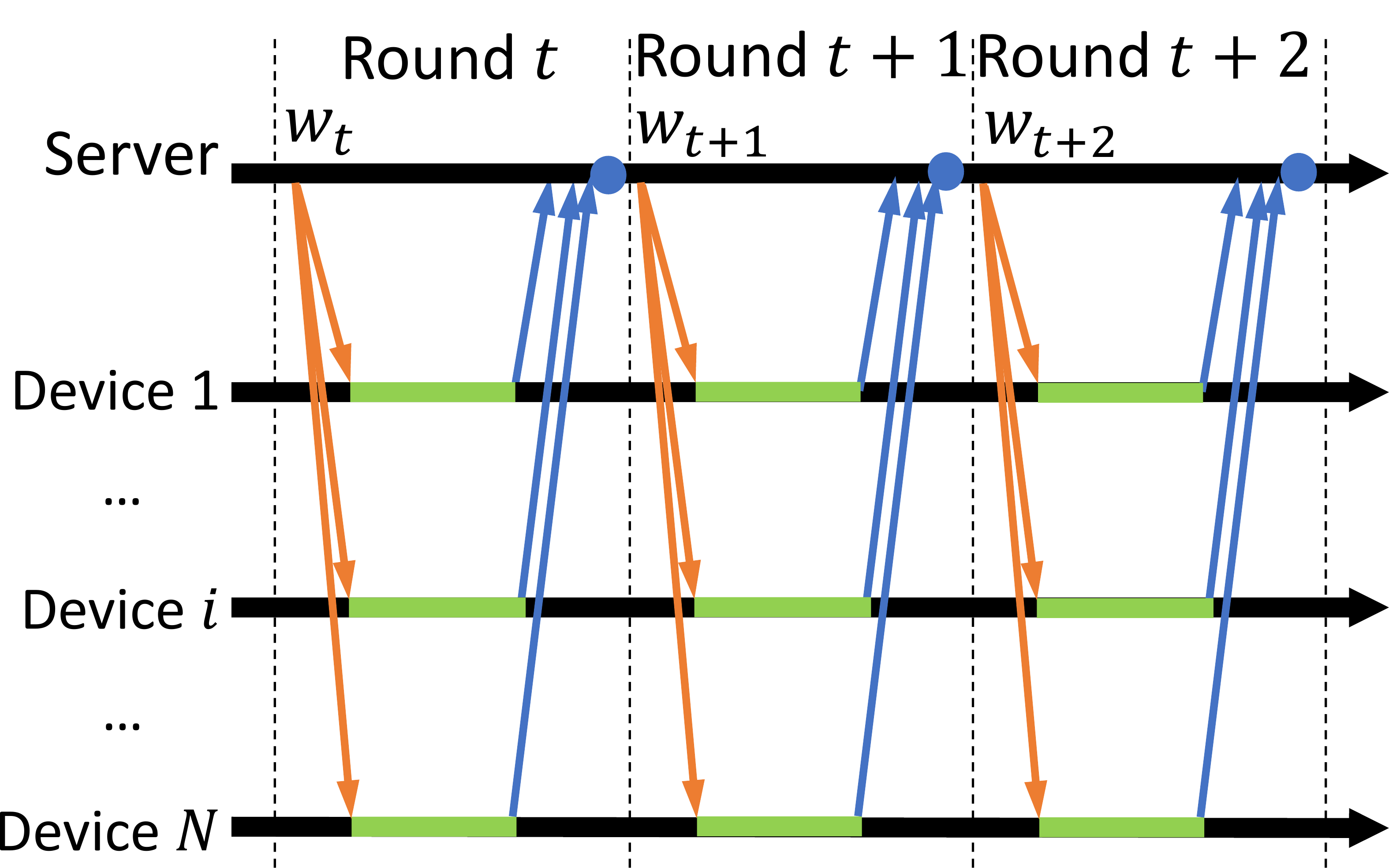}
        \vspace{-0.125in}
    }
    \subfigure[ Delayed setup ]{
        \includegraphics[width=0.315\textwidth]{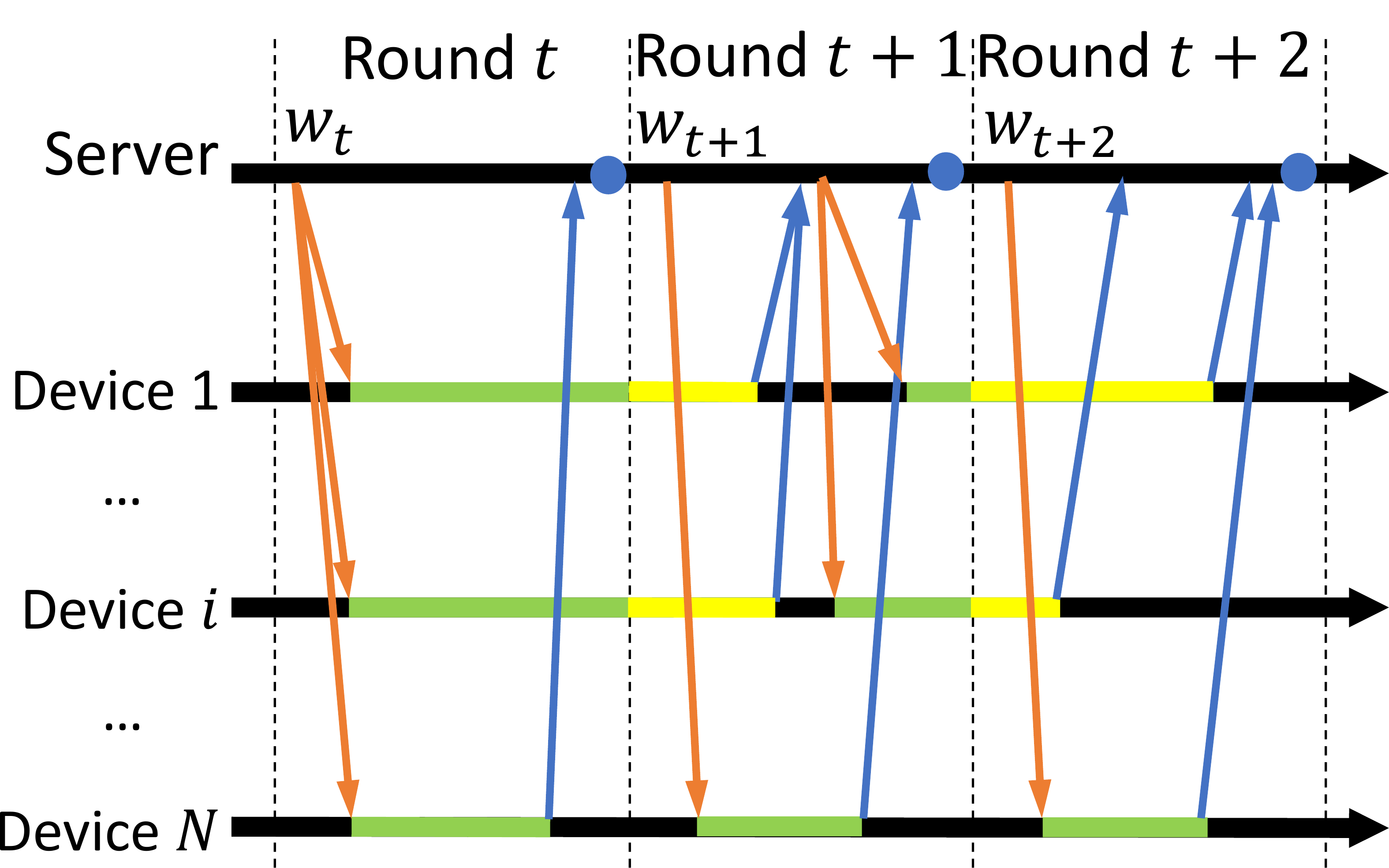}
        \vspace{-0.125in}
    }
    \subfigure[Our setup]{
        \includegraphics[width=0.315\textwidth]{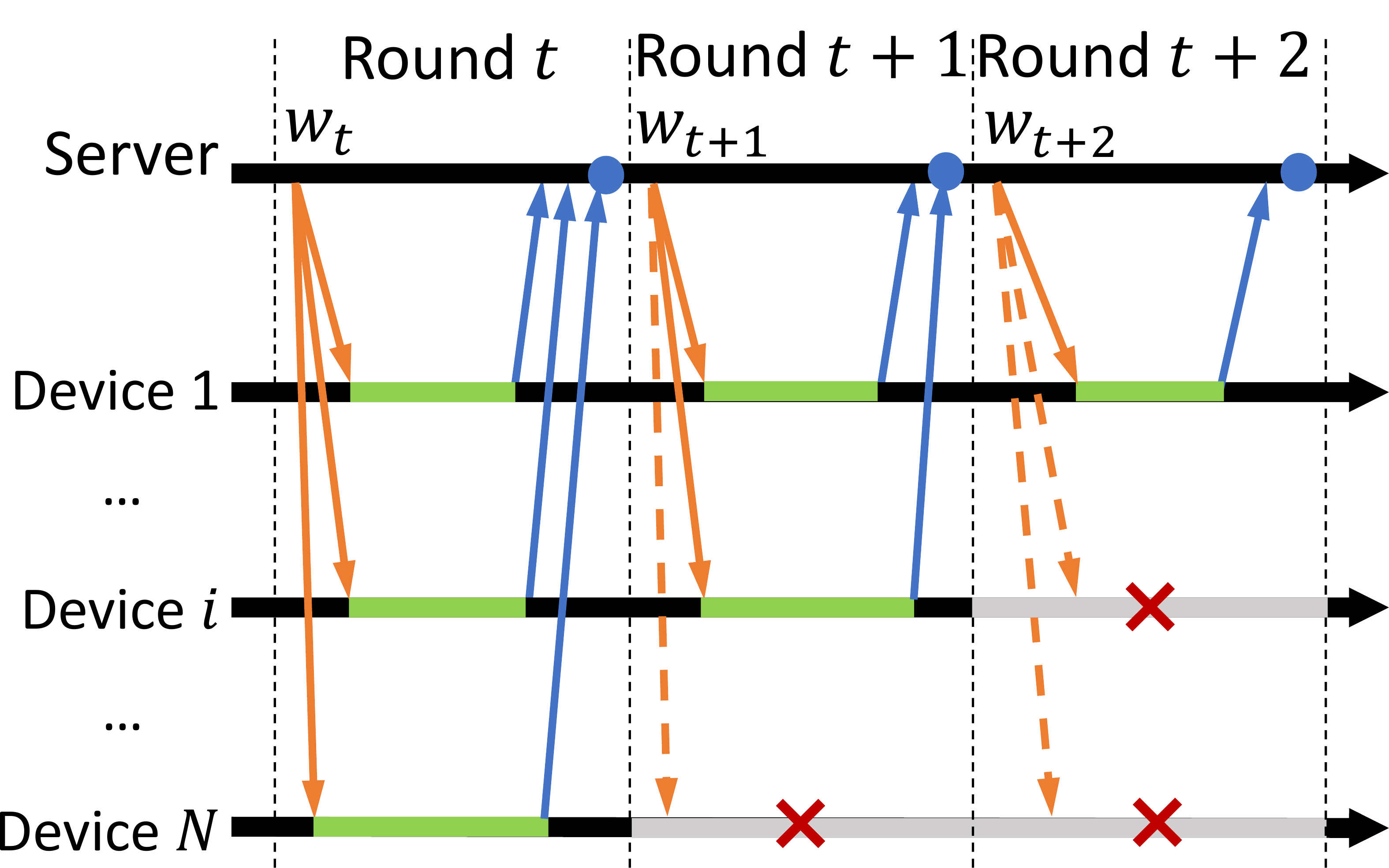}
        \vspace{-0.125in}
    }
        \includegraphics[width=1\textwidth]{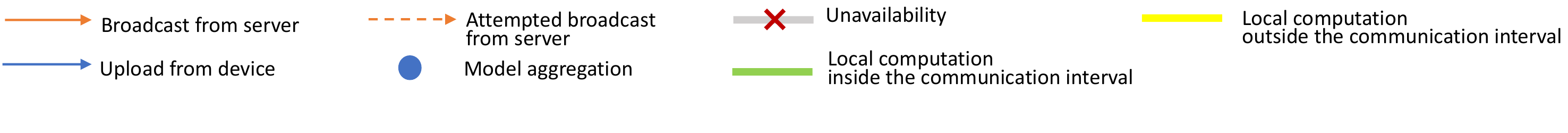}
    \vspace{-0.4in}
    \caption{ An illustration of setup. (a) Ideal setup: all devices return their responses within similar time. (b) Delayed setup: all devices are available, but may return responses with a delay. (c) Our setup: devices can be unavailable arbitrarily, and the communication interval is long enough for active devices to return responses. }
        \label{fig:setup}
    \vspace{-0.3in}
\end{center}
\end{figure}
 
In our proposed setup, we use $t$ to index the global communication rounds. We say a device \emph{participates} or is \emph{active} at round $t$ if it can complete the computation task and send back the update at the end of round $t$. We define $\mathcal{A}(t)$ as the set of all active devices at round $t$. Notice that we make no assumptions on the distribution of the participation patterns of devices and allow them to be arbitrary.

Directly applying  $\fedavg[]$ to the proposed setup can be problematic due to the existence of inactive devices. To accommodate for inactive devices, we discuss three natural variants of $\fedavg[]$ and their limitations. The detailed algorithms can be founded in Appendix~\ref{sec:baseline}.

\begin{itemize}
    \item Biased \fedavg[]. At each communication round, the global model is updated with a direct average of local updates from the active devices. This naive approach induces bias when data distribution and response patterns vary among devices.
    \item \fedavg[] with device sampling. The server selects a subset of $S$ devices randomly without replacement, and then waits till all devices in the subset $\cS$ respond. This is how original \fedavg[] \citep{mcmahan2017communication} addresses device unavailability. Note that over $T$ communication rounds, the global model is updated less than $T$ times due to waiting. This approach is prone to stragglers and we refer the readers to \Cref{sec:case_study} for a detailed discussion.
    \item \fedavg[] with importance sampling \citep{Li2020On,glasgow2020asynchronous,kairouz2019advances}. The local updates from the active devices are weighted by the reciprocal of the participation probabilities to avoid bias. This approach is only applicable when the response of each device is i.i.d. over rounds and it requires the knowledge of participation probabilities. 
\end{itemize}

\section{Memory-augmented Impatient Federated Averaging (MIFA)}\label{sec:alg}

In this section, we introduce our algorithm --- \emph{Memory-augmented Impatient Federated Averaging} (\algoname[]). \algoname[] maintains an update-array $\{G^i\}$ in the memory that stores the latest updates for all devices. As the name suggests, \algoname[] has two components. First, the algorithm is impatient and avoids waiting for any specific device when facing heterogeneous devices with arbitrary availability. Second, the algorithm augments the received updates of the active devices with the stored updates of the inactive devices to perform averaging.

Specifically, at the beginning of round $t$, the server broadcasts the latest model parameter $w_t$ to all active devices $\mathcal{A}(t)$. After receiving $w_t$, each active device, say, the $i$-th device, sets $\w{t, 0}^i = \w{t}$ and performs $K$ steps of SGD with respect to the local objective function to get $\w{t, K}^i$:
\begin{align*}
    \w{t, k+1}^i = \w{t, k}^i - \eta_t \sg f_i(\w{t, k}^i), \  k=0, \cdots, K-1,
\end{align*}
where $\eta_t$ is the learning rate and $\sg f_i(\w{t,k}^i)$ is the stochastic gradient evaluated on device $i$. Next, the server stores the received update $\frac{1}{\eta_t}(\w{t}-\w{t, K}^i )$ in $G^i$. Denote by $\{ G^i_t \}$ the update-array after round $t$, then we have
\[
    G^i_t =\left\{ 
    \begin{array}{ll}
        G^i_{t-1}, & \text{ if } i \notin \mathcal{A}(t),\\
        \frac{1}{\eta_t}(\w{t}-\w{t, K}^i ), & \text{ if } i \in \mathcal{A}(t).
    \end{array}
    \right.
\]
At the end of round $t$, the server updates the global model with the average of $\{G_t^i\}$ (line 9). In other words, our algorithm \algoname[] updates the model with the latest available accumulated gradients for all devices.

\begin{algorithm}
    \caption{ Memory-augmented Impatient Federated Averaging ($\mathtt{MIFA})$}\label{algo: our algorithm}
    \begin{minipage}{.5\textwidth}
    \begin{algorithmic}[1]
    \STATE \textbf{Input}: initial $w_1$, learning rate $\{\eta_t\}$
        \STATE  \textbf{Server executes}:
        \STATE initialize $ G^i\leftarrow 0, i \in [N]$
        \FOR{$t=1, \cdots, T-1$}
            \STATE broadcast $w_t$ to all active devices $i\in \mathcal{A}(t)$
            \FOR{each active device $i$}
            \STATE $G^i \leftarrow $ DeviceUpdate($i,\w{t}, \eta_t$)
            \ENDFOR
            \STATE $\w{t+1}\leftarrow \w{t} - \frac{\eta_t}{N}\nsum G^i$
        \ENDFOR
    \end{algorithmic}
    \end{minipage}
        \begin{minipage}{.48\textwidth}
    \begin{algorithmic}[1]
        \STATE \textbf{DeviceUpdate}($i, w_t, \eta_t$):
            \STATE  $\w{t, 0}^i\leftarrow \w{t}$
            \FOR{local step $k=0, \cdots, K-1$}
                \STATE compute stochastic gradient $\sg f_i(\w{t,k}^i)$
                \STATE $\w{t,k+1}^i \leftarrow \w{t,k}^i - \eta_t \sg f_i(\w{t,k}^i)$
            \ENDFOR
            \STATE Return $\frac{1}{\eta_t}(\w{t}-\w{t,K}^i)$ to the server
    \end{algorithmic}
    \end{minipage}
\end{algorithm}

\algoname[] efficiently progresses without waiting for inactive devices  and re-uses their latest updates as the surrogate for missing responses. Being impatient accelerates  convergence, whereas memory augmentation corrects the update bias. Our algorithm differs from asynchronous algorithms in traditional distributed optimization \citep{stich2020error,arjevani2020tight,NIPS2015_452bf208,feyzmahdavian2016asynchronous,zhang2020taming,agarwal2012distributed, glasgow2020asynchronous,basu2019qsparse} in that we utilize the \emph{noisy} updates of inactive devices \emph{more than once} to avoid biasing against stragglers. In the following part of the paper, we show that \algoname[] successfully exploits information about the descent direction contained in the stale and noisy gradients.

\textbf{Discussion on implementation.} In practice, to implement \algoname[], the server needs to maintain a huge array to store the latest update for each device, which scales with the model size and the total number of devices. 
To avoid exhausting the server's memory, one strategy is to distribute the memory consumption among devices. 
Specifically, each device, say the $i$-th, stores its previous update ${G}^{i}_{t_i'}$ computed at round $t'_i$ in its local memory. When it becomes active and computes $G^i_t$, the device sends $G^i_t -{G}^{i}_{t_i'}$ to the server, which is the difference between the current update and the previous one. In this case, the server only needs to maintain the average $\bar{G}$ in the memory and updates it by $\bar{G}_t = \bar{G}_{t-1} +\frac{1}{N}\sum_{i\in \mathcal{A}(t)} (G^i_t -{G}^{i}_{t_i'})$ at round $t$. Then the server updates the global model by $w_{t+1} = w_t - \eta_t \bar{G}_t$.

\section{Convergence Analysis for strongly convex objective functions}
\label{sec:cvx}

In this section, we present the convergence results for \algoname[] on $\mu$-strongly convex $L$-smooth functions. 
Typical examples for the strongly convex case are $\ell_2$ regularized logistic regression and linear regression problems.

In order to capture how the unavailability of devices affects algorithm performance, we introduce the following notion to quantify the dynamics of devices in our setting.

\begin{definition}[Number of inactive rounds]
We define the \textbf{number of inactive rounds} of device $i$ at round $t$ as $\tau(t,i) =t- \max\{t'\mid t'\leq t, i\in \mathcal{A}(t')\}$, which is the difference between current round $t$ and the latest round when device $i$ is active.
\end{definition}

It can be seen that $\tau(t,i)=0$ if device $i$ is active at round $t$ and $\tau(t,i)=\tau(t-1,i)+1$ otherwise. Also, $t - \tau(t,i)$ is the latest round when the device $i$ is active. 
Next, we present the assumptions made for establishing our convergence theorem.

\begin{assumption}\label{assumption:smooth}
$f_1, \cdots, f_N$ are all $L$-smooth, i.e., for all $w$ and $v$, $f_i(v)\leq f_i(w) + \dotp{\nabla f_i(w)}{v-w} + \frac{L}{2}\norm{w-v}^2$.
\end{assumption}

\begin{assumption}\label{assumption:noise}
$\sg f_i(w)$ is an unbiased estimator of $\nabla f_i$ with variance bounded by $\sigma^2$, i.e.,  $\EE[\xi]{\sg f_i(w)}=\nabla f_i(w)$, $\EE[\xi]{\norm{\sg f_i(w) - \nabla f_i(w)}^2}\leq \sigma^2$.
\end{assumption}

\begin{assumption}\label{assumption:cvx}
$f_1, \cdots, f_N$ are all $\mu$-strongly convex: for all $w$ and $v$, $f_i(v)\geq f_i(w) + \dotp{\nabla f_i(w)}{v-w} + \frac{\mu}{2}\norm{w-v}^2$.
\end{assumption}

\begin{assumption}\label{assumption:delay}
There exists a constant $t_0 > 0$, such that for all $t \ge 1$ and $i\in [N]$, the number of inactive rounds of device $i$ at communication round $t$ satisfies $\tau(t,i) \leq t_0 + \frac{1}{b} t$, where $b=40\left(L/\mu \right)^{1.5}$.
\end{assumption}

Assumptions~\ref{assumption:smooth}, \ref{assumption:noise}, and \ref{assumption:cvx} are standard and common in the FL literature, e.g., \citep{Li2020On, pmlr-v119-karimireddy20a, khaled2020tighter, yang2021achieving, stich2019local}. 
In Assumption~\ref{assumption:noise}, we relax the bounded gradient assumption that is often required in prior work, e.g., \citep{basu2019qsparse, Li2020On,xie2019asynchronous,agarwal2012distributed}. 
Lastly, \Cref{assumption:delay} is a very  mild assumption on device availability, since it allows the number of inactive rounds to grow as $\mathcal{O}(t)$. In contrast, existing results on asynchronous updates mostly assume a bounded or fixed latency, e.g., \citep{basu2019qsparse,agarwal2012distributed,aytekin2016analysis,xie2019asynchronous, stich2020error,arjevani2020tight}.

We are now ready to present our first convergence result. 
Define $D=\frac{1}{N}\nsum \norm{\nabla f_i(\wopt)}^2$ to measure data dissimilarity, where $\wopt = \arg\min f(w)$ is the global optimum. Also, 
define $\taubart$ and $\tau_{\max,T}$ to be the average and  maximum numbers of inactive rounds $\tau(t,i)$ across all devices and rounds, respectively. That is, 
\[
\taubart =\frac{1}{N(T-1)}\sum_{t=1}^{T-1}\nsum \tau(t,i), \ \  \tau_{\max,T} = \max_{i \in [N]} \max_{1\leq t\leq T-1}\tau(t,i).
\]

The following theorem summarizes the performance of $\mathtt{MIFA}$ in this case. 
\begin{theorem}\label{theorem:maintheorem}
Assume that Assumptions~\ref{assumption:smooth} to~\ref{assumption:cvx} hold. Further assume that the device availability sequence $\tau(t,i)$ satisfies \Cref{assumption:delay} and $\tau(1,i)=0$ for all $i \in [N]$. By setting the learning rate $\eta_t = \frac{4}{\mu K(t+a)}$ with $a=\max\{100, 40t_0\}(L/\mu)^{1.5}$, after $T-1$ communication rounds, \algoname[] satisfies:
\begin{align*}
  \EE[\xi]{f(\overline{w}_T)}-f(\wopt)&=\mathcal{O}\left(\frac{\taubart+1}{\mu NKT}\sigma^2 + \frac{\tau_{\max,T}^2 A_1 + (K-1)^2A_2+A_3}{\mu^2 T^2}\right),
\end{align*}

where $\overline{w}_T$ is a weighted average of $\w{t}$ defined as: 
\begin{align*}
    \overline{w}_T &= \frac{1}{W_T}\sum_{t=1}^T (t+a-1)(t+a-2) w_t, \ 
    W_T = \sum_{t=1}^T (t+a-1)(t+a-2),
\end{align*}

and
$A_1 = L (D+L \sigma^2/\mu)$, 
  $A_2 = L(D/K^2+\sigma^2/K^3)$, $A_3  = t_0^2 L^3 \norm{\w{1}-\wopt}^2$.
\end{theorem}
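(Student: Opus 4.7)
The plan is to adapt the standard weighted-averaging analysis for strongly convex SGD, with careful control of the two new error sources: the local-update drift from running $K$ local SGD steps, and the staleness bias introduced by re-using stored updates from inactive devices. Writing $\bar g^i_s := \frac{1}{K}\sum_{k=0}^{K-1}\tilde\nabla f_i(w^i_{s,k})$, the MIFA update can be rewritten as
\[
w_{t+1} - w_* = w_t - w_* - \frac{\eta_t K}{N}\sum_{i=1}^N \bar g^i_{t-\tau(t,i)}.
\]
Squaring and taking conditional expectation, I would decompose
\[
\bar g^i_{t-\tau(t,i)} = \nabla f_i(w_t) + \bigl[\nabla f_i(w_{t-\tau(t,i)}) - \nabla f_i(w_t)\bigr] + \bigl[\bar g^i_{t-\tau(t,i)} - \nabla f_i(w_{t-\tau(t,i)})\bigr],
\]
so that the cross term with $\nabla f_i(w_t)$ yields the standard descent inequality via $\mu$-strong convexity, while the remaining two terms become the staleness error and the local-drift/noise error.

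The technical core is controlling the two auxiliary quantities. For the \emph{local drift}, I would unroll the $K$ local SGD steps at round $s$ from $w_s$ to bound $\EE[]{\|w^i_{s,k} - w_s\|^2} = \mathcal{O}(\eta_s^2(k^2\|\nabla f_i(w_s)\|^2 + k\sigma^2))$ and then pass back to $w_*$ via smoothness and strong convexity, converting $\|\nabla f_i(w_s)\|^2$ into $\mathcal{O}(L(f(w_s)-f(w_*))) + \mathcal{O}(D)$ type contributions. For the \emph{staleness}, I would telescope $w_t - w_{t-\tau(t,i)} = -\sum_{r=t-\tau(t,i)}^{t-1} \frac{\eta_r}{N}\sum_j G^j_r$ and bound the expected squared norm by Cauchy--Schwarz plus the standard gradient bound, producing terms of the form $L^2 \bigl(\sum_r \eta_r\bigr) \sum_r \eta_r(\cdots)$. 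Under the schedule $\eta_t = \frac{4}{\mu K(t+a)}$ and \Cref{assumption:delay}, one has $\sum_{r=t-\tau(t,i)}^{t-1}\eta_r = \mathcal{O}\bigl(\tfrac{\tau(t,i)}{\mu K t}\bigr) = \mathcal{O}\bigl(\tfrac{1}{b\mu K}\bigr)$, and the choice $b = 40(L/\mu)^{1.5}$ is exactly what makes the staleness contribution smaller than the descent term by a constant factor, so that it can be absorbed.

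After assembly I should obtain a recursion of the schematic form
\[
\EE[]{\|w_{t+1}-w_*\|^2} \le \bigl(1 - \tfrac{\mu\eta_t K}{2}\bigr)\EE[]{\|w_t-w_*\|^2} - c\eta_t K \bigl(\EE[]{f(w_t)}-f(w_*)\bigr) + \frac{\eta_t^2 K}{N}\sigma^2 + R_t,
\]
where $R_t$ collects higher-order remainders scaling with $\tau_{\max,T}^2$, $(K-1)^2$, and $t_0^2$. With $\eta_t = \tfrac{4}{\mu K(t+a)}$, the factor $1-\mu\eta_t K/2 = 1 - 2/(t+a)$ matches the ratio $w_{t-1}/w_t$ for weights $w_t = (t+a-1)(t+a-2)$, so multiplying by $w_t$ and summing telescopes the $\|\cdot\|^2$ terms. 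Dividing by $W_T = \Theta(T^3)$ and invoking Jensen's inequality on $\bar w_T$ yields the stated rate, with the $\taubart$ factor in the dominant $\sigma^2$ term arising because each inactive round $t$ contributes $\eta_t^2/N \cdot (1 + \tau(t,i))$ after accounting for how stored gradients get re-used, and $\frac{1}{T}\sum_t \frac{1}{N}\sum_i \tau(t,i) = \taubart$.

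The hard part, and the most delicate bookkeeping, is the staleness bound $\EE[]{\|w_t - w_{t-\tau(t,i)}\|^2}$: naively, since $\tau(t,i)$ is allowed to grow as $\mathcal{O}(t)$, a direct application of smoothness would blow up. The resolution relies precisely on the calibration $b = 40(L/\mu)^{1.5}$ combined with the $\Theta(1/t)$ step size, which forces the aggregate step size over any window of length $\tau(t,i)$ to be bounded by a small constant; this is what keeps the staleness contribution compatible with the descent, and it also explains why the theorem separates a ``burn-in'' term $A_3 = t_0^2 L^3 \|w_1 - w_*\|^2$ absorbed by the early-round constant $a = \max\{100, 40 t_0\}(L/\mu)^{1.5}$ in the learning rate. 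Once this is in place, the remaining steps (local-drift bound, telescoping, Jensen) are routine adaptations of the strongly convex SGD template.
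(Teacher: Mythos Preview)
Your high-level plan matches the paper's: squared-distance expansion, split the stale aggregated gradient into the ``ideal'' $\nabla f_i(w_t)$, a staleness error, and a local-drift/noise error, then telescope with the quadratic weights $(t+a-1)(t+a-2)$. The local-drift bound and the role of $b=40(L/\mu)^{1.5}$ are also correctly identified. However, two points are glossed over and the first is a genuine gap.

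\textbf{The recursion is not one-step.} You write that after assembly one gets $\Delta_{t+1}\le (1-\tfrac{\mu\teta}{2})\Delta_t - c\teta(\EE{f(w_t)}-f(w_*)) + \tfrac{\teta^2}{KN}\sigma^2 + R_t$ with $R_t$ collecting ``higher-order remainders''. In the actual descent lemma those remainders are \emph{not} constants: they contain $\Delta_{t-\tau(t,i)}$, $\Delta_{j-\tau(j,i')}$ for $t-\tau(t,i)\le j<t$, and even square roots of such quantities (the $\mathcal{SQ}$ terms in the paper). Your plan to ``absorb'' them by noting $\tau(t,i)\teta=\mathcal{O}(1/b)$ is the right dimensional analysis for the \emph{coefficient}, but it does not let you replace $\Delta_{t-\tau}$ by $\Delta_t$. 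The paper closes this loop with an \emph{induction}: it posits an explicit envelope $B_t = \tfrac{E s_t}{(t+a)^2}+\tfrac{G}{t+a}+\tfrac{F}{(t+a)^2}$, shows under Assumption~\ref{assumption:delay} that $B_{t-m}\le (20/19)^2 B_t$ and $\teta[t-m]\le (20/19)\teta$ for all $m\le l_t$, and then verifies inductively that $\Delta_t\le B_t$. Only after this can the delayed-$\Delta$ terms be bounded by (a constant times) $B_t$ and the weighted sum telescoped. Without this step, ``multiply by $w_t$ and sum'' does not close.

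\textbf{The source of $\bar\tau_T$.} Your last paragraph says the $\bar\tau_T\sigma^2/(NKT)$ term comes from ``each inactive round contributing $\eta_t^2/N\cdot(1+\tau(t,i))$''. The mechanism is more specific: the noise $e^i_{t-\tau(t,i),k}$ is \emph{correlated} with $w_t$ (it was used in every update from round $t-\tau(t,i)$ to $t-1$), so the cross term $\langle e^i_{t-\tau,k}, w_t-w_*\rangle$ does not vanish. The paper isolates this by writing $w_t-w_* = (w_t-w^i_{t-\tau,k}) + (w^i_{t-\tau,k}-w_*)$; the second piece is uncorrelated with the noise, while the first piece, after unrolling, produces exactly $\tfrac{\sigma^2}{KN}\sum_{j=t-\tau(t,i)}^{t-1}\teta[j]$, which averages to the $\bar\tau_T$ factor. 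Your decomposition lumps the noise into the third bracket and does not separate this correlated piece, so the leading term would not fall out cleanly.
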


Our results hold under Assumption~\ref{assumption:delay}, which allows for arbitrary device availability sequences with $\tau_{\max,T} = \mathcal{O}(T)$. However, for \algoname[] to converge, we require $\tau_{\max,T} = o(T)$ and  $t_0 = o(T)$.
When $T=\Omega(\frac{NK(\tau_{\max,T}^2+t_0^2)}{\taubar + 1})$, the first term dominates and the impact of the second $\mathcal{O}(1/T^2)$ term is negligible. In fact the first term in Theorem~\ref{theorem:maintheorem} is minimax optimal by our information-theoretic lower bound for the problem in the next proposition.
\begin{proposition}
\label{thm:lower}
Let $c_0 > 0$ be a universal constant. For any potentially randomized algorithm,  there exists a stochastic strongly convex problem satisfying Assumptions~\ref{assumption:smooth} to~\ref{assumption:cvx}, such that the output $w_T$ after T rounds of communication has expected sub-optimality lower bounded by
\begin{align*}
    \mathbb{E}[f(w_T) - f(w^*)] \ge c_0 \frac{\bar{\tau}_T \sigma^2}{\mu NKT}. 
\end{align*}
\end{proposition}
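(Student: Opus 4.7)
}
The bound $\bar{\tau}_T \sigma^2/(\mu NKT)$ has exactly the shape of the classical Nemirovski--Yudin rate $\sigma^2/(\mu n)$ for stochastic strongly convex optimization, with $n$ replaced by $NKT/\bar{\tau}_T$. The plan is therefore to realize the worst-case availability pattern as exactly the one that limits an algorithm to this many useful stochastic gradient queries, and then invoke the standard minimax lower bound on a simple quadratic.

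First, I would reduce to a homogeneous instance: set $f_1 = \cdots = f_N = f$ for a common $f$. Any lower bound for this sub-family is a valid lower bound for the full class, and on such an instance every gradient oracle call at every device and every local step is a stochastic first-order query of the same function $f$. In particular, over $T-1$ rounds the algorithm can issue at most $n := K \sum_{t=1}^{T-1} |\mathcal{A}(t)|$ oracle calls, regardless of how it aggregates. Next, given any target value $\bar{\tau} \ge 1$, I would choose the availability pattern that activates all $N$ devices simultaneously at rounds $1, M+1, 2M+1, \dots$ and keeps them silent elsewhere, with period $M = \lceil 2\bar{\tau} + 1 \rceil$. A direct calculation gives $\tau(t,i) = (t-1) \bmod M$, hence $\bar{\tau}_T = (M-1)/2 = \Theta(\bar{\tau})$, while the number of active device-rounds is $|A| = NT/M$ up to rounding, so $n = K|A| = \Theta(NKT/\bar{\tau}_T)$.

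For the hard function I would take the one-dimensional family $f_v(w) = \tfrac{\mu}{2}(w-v)^2$ indexed by $v \in [-\Delta,\Delta]$, equipped with the oracle $\widetilde{\nabla} f_v(w) = \mu(w-v) + \xi$, $\xi \sim \mathcal{N}(0,\sigma^2)$. Each $f_v$ is $\mu$-strongly convex and $\mu$-smooth (hence $L$-smooth for any prescribed $L \ge \mu$), so Assumptions~\ref{assumption:smooth}--\ref{assumption:cvx} are satisfied. The classical stochastic minimax lower bound on this family — a two-point Le Cam / Van Trees argument comparing $v = \pm\delta$ with $\delta \asymp \sigma/(\mu\sqrt{n})$ — gives, for any randomized algorithm that makes at most $n$ adaptive first-order queries,
\begin{align*}
\sup_{v \in [-\Delta,\Delta]} \mathbb{E}[f_v(w_T) - f_v^*] \;\ge\; c_1 \,\frac{\sigma^2}{\mu n}.
\end{align*}
Substituting $n = NKT/M \le NKT/(2\bar{\tau}_T)$ yields the desired bound $c_0 \bar{\tau}_T \sigma^2/(\mu NKT)$.

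\paragraph{Main obstacle.} The only non-routine step is that our oracle is slightly weaker than a fully sequential adaptive oracle: within one round the $K$ local steps on a given device are adaptive, but queries across the $|\mathcal{A}(t)|$ active devices in the same round are non-adaptive to each other. This only restricts the algorithm, so any lower bound for the sequential oracle remains valid. To be fully rigorous, I would either cite a parallel-oracle version of Nemirovski--Yudin or simply redo the two-point calculation directly on the Gaussian-quadratic family above, where after $n$ observations the posterior variance of $v$ is $\Theta(\sigma^2/(\mu^2 n))$ irrespective of batching order. This verification is standard; the rest of the argument is a clean counting exercise on $|\mathcal{A}(t)|$.
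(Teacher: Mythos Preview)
Your proposal is correct and follows essentially the same approach as the paper: construct a periodic availability pattern so that only $\Theta(NKT/\bar{\tau}_T)$ stochastic gradients are ever evaluated, reduce to a homogeneous instance, and invoke the classical $\sigma^2/(\mu n)$ minimax lower bound on the one-dimensional quadratic family $f_v(w)=\tfrac{\mu}{2}(w-v)^2$. The only cosmetic differences are that the paper uses Bernoulli (rather than Gaussian) noise and spells out the two-point Pinsker calculation in full rather than citing Nemirovski--Yudin; your observation that the non-adaptivity across devices in a round only weakens the algorithm (hence preserves the lower bound) is exactly the right way to dispatch the ``main obstacle,'' and the paper handles it in the same spirit.
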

The proof is based on the observation that the number of gradient evaluation can scale inversely with $\bar{\tau}_T$ and that the oracle complexity is tight even for centralized stochastic optimization problems. The optimality of the first term in Theorem~\ref{theorem:maintheorem} is independent of the distributed or the FL setup.

The second term in Theorem~\ref{theorem:maintheorem} converges at the rate $\mathcal{O}(1/T^2)$ and consists of three parts, where the first part reflects the slowdown caused by device unavailability through $\tau_{\max,T}$, the second part shows the effect of multiple ($K>1$) local steps, and the third part tells how the initial error decreases.
\begin{remark}
When $\tau(t,i)=0$ for all $i$ and $t$, our setup reduces to FedAvg with full device participation, and we have $\taubart = 0$ and $\tau_{\max,T} = 0$. In this case, \Cref{theorem:maintheorem} yields bound $\mathcal{O}\left(\tfrac{\sigma^2}{\mu  KNT}+\frac{L(\sigma^2/K+D+L^2\norm{\w{1}-\wopt}^2)}{\mu^2 T^2}\right)$, matching the rate $\mathcal{O}\Big(\tfrac{\sigma^2\log T}{\mu  KNT}+\tfrac{L(\sigma^2/K+D)(\log T)^2}{\mu^2 T^2} + \mu \norm{\w{1}-\wopt}^2\exp(-\tfrac{\mu}{48L}T)\Big)$ in \citep{pmlr-v119-karimireddy20a} (Thm. V.  $B^2=2$,$\eta_g=1$) up to logarithmic terms. 
Besides, in the general case, our $\mathcal{O}(\tau_{\max,T}^2A_1 /T^2)$ term matches the last term in \citep{basu2019qsparse} (Cor. 5).
\end{remark}

\begin{remark}
Our analysis relies on the technical assumption that all devices respond in the first round. Intuitively, this is because we need at least one valid stochastic gradient evaluation for each device to get a complete picture of the global objective, or otherwise any update would be biased. In practice, this can be achieved by waiting for the updates from all devices on $\w{1}$ at the very beginning. 
\end{remark}

\subsection{Case Study: i.i.d. Bernoulli participation}
\label{sec:case_study}

Though our algorithm can be applied to non-stationary and non-independent response patterns, we show in this subsection that even in the simple i.i.d. Bernoulli participation scenario our algorithm can achieve considerable improvement compared to known algorithms. In particular, we consider a setup where each device becomes active independently with a fixed probability $p_i$. It serves as the first motivating example towards modeling the participation patterns of devices, and provides a clean view of how the heterogeneity of the device participation influences the Federated optimization algorithms. 

We will show that in this scenario, \Cref{assumption:delay} holds with high probability, and the terms involving the inactive rounds $\tau(t,i)$ in Theorem~\ref{theorem:maintheorem} can also be bounded. Furthermore, we theoretically demonstrate that algorithms such as \fedavg[] \citep{mcmahan2017communication} and \scaffold[] \citep{pmlr-v119-karimireddy20a}, which sample $S$ devices for each global update, are more prone to stragglers than our algorithm.
\begin{definition}
\label{def:bernoulli}
Assume that for all $i \in [N]$, the $i$-th device is assigned with a probability $p_i$. We say the participation of the devices follows \textbf{i.i.d. Bernoulli participation} model with participation probabilities $\{p_i\}$, if (1). at the first round, all devices are active, and (2). at round $t > 1$, device $i$ is active with probability $p_i$, which is independent of the history and other devices.
\end{definition}

Next theorem shows that under i.i.d. Bernoulli participation scenario, with high probability, $\tau(t,i)$ only grows logarithmically in $t$. Also \Cref{assumption:delay} holds for a mild choice of $t_0$. 
\begin{theorem}
\label{thm:bernoulli_delay}
For i.i.d. Bernoulli participation model defined in Definition~\ref{def:bernoulli}, given any $\delta > 0$, with probability at least $1-\delta$, we have the following holds for all $t \geq 1$ and $i \in [N]$ simultaneously,
\[
    \tau(t,i) \leq  \cO \Big( \frac{1}{p_i} (\log (Nt/\delta) + 1 ) \Big) .
\]
Furthermore, 
(1). \Cref{assumption:delay} holds true if $t_0 = \Omega \Big (\frac{1}{p_{min}} \log\frac{bN}{p_{min} \delta} \Big) $, where $p_{min} = \min \{ p_i \}$, and $b = 40 (L/\mu) ^{1.5} $;(2). $\tau_{\max,T}$ can be upper bounded as 
\[
    \tau_{\max,T} \leq \cO\Big(  \frac{1}{p_{min}} \cdot \big( \log (TN/\delta) + 1 \big) \Big).
\]
\end{theorem}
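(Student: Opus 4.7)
The plan is to exploit the memoryless property of Bernoulli participation: for a fixed device $i$, the event $\{\tau(t,i) \geq s\}$ requires that device $i$ was inactive for $s$ consecutive rounds immediately preceding (and including) round $t$, which by independence has probability at most $(1-p_i)^s \leq \exp(-p_i s)$. (The constraint $s \le t-1$ is automatic because $\tau(1,i)=0$ by the first-round condition.) The rest is a union bound argument.

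First I would set the threshold so that the union bound over $(i,t)$ sums are summable. A convenient choice is
\[
s_{i,t} \;=\; \frac{1}{p_i}\,\log\!\Bigl(\tfrac{\pi^2 N t^2}{6\,\delta}\Bigr),
\]
which yields $\mathbb{P}(\tau(t,i) \geq s_{i,t}) \leq \exp(-p_i s_{i,t}) = \tfrac{6\delta}{\pi^2 N t^2}$. Summing over $i\in[N]$ and $t\geq 1$ gives total failure probability at most $\delta$ because $\sum_{t\ge 1} t^{-2} = \pi^2/6$. Rewriting $\log(\pi^2 N t^2/(6\delta)) = 2\log(Nt/\delta) + \log(\pi^2/(6N))$ (absorbed into constants) yields the claimed bound $\tau(t,i) \leq \cO\bigl(\tfrac{1}{p_i}(\log(Nt/\delta)+1)\bigr)$ uniformly.

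For part (1), I would condition on the high-probability event above and seek $t_0$ such that, for some absolute constant $C$,
\[
\tfrac{C}{p_i}\bigl(\log(Nt/\delta)+1\bigr) \;\leq\; t_0 + t/b \qquad \text{for all } t\geq 1,\ i\in[N].
\]
The tightest constraint comes from $p_i=p_{\min}$. Since the left side is concave-logarithmic and $t/b$ is linear, the gap $g(t):=\tfrac{C}{p_{\min}}(\log(Nt/\delta)+1) - t/b$ is maximized at $t^\star = bC/p_{\min}$ by setting $g'(t)=0$. Plugging $t^\star$ back gives $\max_t g(t) = \tfrac{C}{p_{\min}}\bigl(\log(NbC/(p_{\min}\delta))\bigr)$ up to additive constants, so choosing $t_0 = \Omega\bigl(\tfrac{1}{p_{\min}}\log\tfrac{bN}{p_{\min}\delta}\bigr)$ is sufficient, matching the statement.

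For part (2), I just take the maximum of the uniform bound over $i\in[N]$ and $1\leq t\leq T-1$: the $\tfrac{1}{p_i}$ factor is maximized at $p_{\min}$ and $\log(Nt/\delta)$ is monotone in $t$, giving
\[
\tau_{\max,T} \;\leq\; \cO\Bigl(\tfrac{1}{p_{\min}}\bigl(\log(NT/\delta)+1\bigr)\Bigr).
\]
The main technical obstacle is choosing the threshold $s_{i,t}$ so that the double union bound closes — picking a $t$-dependent confidence level (the $t^{-2}$ tail) is the key trick; the rest is bookkeeping. The argument is essentially a standard geometric-tail calculation, so I would not expect any deep issue beyond calibrating constants so that $C$ and $t_0$ match those asserted in the theorem.
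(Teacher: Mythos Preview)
Your proposal is correct and follows essentially the same approach as the paper's proof: both use the geometric tail bound $\PP(\tau(t,i)\ge s)\le (1-p_i)^s$, apply a union bound over $i\in[N]$ and $t\ge 1$ with the summable confidence level $\delta_t \propto \delta/(Nt^2)$, and then handle parts (1) and (2) by maximizing the logarithmic bound against the linear term $t/b$ and taking the worst case over $i$ and $t$, respectively. The only cosmetic difference is that the paper writes the threshold via $1/\log(1/(1-p_i))\le 1/p_i$ whereas you use $(1-p_i)^s\le e^{-p_i s}$ directly, which is equivalent.
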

The next theorem provides a high probability upper bound for $\bar\tau_T$.
\begin{theorem}
\label{thm:bernoulli_tau_T}
For i.i.d. Bernoulli participation model defined in Definition~\ref{def:bernoulli}, given any $\delta > 0$ and $T > 1$, with probability at least $1-\delta$, we have 
\[
    \bar\tau_T \leq \Big( \frac{1}{N} \sum_{i=1}^N \frac{1}{p_i} \Big) \cdot \cO \Big(1 + \log\frac{1}{\delta} \Big).
\]
\end{theorem}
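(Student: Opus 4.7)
The overall approach is to decompose $\bar\tau_T = \frac{1}{N}\sum_{i=1}^N Y_i$ with $Y_i := \frac{1}{T-1}\sum_{t=1}^{T-1}\tau(t,i)$, and exploit the independence of the $Y_i$ across devices together with a per-device sub-exponential tail. First, under the i.i.d.\ Bernoulli model with $\tau(1,i)=0$, the marginal distribution of $\tau(t,i)$ is a truncated shifted geometric: $\PP(\tau(t,i)=j) = p_i(1-p_i)^j$ for $0 \le j \le t-2$ and $\PP(\tau(t,i)=t-1) = (1-p_i)^{t-1}$. Writing $\tau(t,i) = \min(\tilde X_{t,i}, t-1)$ for an (untruncated) shifted geometric $\tilde X_{t,i}$ immediately gives $\E[\tau(t,i)] \le (1-p_i)/p_i \le 1/p_i$, and hence $\E[\bar\tau_T] \le \frac{1}{N}\sum_i 1/p_i$.

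For each device, I use the gap representation of $Y_i$. Let $1 = t^{(i)}_0 < t^{(i)}_1 < \ldots$ be the active rounds of device $i$, with i.i.d.\ gaps $g^{(i)}_k := t^{(i)}_{k+1}-t^{(i)}_k \sim \text{Geom}(p_i)$, and let $M'_i \sim \text{Binomial}(T-2, p_i)$ be the number of completed gaps inside $[1,T-1]$. Summing $\tau$ over the rounds in a single gap of length $g$ yields $\binom{g}{2}$, so
\[
    (T-1)\,Y_i \;\le\; \sum_{k=0}^{M'_i} \binom{g^{(i)}_k}{2},
\]
where the $k = M'_i$ term upper-bounds the partial final gap. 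I then aim to show $Y_i - \E[Y_i]$ is sub-exponential with scale $\cO(1/p_i)$, via three ingredients: (a) a multiplicative Chernoff bound giving $M'_i \le 2 p_i T + \cO(x/p_i)$ with failure probability $e^{-x}$; (b) a truncation $g^{(i)}_k \le \cO(\log T / p_i)$ justified by the geometric tail applied to the $\cO(T)$ candidate gaps; and (c) Bernstein's inequality applied to the resulting bounded sum of $\cO(p_i T)$ i.i.d.\ summands, each with mean $\cO(1/p_i^2)$.

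Finally, since $Y_1,\ldots,Y_N$ are independent (the underlying participation Bernoullis are independent across $i$), a second application of Bernstein's inequality to the sum of centered sub-exponentials with scales $\cO(1/p_i)$ yields $\sum_i (Y_i - \E[Y_i]) \le \cO(\log(1/\delta)) \cdot \sum_i 1/p_i$ with probability at least $1-\delta$. Combining with the mean bound and dividing by $N$ produces $\bar\tau_T \le \bigl(\frac{1}{N}\sum_i 1/p_i\bigr)\cdot \cO(1+\log(1/\delta))$, as claimed.

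\textbf{Main obstacle.} The technical core is step (c): a single $\binom{g}{2}$ is only sub-Weibull$(1/2)$ (tail $\sim e^{-p\sqrt{z}}$) rather than sub-exponential, so its raw MGF diverges for every $\lambda > 0$. The gap truncation is therefore essential in order to convert the summands into bounded random variables before Bernstein is applicable; the logarithmic truncation losses get absorbed into the universal constant in $\cO(\cdot)$. A secondary subtlety is that $M'_i$ is a stopping time entangled with the $g^{(i)}_k$, so the Chernoff step on $M'_i$ must be carried out first to legitimately decouple the indexing prior to invoking Bernstein on the bounded i.i.d.\ summands.
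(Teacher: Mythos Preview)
Your approach is substantially more complicated than the paper's, and the extra machinery introduces a genuine gap.

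The paper's proof is essentially two lines. Since each $\tau(t,i)$ is stochastically dominated by a geometric random variable with parameter $p_i$, it is sub-exponential with $\|\tau(t,i)\|_{\psi_1} \le C_1/p_i$. The $\psi_1$-Orlicz norm satisfies the triangle inequality, so with \emph{no independence assumption whatsoever} one obtains
\[
\big\|\bar\tau_T - \E[\bar\tau_T]\big\|_{\psi_1} \;\le\; \frac{C_2}{N(T-1)}\sum_{t=1}^{T-1}\sum_{i=1}^N \|\tau(t,i)\|_{\psi_1} \;\le\; \frac{C_1 C_2}{N}\sum_{i=1}^N \frac{1}{p_i},
\]
and the one-variable sub-exponential tail bound (what the paper calls Bernstein) finishes immediately. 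No gap decomposition, no truncation, no use of independence across $t$ or across $i$.

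Your renewal representation forces you to confront the sub-Weibull$(1/2)$ tail of $\binom{g}{2}$, which you handle by truncating at $L = \mathcal{O}((\log T)/p_i)$. That truncation level feeds into Bernstein's boundedness parameter as $B = \mathcal{O}((\log T)^2/p_i^2)$, and in the linear regime the exponent is of order $s/B \asymp T p_i \lambda/(\log T)^2$ for target deviation $s = (T-1)\lambda/p_i$. For this to dominate $\lambda$ uniformly you would need $T p_i \gtrsim (\log T)^2$, which is nowhere assumed. Consequently your claim that the ``logarithmic truncation losses get absorbed into the universal constant'' is not justified: along this route the sub-exponential scale you extract for $Y_i$ is not uniform in $T$, whereas the theorem requires the constant in $\mathcal{O}(1+\log(1/\delta))$ to be absolute.

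Ironically, the intermediate assertion you work hard for---that $Y_i - \E Y_i$ is sub-exponential with scale $\mathcal{O}(1/p_i)$---has a one-line proof: $\|Y_i\|_{\psi_1} \le \frac{1}{T-1}\sum_{t}\|\tau(t,i)\|_{\psi_1} \le C_1/p_i$, again just the triangle inequality. Once you see this, the gap decomposition, the truncation, and even the final cross-device Bernstein become unnecessary for the stated bound.
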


By Theorem~\ref{thm:bernoulli_delay} and Theorem~\ref{thm:bernoulli_tau_T}, we conclude that the dominant term of our convergence bound is $ \widetilde{\cO} \Big(  \frac{1}{N} \sum_{i=1}^N \frac{1}{p_i} \cdot \frac{\sigma^2}{\mu N K T }  \Big)$. Therefore, to achieve $\epsilon$ accuracy, the dominant term of the number of the required rounds is
\begin{equation}
    T_\epsilon^{(\algoname[])} = \widetilde{\cO} \Big(  \frac{1}{N} \sum_{i=1}^N \frac{1}{p_i} \cdot \frac{\sigma^2}{\mu N K \epsilon }  \Big). \label{eq:teps:ours}
\end{equation}
For both \fedavg[] and \scaffold[] that sample $S$ devices uniformly at random, \cite{pmlr-v119-karimireddy20a} (Thm I. \& III.) showed that the dominant term of the number of global updates needed to achieve $\epsilon$ accuracy is $ R_\epsilon = \widetilde{\cO} \Big( \frac{\sigma^2}{\mu S K \epsilon }  \Big)$. Notice that in our setting, to accomplish each global update, the server needs to wait for a few rounds for the $S$ devices to respond. Let $T(\cS)$ be the expected rounds for which the server needs to wait for the selected devices $\cS$ to be active. Then the expected total rounds to achieve $\epsilon$ accuracy is $R_\epsilon \cdot \EE[\cS]{T(\cS)}$. For i.i.d. Bernoulli participation model, we have $ T(\cS) \geq \frac{1}{\min \{p_i | i \in \cS  \}}$, and we can further show that $ \EE[\cS]{T(\cS)} \geq \frac{1}{p_{min}} \frac{S}{N}$ (see Appendix~\ref{app:fedavg:expected_wait} for details). Therefore, 
\begin{equation}
    \EE {T_\epsilon^{(\texttt{FedAvg}, \texttt{SCAFFOLD})} } \geq  \frac{S}{N} \frac{1}{p_{min}} \widetilde{\cO} \Big(  \frac{\sigma^2}{\mu  S K \epsilon }  \Big)  = \widetilde{\cO} \Big(  \frac{1}{p_{min}} \cdot \frac{\sigma^2}{\mu N K \epsilon }  \Big). \label{eq:teps:fedavg}
\end{equation}
By comparing Eqn.~\ref{eq:teps:ours} and Eqn.~\ref{eq:teps:fedavg}, we see that both \fedavg[] and \scaffold[] are more vulnerable to stragglers, that is, the devices with very small participation probabilities; on the contrary, the convergence rate of \algoname[] only depends on the average of $1/p_i$ instead of $1/p_{min}$. We also provide empirical experiments showing that \algoname[] converges faster than FedAvg in Section~\ref{sec:exp}.

\section{Convergence result for non-convex objective functions}
\label{sec:non_cvx}
In this section, we present the convergence guarantee of $\mathtt{MIFA}$ for the non-convex case. First we list the additional assumptions as below.
\begin{assumption}[Hessian Lipschitz]\label{assumption: hessian} $f_1, \cdots, f_N$ are all $\rho$-Hessian Lipschitz: for all $w$ and $v$,
$\norm{ \nabla^2 f_i(w)-\nabla^2 f_i(v)}\leq \rho \norm{w-v}$. 
\end{assumption}
\begin{assumption}[Bounded noise]\label{assumption:bounded noise} The noise of the local stochastic gradients is upper bounded by a constant $\delta$ almost surely:  
    $\norm{\sg f_i(w) - \nabla f_i(w)}\leq \delta$ a.s., $\forall\  i\in [N]$.
\end{assumption}
\begin{assumption}[Bounded gradient dissimilarity]\label{assumption: bounded dissimilarity for noncvx}
There exist $\alpha > 0$ and $\beta_i>0$ such that for all $w$ and $i \in [N]$: $\norm{\nabla f_i(w) }^2 \leq \alpha \norm{\nabla f(w)}^2 + \beta_i$. Furthermore, we define $\beta=\frac{1}{N}\nsum \beta_i$.
\end{assumption}
\begin{assumption}\label{assumption:constant delay}
There exists a constant $\nu_i$ such that $\tau(t,i)\leq \nu_i$, for all $i \in [N]$ and $t\geq 1$. Furthermore, define $\bar{\nu}=\frac{1}{N}\nsum \nu_i$ and $\nu_{\max} = \max_{i\in [N]} \nu_i$.
\end{assumption}

The analysis of non-convex functions is much more technically involved, and our results rely on strong assumptions that provide a finer control of the gradient difference (\Cref{assumption: hessian}), gradient noise (\Cref{assumption:bounded noise}), gradient dissimilarity among devices (\Cref{assumption: bounded dissimilarity for noncvx}), and device unavailability (\Cref{assumption:constant delay}). We remark that \Cref{assumption: hessian} is also made in \citep{carmon2018accelerated, jin2017escape}, and  \Cref{assumption: bounded dissimilarity for noncvx} is also made in \citep{pmlr-v119-karimireddy20a, wang2020tackling}. We leave it as future work to study whether and how \algoname[] converges for non-convex functions with weaker assumptions.

\begin{theorem}\label{theorem:noncvx}
Assume that Assumptions \ref{assumption:smooth}, \ref{assumption:noise},  and \ref{assumption: hessian} to \ref{assumption: bounded dissimilarity for noncvx} hold.   Further assume that the device availability sequence $\tau(t,i)$ satisfies 
 \Cref{assumption:constant delay} and $\tau(1,i)=0$ for all $i \in [N]$. By using a learning rate $\eta = \sqrt{\frac{N}{KTL(1+\bar{\nu})}}$, for $T\geq \max \{32\alpha LNK, 16L N K, \frac{8 KN\nu_{\max}^2(L^2+\rho \delta)}{L}\}$, after $T-1$ communication rounds, \algoname[] satisfies:
\begin{align*}
    \min_{1\leq t \leq T} \EE[\xi]{\norm{\nabla f({w}_t)}^2} = \mathcal{O}\left(\sqrt{\frac{(1+\bar{\nu})L}{TKN}}(f(\w{1}) - f^*+\sigma^2)+\frac{A_4+A_5}{T}\right),
\end{align*}
where $f^*$ is the optimal value, and:
\begin{align*}
    A_4 &= N K L \left(\alpha \sigma^2\bar{\nu}+\frac{\sigma^2\nu_{\max}}{\sqrt{KN}}+\sigma\nu_{\max}\sqrt{\beta}\right) +\frac{(L^2+ \rho\delta) \sigma^2 \nu_{\max}}{L},\\
    A_5 &= \frac{(K-1)N L(\beta  +\sigma^2/K)}{\bar{\nu}+1}.
\end{align*}
\end{theorem}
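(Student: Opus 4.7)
}

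The plan is to adapt the standard descent-lemma approach for non-convex SGD to the memory-augmented setting, taking care of three distinct sources of error: (i) stochastic gradient noise, (ii) staleness from reusing updates computed at round $t-\tau(t,i)$, and (iii) local drift accumulated over the $K$ inner SGD steps. First, I would rewrite the global update as
\begin{equation*}
w_{t+1} = w_t - \frac{\eta}{N}\sum_{i=1}^N G^i_t, \qquad G^i_t = \sum_{k=0}^{K-1}\sg f_i\bigl(w^i_{t-\tau(t,i),k}\bigr),
\end{equation*}
so that the increment is an average over devices of sums of $K$ stochastic gradients evaluated at stale local iterates. Starting from $L$-smoothness, I would write the one-step descent inequality
\begin{equation*}
\EE{f(w_{t+1})} \leq f(w_t) - \eta K \bigl\langle \nabla f(w_t),\, \EE{V_t}\bigr\rangle + \tfrac{L\eta^2}{2}\EE{\|V_t\|^2},
\end{equation*}
where $V_t := \frac{1}{NK}\sum_i G^i_t$, and then I would split $\|V_t\|^2$ into its variance and the squared norm of its conditional mean.

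Next I would control the bias $\EE{V_t} - \nabla f(w_t)$ by a two-level Taylor expansion. For every device $i$ and local step $k$, I would write
$\nabla f_i(w^i_{t-\tau(t,i),k}) = \nabla f_i(w_t) + \nabla^2 f_i(w_t)(w^i_{t-\tau(t,i),k}-w_t) + R_{i,k,t}$,
where the remainder $R_{i,k,t}$ is $O(\rho\|w^i_{t-\tau(t,i),k}-w_t\|^2)$ by Hessian Lipschitzness (\Cref{assumption: hessian}). The purpose of Hessian Lipschitzness here is precisely to avoid paying $L$ times the displacement, which would otherwise give a $\nu_{\max}^2$ dependence in the dominant term; the linear Hessian term has mean that can be controlled via the noise bound $\delta$ of \Cref{assumption:bounded noise}, and the quadratic remainder contributes only to the lower-order $A_4/T$ term. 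The displacements themselves I would bound in two layers: the local drift $\|w^i_{t',k}-w_{t'}\|$ by $\eta k$ times a gradient-plus-noise quantity (using \Cref{assumption: bounded dissimilarity for noncvx} to convert local gradient norms to $\alpha\|\nabla f\|^2+\beta$), and the staleness drift $\|w_t - w_{t'}\|$ by summing the global increments over at most $\nu_{\max}$ rounds.

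For the variance $\EE{\|V_t-\EE V_t\|^2}$, I would exploit the conditional independence of stochastic gradients across devices and across local steps: each $\sg f_i - \nabla f_i$ has variance at most $\sigma^2$, so the double average across $N$ devices and $K$ steps yields a $\sigma^2/(NK)$ contribution from the current-round active devices, while stale contributions inherit the noise of their original computation round. It is crucial here that noise across rounds is independent so that the $1/N$ averaging is preserved; this is exactly where the \emph{average} number of inactive rounds $\bar\nu$ (rather than $\nu_{\max}$) enters the dominant term, via a telescoping/Lyapunov bookkeeping over the time-series of memorized updates.

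I would then sum the descent inequality over $t=1,\dots,T-1$, absorb cross terms using Young's inequality with coefficients chosen so that the $\|\nabla f(w_t)\|^2$ terms appearing on the right (through the $\alpha\|\nabla f\|^2$ substitution and through $\|V_t\|^2$) are dominated by a fraction of the progress term, which requires $\eta L\alpha KN\lesssim 1$ and matches the stated condition $T \geq 32\alpha LNK$. Telescoping with $f^* \leq f(w_T)$ gives $\tfrac{1}{T}\sum_t \EE\|\nabla f(w_t)\|^2 \leq \tfrac{f(w_1)-f^*}{\eta KT} + C_1 \eta L \sigma^2/(NK)\cdot(\bar\nu+1) + (\text{lower order})/(\eta T)$, and plugging in $\eta = \sqrt{N/(KTL(1+\bar\nu))}$ balances the first two terms to produce the stated $\sqrt{(1+\bar\nu)L/(TKN)}$ rate, with the residual errors collected into $A_4$ (depending on $\nu_{\max}$, $\rho$, $\delta$, $\beta$) and $A_5$ (the purely local-drift contribution scaled by $(K-1)/(\bar\nu+1)$). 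The main technical obstacle, in my view, is bookkeeping the staleness term: the memorized gradient $G^i_t$ was computed at iterate $w_{t-\tau(t,i)}$, and $w_t - w_{t-\tau(t,i)}$ itself depends on memorized gradients from even earlier, creating a long-range dependency that must be unrolled carefully using $\nu_{\max}$ as a uniform truncation while keeping the averaging over $\bar\nu$ in the leading noise term.
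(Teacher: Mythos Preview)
Your overall structure matches the paper's: a one-step descent inequality from $L$-smoothness, control of drift and staleness, telescoping, and balancing $\eta$. However, the role of Hessian Lipschitzness is misplaced in a way that would cause the argument to break.

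The central difficulty in \algoname[] is that the memorized noise $e^i_{t-\tau(t,i),k} := \sg f_i(w^i_{t-\tau(t,i),k}) - \nabla f_i(w^i_{t-\tau(t,i),k})$ is \emph{not} independent of $w_t$: the very same noise realization was already used to produce $w_{t-\tau(t,i)+1},\ldots,w_t$. Hence in the cross term $-\eta K\,\mathbb{E}\langle \nabla f(w_t), V_t\rangle$ the noise contribution does \emph{not} vanish, and writing that term as $\langle \nabla f(w_t),\mathbb{E}V_t\rangle$ is not justified. The paper isolates exactly this piece as $\mathcal{T}_1 = -\teta[]\,\langle \nabla f(w_t), e^i_{t-\tau(t,i),k}\rangle$, and \emph{this} is where Hessian Lipschitzness and the almost-sure noise bound $\delta$ are used: one writes $\nabla f(w_t) = \nabla f(w_{t-\tau(t,i)}) + [\nabla f(w_t)-\nabla f(w_{t-\tau(t,i)})]$, observes that the first piece is independent of the noise, and applies the identity $\langle \nabla f(w)-\nabla f(v), z\rangle \le \langle \nabla^2 f(v)(w-v), z\rangle + \tfrac{\rho}{2}\|z\|\|w-v\|^2$ with $z = -e^i$ (Lemma~\ref{lemma: hessian lip}). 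The remainder is then controlled via $\|e^i\|\le\delta$, and the Hessian term is unrolled through the update rule. By contrast, the true-gradient bias $\nabla f_i(w^i_{t-\tau,k})-\nabla f_i(w_t)$ (the paper's $\mathcal{T}_2$) is handled with the polarization identity $\langle x,y\rangle = \tfrac12\|x\|^2+\tfrac12\|y\|^2-\tfrac12\|x-y\|^2$ and plain $L$-smoothness; no second-order Taylor expansion is needed there. Your proposal applies the Taylor/Hessian step to the true-gradient part and treats the noise as if it averaged out in the cross term, which is precisely backwards.

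A second point you omit: after telescoping, the noise--iterate correlation produces terms of the form $\sigma L\teta[]^2\tau_t\sqrt{\mathbb{E}\|\nabla f(w_t)\|^2}$ (from Cauchy--Schwarz), leading to a quadratic inequality $\Omega_T \le H_5\sqrt{\Omega_T}+H_4$ in $\Omega_T=\sum_t\mathbb{E}\|\nabla f(w_t)\|^2$. Solving this quadratic is what generates the mixed $\sigma\nu_{\max}\sqrt{\beta}$ and $\sigma^2\nu_{\max}/\sqrt{KN}$ contributions inside $A_4$; a direct Young's-inequality absorption as you suggest would not reproduce that structure.
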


Next, we show that the leading $\mathcal{O}(1/\sqrt{T})$ term is theoretically optimal for zero-respecting algorithms.

\begin{proposition}
\label{thm:lower-noncvx}
Let $c_0 > 0$ be a universal constant. For any randomized zero-respecting algorithm,  there exists a stochastic non-convex problem satisfying Assumption\ref{assumption:smooth}, \ref{assumption:noise},  \ref{assumption: hessian} and \ref{assumption: bounded dissimilarity for noncvx}, such that the output $w_T$ after T rounds of communication has expected sub-optimality lower bounded by
\begin{align*}
    \mathbb{E}[\|\nabla f(w_T) \|^2]  \ge \mathbb{E}[\|\nabla f(w_T) \|]^2 \ge c_0 \sqrt{\frac{\bar{\nu} L \sigma^2(f(w_0) - f^*)}{NKT}}. 
\end{align*}
\end{proposition}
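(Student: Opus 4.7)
\textbf{Proof proposal for Proposition \ref{thm:lower-noncvx}.} The plan is to reduce the federated problem to a standard sequential stochastic non-convex optimization problem and invoke the zero-respecting lower bound of Arjevani et al.\ (2019). The construction: take $f_i = f$ for every $i \in [N]$, where $f$ together with its stochastic gradient oracle is the ``hard instance'' used in the proof of that lower bound (a smoothed nonconvex zero-chain with a flipped-coordinate stochastic oracle). This $f$ is $L$-smooth, $\rho$-Hessian Lipschitz, has variance bounded by $\sigma^2$ and almost-sure noise bounded by some $\delta$, so Assumptions \ref{assumption:smooth}, \ref{assumption:noise}, and \ref{assumption: hessian} are satisfied. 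Because all $f_i$ are identical, Assumption \ref{assumption: bounded dissimilarity for noncvx} is trivially satisfied with $\alpha = 1$, $\beta_i = 0$, and $\nabla f = \nabla f_i$.

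Next I would design the availability pattern so that Assumption \ref{assumption:constant delay} holds with average inactivity exactly $\bar\nu$ and, crucially, so that the total number of stochastic gradient evaluations used by the algorithm is tightly controlled. Assuming $(\bar\nu+1)\mid N$ for simplicity, partition the $N$ devices into $\bar\nu+1$ equal groups $G_0,\dots,G_{\bar\nu}$ and declare group $G_{t \bmod (\bar\nu+1)}$ to be the set of active devices at round $t$. Then every device has $\tau(t,i)\le \bar\nu$, so $\nu_i = \bar\nu$ and $\bar\nu = \tfrac{1}{N}\sum_i \nu_i$ matches the bound in the proposition. Each round has exactly $N/(\bar\nu+1)$ active devices, each of which makes $K$ local stochastic gradient queries, so the total number of stochastic gradient oracle calls made by the algorithm after $T$ rounds is at most
\[
q := \frac{NKT}{\bar\nu+1}.
\]

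With the construction in place, the reduction is immediate: since every local oracle call is a query to the same distribution (the stochastic gradient oracle for $f$), any zero-respecting federated algorithm can be simulated by a sequential zero-respecting algorithm that makes the $q$ oracle calls in the order they would be issued by the devices and then outputs $w_T$. Applying the zero-respecting lower bound of Arjevani et al.\ (2019), there is a universal constant $c>0$ such that
\[
\mathbb{E}\,\|\nabla f(w_T)\| \;\ge\; c\,\bigl(L \sigma^2 (f(w_0)-f^*)/q\bigr)^{1/4}.
\]
Substituting $q = NKT/(\bar\nu+1)$, squaring, and using $\bar\nu+1 \ge \bar\nu$ yields
\[
\bigl(\mathbb{E}\,\|\nabla f(w_T)\|\bigr)^2 \;\ge\; c_0 \sqrt{\frac{\bar\nu L \sigma^2 (f(w_0)-f^*)}{NKT}},
\]
and $\mathbb{E}\|\nabla f(w_T)\|^2 \ge (\mathbb{E}\|\nabla f(w_T)\|)^2$ by Jensen's inequality, completing both inequalities in the proposition.

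The main obstacle will be carefully verifying the ``parallel to sequential'' step of the reduction for zero-respecting algorithms: in each federated round, multiple active devices query the stochastic oracle in parallel using the broadcast iterate $w_t$, and the server then aggregates their responses. One must argue that the zero-respecting property is preserved when these parallel queries are serialized, i.e., that the algorithm's information set after round $t$ is contained in what a sequential algorithm would obtain after $q$ queries. This mirrors the corresponding reduction in Arjevani et al.\ and can be handled by treating each round's simultaneous queries as a single batched query to the oracle, but it is the only subtlety that requires genuine care; the remaining steps are parameter bookkeeping.
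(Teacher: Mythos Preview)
Your proposal is correct and follows essentially the same approach as the paper: reduce to the centralized setting by choosing all $f_i$ equal to the hard instance, design an availability pattern so that the total number of stochastic gradient queries over $T$ rounds is $\Theta(NKT/\bar\nu)$, and then invoke the zero-respecting lower bound of Arjevani et al.\ (2019). In fact your write-up is more explicit than the paper's, which simply states that the argument mirrors the strongly convex case with the Arjevani et al.\ bound substituted for the convex one.
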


The above proposition show that when $\sigma \sqrt{(f(w_0) - f^*)} \sim \sigma^2 + (f(w_0) - f^*)
$, the result in Theorem~\ref{theorem:noncvx} is tight. However, note that the counter example we used requires the quantity $\delta$ in Assumption~\ref{assumption:bounded noise} to scale with $T$, hence requiring $\delta$ to be large enough. This does not change the optimality of the first term as the first term is independent of $\delta$. Whether this requirement can be relaxed is left as an open problem.

\begin{remark}
When all $\nu_i=0$ (i.e. all the devices are active), our convergence bound reduces to  
$
\textstyle \mathcal{O}\Big(\sqrt{\frac{L}{TKN}}(f(\w{1}) - f^*+\sigma^2)+\frac{ (K-1)N L(\beta  +\sigma^2/K)  }{T}\Big)
$. This matches the result in \citep{yang2021achieving} (Thm.~1, $\eta=1, \eta_L = \sqrt{\frac{N}{KTL}}$).
\end{remark}

\section{Numerical Experiments}
\label{sec:exp}

In this section, we conduct numerical experiments to verify our theoretical results and investigate how the heterogeneity of the device availability influences the Federated optimization algorithms. We compare the performance of the following four algorithms: \fedavg[] with importance sampling (\texttt{FedAvg}-IS), Biased \fedavg[], \fedavg[] with device sampling, and our proposed \algoname[]. For the detailed discussions of the algorithms, we refer the readers to Sections~\ref{sec:setup} and~\ref{sec:alg}. We remark that for a fair comparison, we deliberately include the first few rounds that \algoname[] needs to wait to receive responses from all devices for initializing the update-array $\{G^i\}$.

Following \cite{Li2020On, MLSYS2020_38af8613}, we construct non-i.i.d. datasets from two commonly used computer vision datasets --- MNIST \citep{lecun1998gradient} and CIFAR-10 \citep{krizhevsky2009learning} . Specifically, we divide the data into $N=100$ devices with each device holding samples of only two classes, which creates a high level of data heterogeneity. For simplicity, we ensure that each device holds the same number of samples. We do not use any data augmentation.
We use multinomial logistic regression as the convex model and Lenet-5 \citep{lecun2015lenet} with ReLU activations as the non-convex model. For all experiments, we use weight decay in the training process, which corresponds to adding $\ell_2$ penalty. We use logistic models for MNIST dataset, while we use Lenet-5 for CIFAR-10. Our code is adapted from \cite{Li2020On}, which is under MIT License.

We model the availability of the devices as independent Bernoulli random trials. The $i$-th device is assigned with a probability $p_i$, where at each time step, the device becomes active with probability $p_i$. In our experiments, the $p_i$'s are chosen such that devices holding data of smaller labels participate less frequently. Specifically, if the $i$-th device holds the data of label $j$ and $k$, we set $p_i = p_{\min} \min(j,k) / 9  + (1-p_{\min})$, where $p_{\min}$ controls the lower bound of the participation probabilities. The correlation between the participation patterns and local datasets increases the difficulty of the problem \cite{kairouz2019advances}.  To investigate this phenomenon, we repeat the experiments for $p_{\min}=0.1$ and $0.2$. We control the randomness of device participation when testing different algorithms.

In all the experiments, we set the initial learning rate to be $\eta_0 = 0.1$ and decay the learning rate as $\eta_t =\eta_0 \cdot \frac{1 }{t}$. We set the weight delay to be $0.001$. The local batch size is $100$ and each local update consists of $2$ epochs. Therefore, the actual number of local steps $K$ depends on the size of the dataset. We run all the experiments with 4 GPUs of type GeForce RTX 2080 Ti. We repeat the experiments for 5 different random seeds, and all of the experiments exhibit similar training curves. We report the averaged training loss and test accuracy with error bars in Figure~\ref{fig:exp1}.

We observe that \fedavg[] with device sampling (FedAvg ($S=50$) and FedAvg ($S=100$) in Figure~\ref{fig:exp1}) is severely influenced by the straggling devices and makes progress relatively slowly compared to the other algorithms. Although biased \fedavg[] converges fast at the beginning, this simple algorithm is biased, and the optimality gaps are prominent for the harder CIFAR-10 dataset and when $p_{\min}$ is small. On the contrary, our proposed \algoname[] avoids waiting for stragglers, converges fast without bias, and is competitive with \fedavg[] with importance sampling,  which requires knowledge of the participation probabilities.

\begin{figure}[htb!]
\vspace{-0.15in}
\begin{center}
    \subfigure[ $p_{\min}=0.1$]{
        \includegraphics[width=0.245\textwidth]{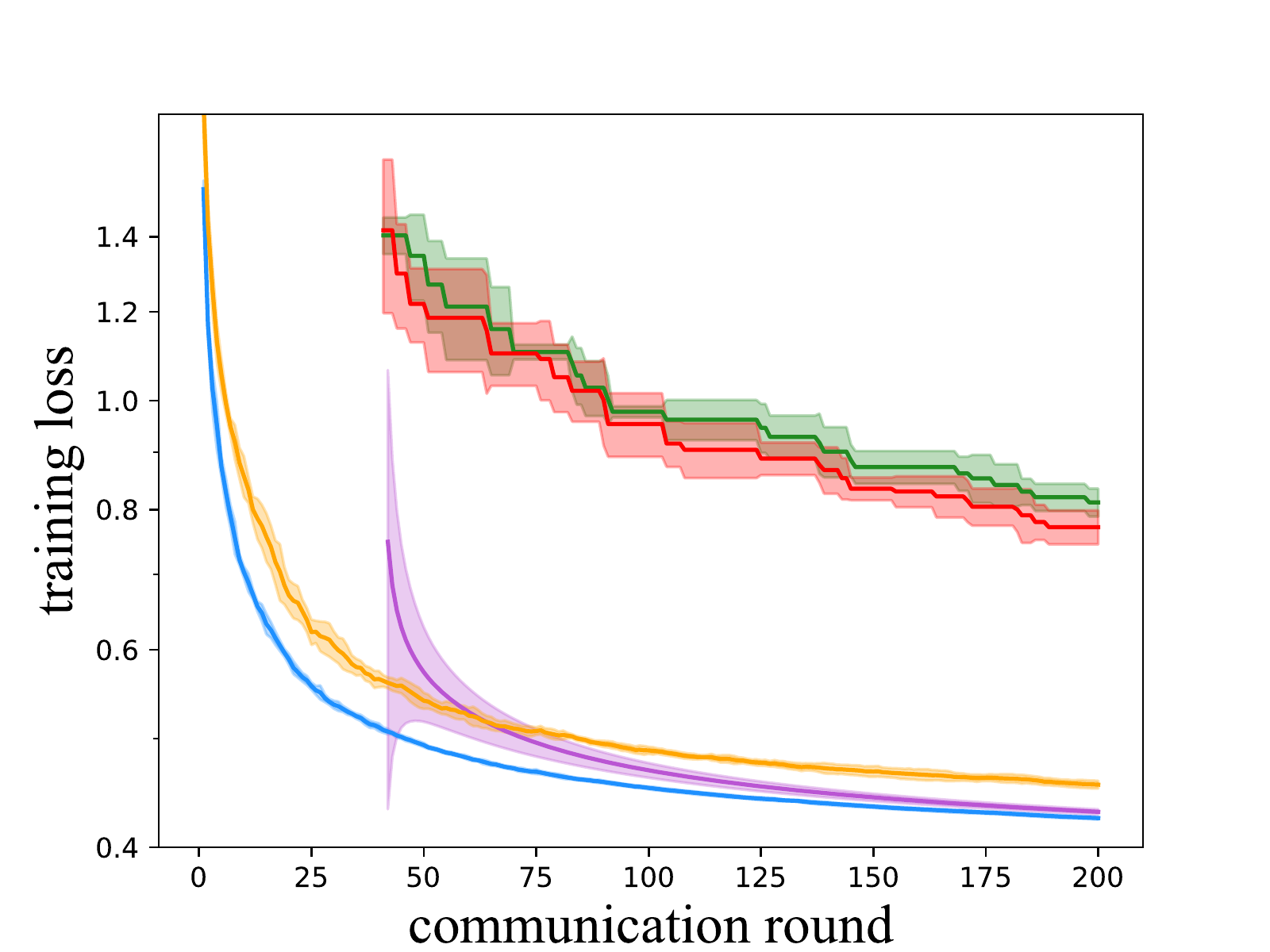}\label{fig:a1}
        \hspace{-0.13in}
    }
    \subfigure[ $p_{\min}=0.1$ ]{
        \includegraphics[width=0.245\textwidth]{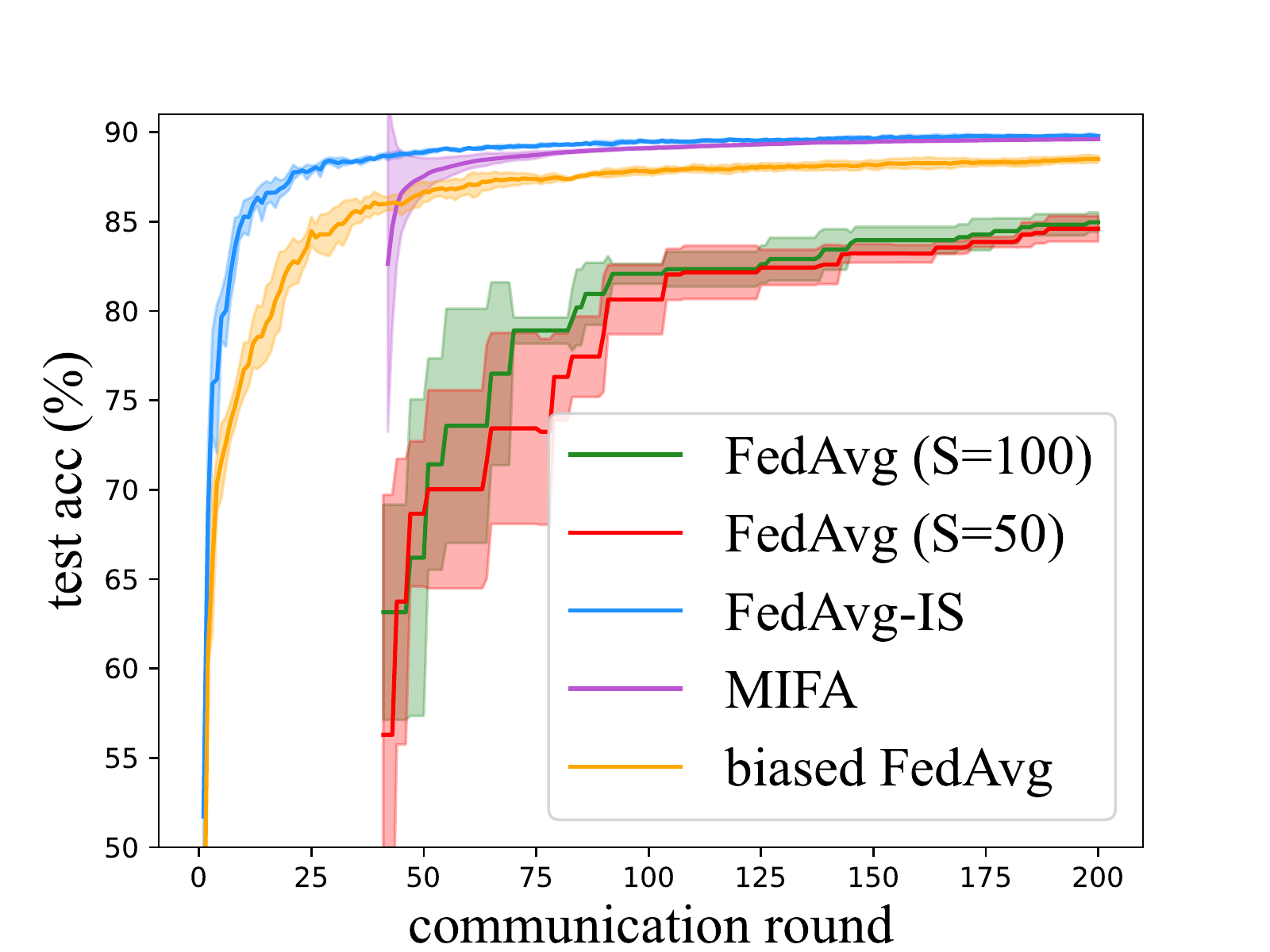}\label{fig:a2}
        \hspace{-0.13in}
    }
    \subfigure[$p_{\min}=0.2$]{
        \includegraphics[width=0.245\textwidth]{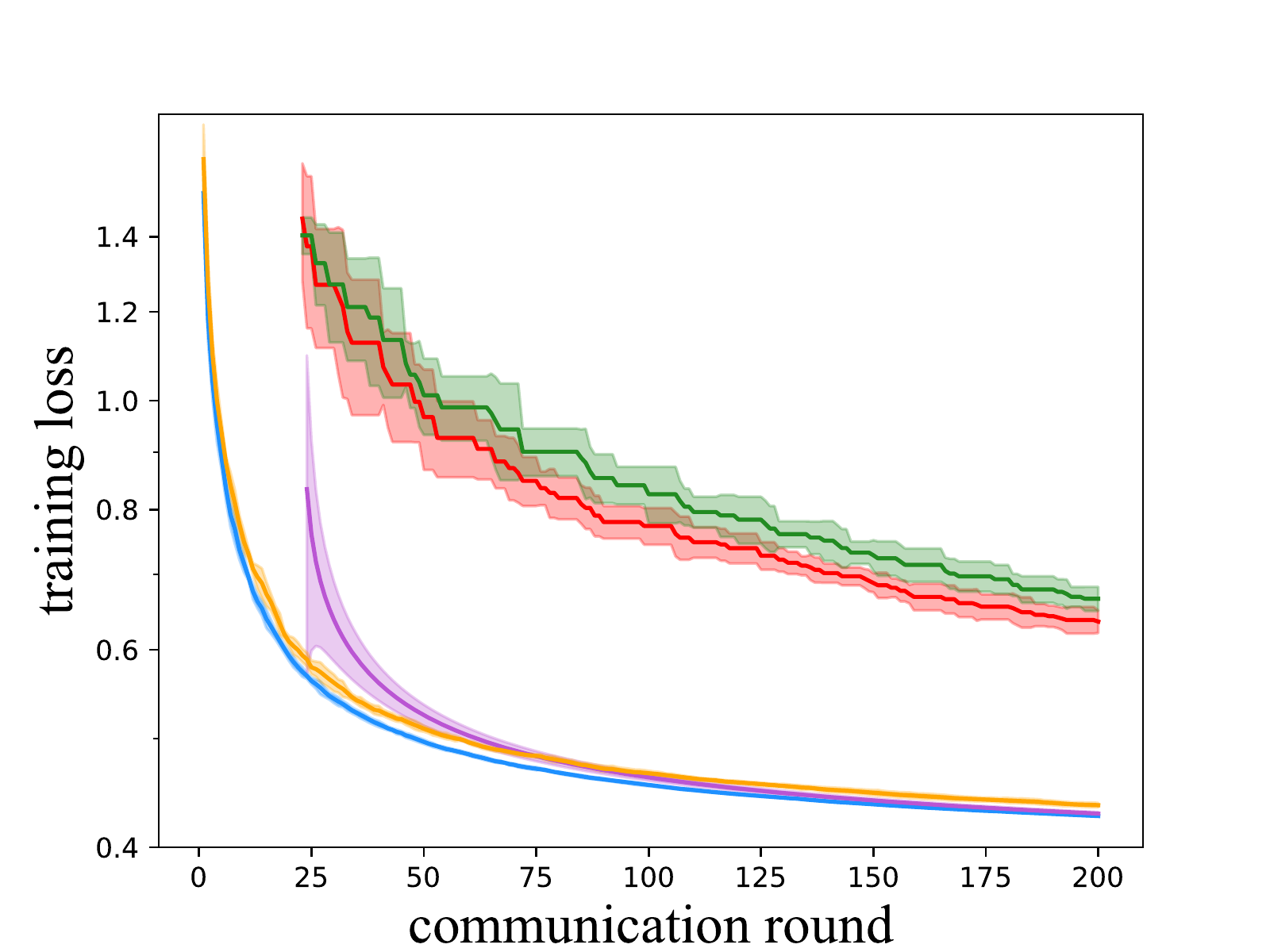}\label{fig:a3}
        \hspace{-0.13in}
    }
    \subfigure[ $p_{\min}=0.2$]{
        \includegraphics[width=0.245\textwidth]{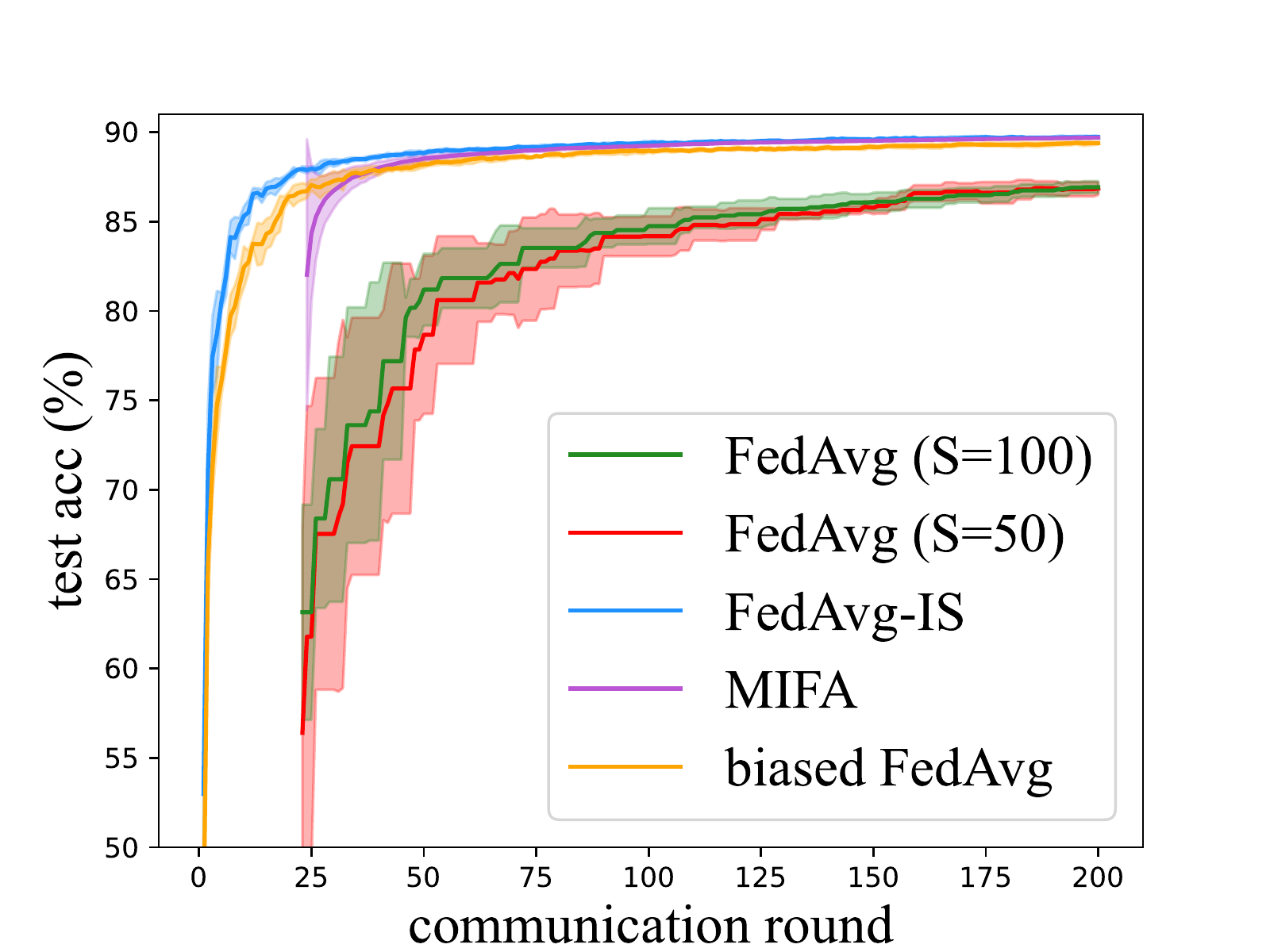}\label{fig:a4}
    }
     \subfigure[ $p_{\min}=0.1$]{
        \includegraphics[width=0.245\textwidth]{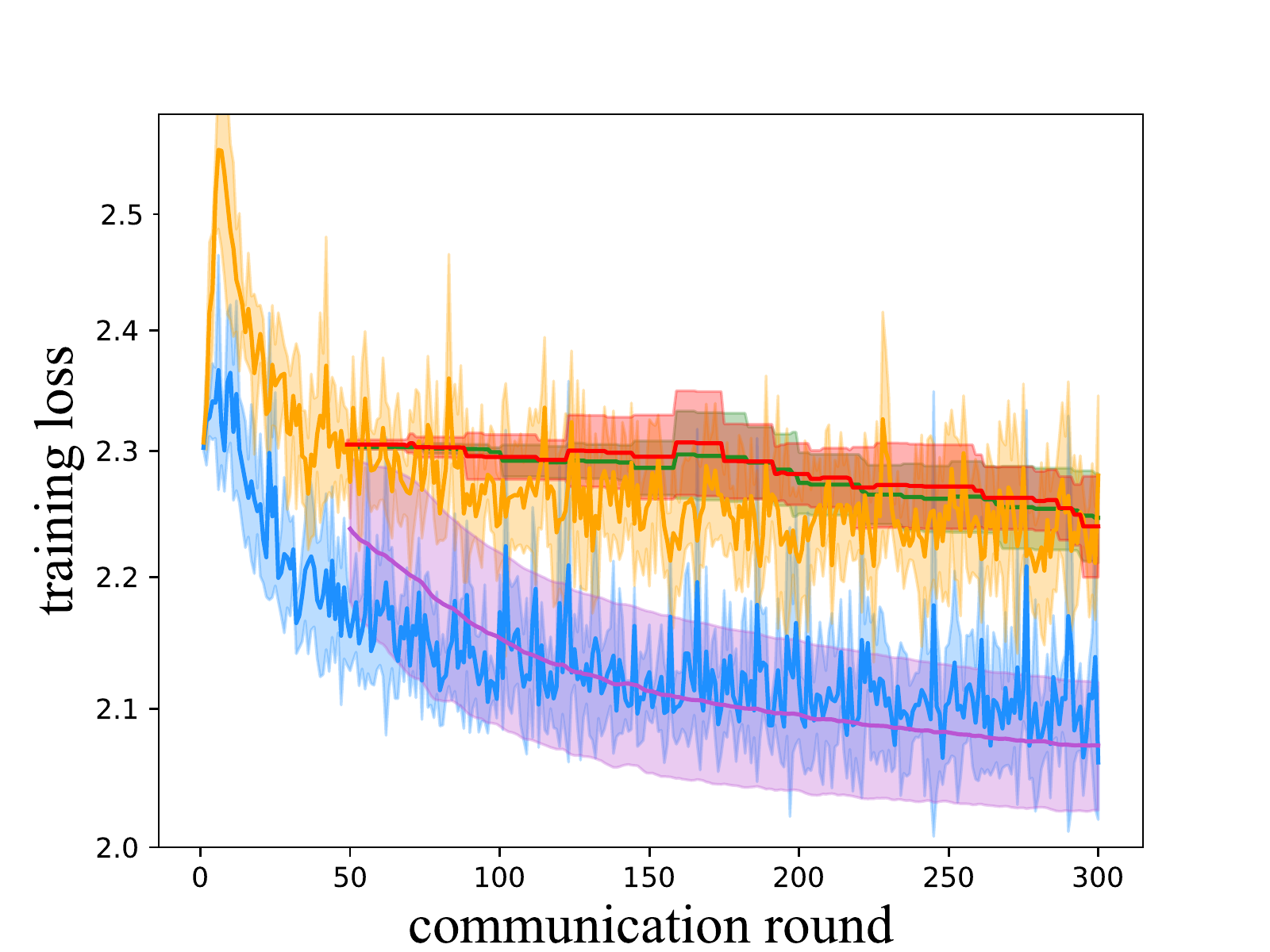}\label{fig:b1}
        \hspace{-0.13in}
    }
    \subfigure[ $p_{\min}=0.1$ ]{
        \includegraphics[width=0.245\textwidth]{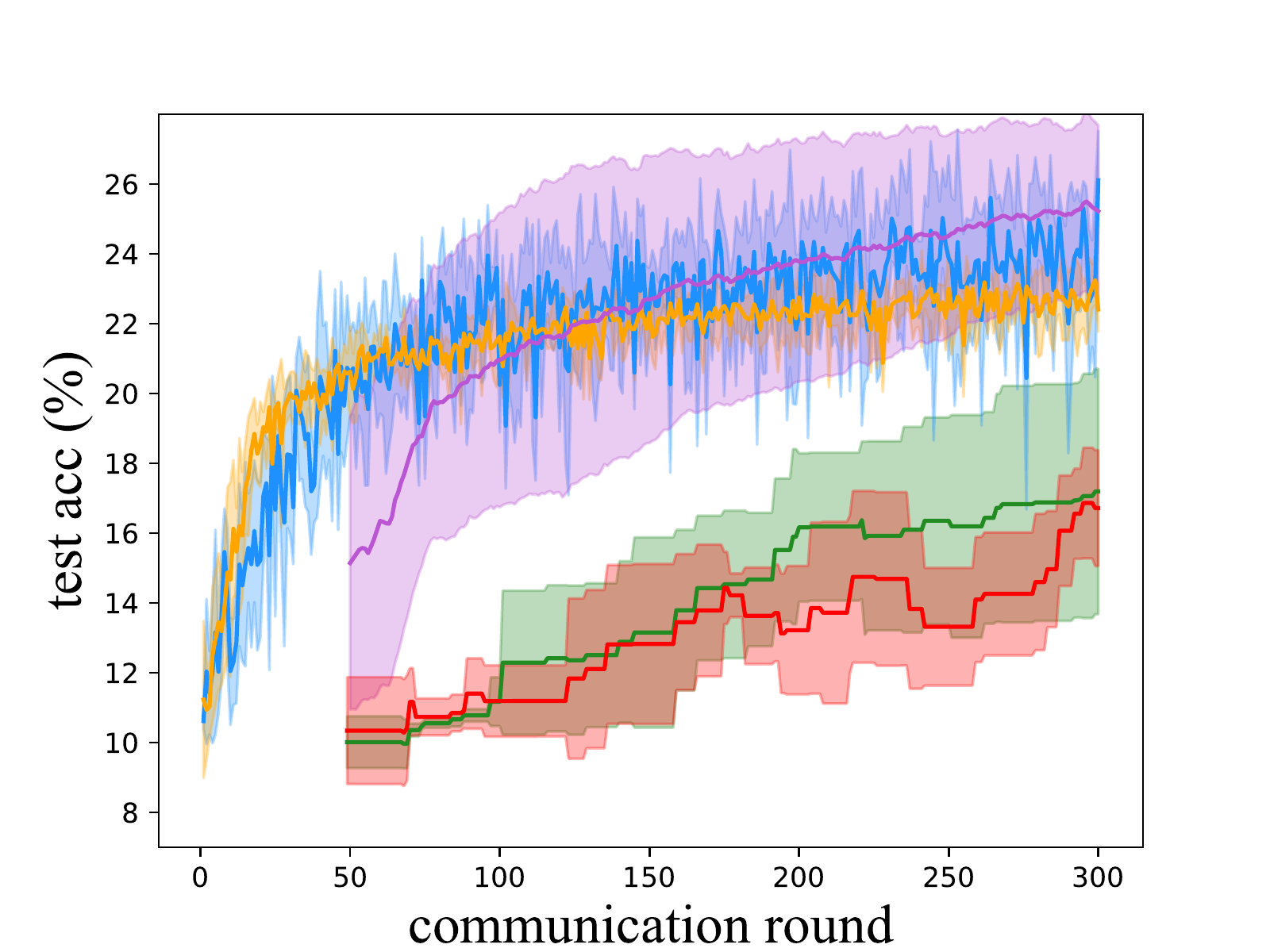}\label{fig:b2}
        \hspace{-0.13in}
    }
    \subfigure[$p_{\min}=0.2$]{
        \includegraphics[width=0.245\textwidth]{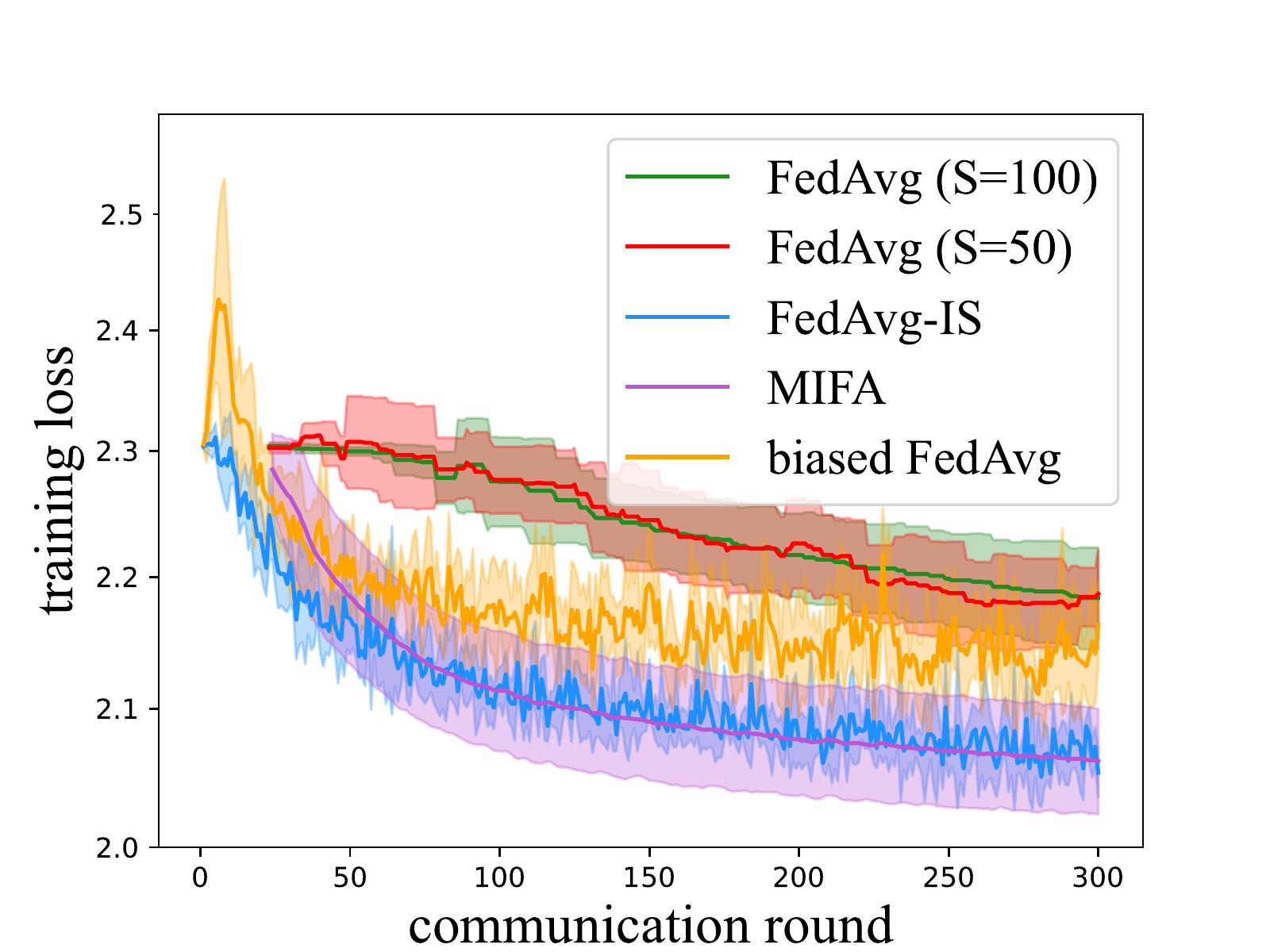}\label{fig:b3}
        \hspace{-0.13in}
    }
    \subfigure[ $p_{\min}=0.2$]{
        \includegraphics[width=0.245\textwidth]{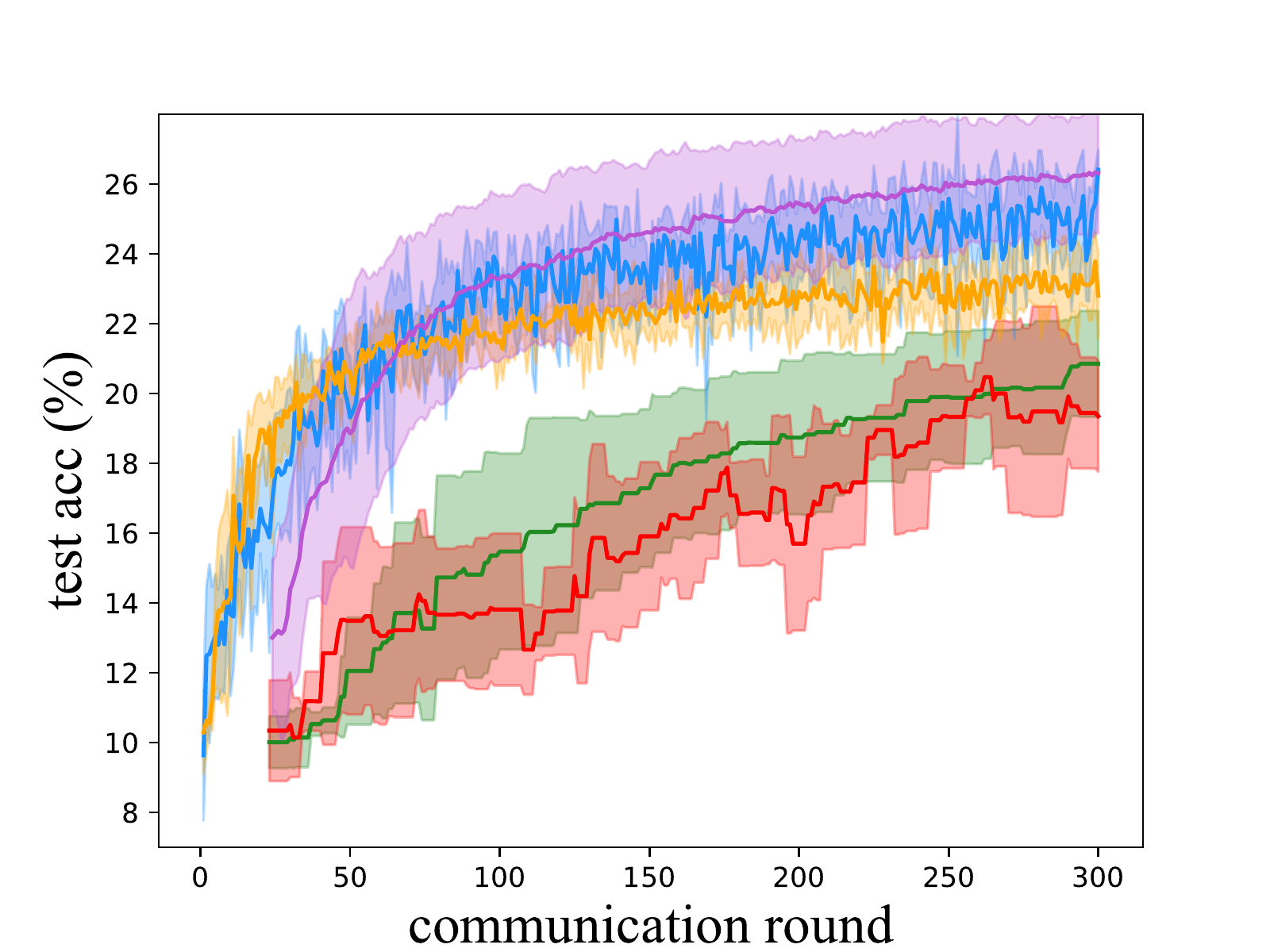}\label{fig:b4}
        \vspace{-0.125in}
    }
    \vspace{-0.1in}
    \caption{Training losses and test accuracies. Fig.~\ref{fig:a1}--\ref{fig:a4}: logistic models on non-iid MNIST. Fig.~\ref{fig:b1}--\ref{fig:b4}: Lenet-5 on non-iid CIFAR-10. FedAvg ($S=50$) and FedAvg ($S=100$) refer to FedAvg with device sampling that samples $S$ devices for each global update. FedAvg-IS is short for FedAvg with importance sampling, which requires knowledge of the participation probabilities.}
        \label{fig:exp1}
    \vspace{-0.15in}
\end{center}
\end{figure}

\section{Conclusions and Discussions}
\label{sec:conclu}

In this paper, we study FL algorithms in the presence of arbitrary device unavailability and propose \algoname[], which avoids waiting for straggling devices and re-uses the memorized latest updates as the surrogate when the device is unavailable. We theoretically analyze \algoname[] without any structural assumptions on the device availability and prove the convergence for strongly convex and non-convex smooth functions.
Different from the literature that studies oracle complexity in terms of stochastic gradient evaluations, we argue that in federated learning system, the bottleneck lies in the non-stationary and possibly adversarial pattern of device participation. Therefore, it is important to study how the number of inactive rounds influences the convergence rate. In Theorem~\ref{theorem:maintheorem}, the dependency upon $\tau_{\max, T}$ might be an artifact of our analysis, and a future direction is to study whether we can remove this dependency. Another important direction is to analyze algorithms for non-convex functions under weaker assumptions.

\bibliography{main_arxiv}
\bibliographystyle{plain}

\newpage
\appendix

\section{Baseline algorithms}\label{sec:baseline}

The three different baseline algorithms discussed in Section~\ref{sec:setup} are summarized in the algorithm box below.

\newcommand{\colorline}[2]{\colorbox{#1}{\makebox[\linewidth][l]{#2}}}
\begin{algorithm}[h!]
    \centering
    \caption{\fedavg[] variants}\label{algo: baseline}
    \begin{algorithmic}[1]
    \STATE \textbf{Input}: initial $w$, learning rates $\eta_t$, $t'\leftarrow 1$
        \STATE  \textbf{Server executes}:
        \FOR{$t=1, \cdots, T-1$}
            \STATE broadcast $w$ to all active devices $i \in \mathcal{A}(t)$ 
            \FOR{each active device $i$}
            \STATE $G^i \gets $ DeviceUpdate($i,w, \eta_t$)
            \ENDFOR
        \begin{tcolorbox}[colback=green!30,boxrule=0mm,sharp corners, grow to left by=5mm]
            \STATE   $\displaystyle \tiny w \gets w - \eta_t / {|\mathcal{A}(t) |}   \cdot \sum_{i \in \mathcal{A}(t) } G^i$  \hfill  biased \fedavg[] 
        \end{tcolorbox}
          \begin{tcolorbox}[colback=yellow!30,boxrule=0mm,sharp corners, grow to left by=5mm]
            \STATE  $\displaystyle \tiny w \gets w - \eta_t  / {|\mathcal{A}(t) |}\cdot \sum_{i \in \mathcal{A}(t) } \frac{1}{p_i}G^i$ \hfill  \fedavg[] with importance sampling
            \end{tcolorbox}
            \begin{tcolorbox}[colback=purple!20,boxrule=0mm,sharp corners, grow to left by=5mm]
             \IF{updates from the randomly selected $S$ devices are received}
                \STATE  $\displaystyle \tiny w \gets w - \eta_{t'} /S \cdot \sum_{i \in \cS} G^i$ \hfill \fedavg[] with device sampling
                \STATE $t' \gets t' + 1 $
            \ENDIF
            \end{tcolorbox}
        \ENDFOR

    \end{algorithmic}
\end{algorithm}

\section{Proof of convergence for smooth and strongly convex objective functions}\label{app:proof_sc}
In this section, we analyze the convergence of \algoname[] for smooth and strongly convex problems. Let $\taubart$ be defined the same as in \Cref{sec:cvx}. Also, we introduce $$\dmaxt=\frac{1}{N}\nsum \left [\max_{1\leq t\leq T-1}\tau(t,i)\right ]^2,
$$
which takes the maximum number of inactive rounds in round $1, \cdots, T-1$ for each device and averages its square over devices.  The following theorem is a more general version of \Cref{theorem:maintheorem}.





\begin{theorem}\label{theorem:maintheorem_general}
Assume that  Assumptions~\ref{assumption:smooth} to~\ref{assumption:cvx} hold.  Further assume that the device availability sequence $\tau(t,i)$ satisfies \Cref{assumption:delay} and $\tau(1,i)=0$ for all $i \in [N]$. By setting the learning rate $\eta_t = \frac{4}{\mu K(t+a)}$ with $a=\max\{100, 40t_0\}(L/\mu)^{1.5}$. After $T-1$ communication rounds, \algoname satisfies:
\begin{align*}
  \EE[\xi]{f(\bar{w}_T)}-f(\wopt)&=\mathcal{O}\left(\frac{\taubart+1}{\mu NKT}\sigma^2 + \frac{\dmaxt A_1' + (K-1)^2A_2'+A_3'}{\mu^2 T^2}\right),
\end{align*}
where $\overline{w}_T$ is a weighted average of $\w{t}$ defined as:
\begin{align*}
    \overline{w}_T &= \frac{1}{W_T}\sum_{t=1}^T (t+a-1)(t+a-2) w_t,
    W_T = \sum_{t=1}^T (t+a-1)(t+a-2),
\end{align*}
and
$A_1'  = L (D+L \sigma^2/\mu), 
  A_2' = L(D/K^2+\sigma^2/K^3), A_3' = t_0^2 L^3 \norm{\w{1}-\wopt}^2.$
 \end{theorem}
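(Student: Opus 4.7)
The plan is to derive a one-step recursion for $\mathbb{E}\|w_{t+1}-w_*\|^2$ and then integrate it against the weights $(t+a-1)(t+a-2)$ to telescope into a bound on $\mathbb{E} f(\bar{w}_T) - f(w_*)$. Writing $\bar{G}_t = \frac{1}{N}\sum_{i} G_t^i$, the update is $w_{t+1} = w_t - \eta_t \bar{G}_t$, and I would expand
\[
\|w_{t+1} - w_*\|^2 = \|w_t - w_*\|^2 - 2\eta_t \langle w_t - w_*, \bar{G}_t \rangle + \eta_t^2 \|\bar{G}_t\|^2,
\]
aiming for an inequality of the form $\mathbb{E}\|w_{t+1}-w_*\|^2 \le (1-c_1 \eta_t \mu K)\mathbb{E}\|w_t - w_*\|^2 - c_2 \eta_t K (\mathbb{E} f(w_t) - f(w_*)) + \text{(error terms)}$, mirroring the standard strongly convex SGD recursion but with extra error contributions from staleness and local drift.

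\textbf{Decomposition of $\bar{G}_t$.} Since $G_t^i = \sum_{k=0}^{K-1}\sg f_i(w_{t-\tau(t,i),k}^i)$, there are three sources of deviation from the ideal descent direction $K\nabla f(w_t)$: (i) stochastic mini-batch noise, which is zero-mean and contributes variance $\le K\sigma^2/N$ after averaging across devices, yielding the dominant $\frac{\taubart+1}{\mu NKT}\sigma^2$ term once the contraction is unrolled; (ii) staleness bias, replacing $\nabla f_i(w_t)$ by $\nabla f_i(w_{t-\tau(t,i)})$; and (iii) local-drift bias, replacing $\nabla f_i(w_{t-\tau(t,i)})$ by $\nabla f_i(w_{t-\tau(t,i),k}^i)$. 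In the cross term I would add and subtract $K\nabla f(w_t)$, use strong convexity to extract $\frac{\mu K}{2}\|w_t-w_*\|^2 + K(f(w_t)-f(w_*))$, and bound the residuals via Young's inequality and $L$-smoothness, reducing everything to controllable quantities $\|w_t - w_{t-\tau(t,i)}\|^2$ and $\|w_{t-\tau(t,i),k}^i - w_{t-\tau(t,i)}\|^2$.

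\textbf{Bounding auxiliary terms.} The local-drift term is handled by the standard \fedavg[]-style unrolling: $\mathbb{E}\|w_{t-\tau,k}^i - w_{t-\tau}\|^2 \lesssim K\eta_{t-\tau}^2 \sum_{k'} \mathbb{E}\|\nabla f_i(w_{t-\tau,k'}^i)\|^2 + K^2\eta_{t-\tau}^2\sigma^2$, which after another round of smoothness and summation produces the $(K-1)^2 A_2'$ term. The staleness term requires unrolling the server update: $w_t - w_{t-\tau} = -\sum_{s=t-\tau}^{t-1}\eta_s \bar{G}_s$, so Cauchy-Schwarz gives $\|w_t - w_{t-\tau}\|^2 \le \tau \sum_s \eta_s^2 \|\bar{G}_s\|^2$, and smoothness controls $\|\bar{G}_s\|^2 \lesssim K^2 L^2\|w_s - w_*\|^2 + KD + K\sigma^2$. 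Averaging the coefficient $\tau(t,i)^2$ over $i$ and $t$ is what produces the factor $\dmaxt$ in the final bound, with $A_1'= L(D + L\sigma^2/\mu)$ capturing the constant pieces.

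\textbf{Main obstacle and finish.} The technical heart of the proof is ensuring that the $\|w_s - w_*\|^2$ pieces introduced by the staleness recursion do not overwhelm the contraction $(1-c_1\eta_t\mu K)$ on the current round. This is precisely what Assumption~\ref{assumption:delay} with $b = 40(L/\mu)^{1.5}$ and the offset $a = \Omega(t_0 (L/\mu)^{1.5})$ is calibrated for: whenever $\tau(t,i) \le t_0 + t/b$, the window $[t-\tau(t,i),t]$ lies in a regime where $\eta_{t-\tau}/\eta_t = O(1)$, so the past $\|w_s - w_*\|^2$ contributions can be absorbed into a fixed fraction of the current contraction, leaving only lower-order residuals. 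Once a clean one-step recursion of the form $\alpha_t \mathbb{E}\|w_{t+1}-w_*\|^2 \le \alpha_{t-1}\mathbb{E}\|w_t-w_*\|^2 - \gamma_t(\mathbb{E} f(w_t)-f(w_*)) + \text{(errors)}$ is in hand, I would telescope, apply the weighted average $\bar{w}_T$ with weights proportional to $(t+a-1)(t+a-2)$ (the natural choice for an $O(1/t)$ schedule in the strongly convex setting), and use Jensen's inequality to convert to a suboptimality bound on $f(\bar{w}_T)$. The three summands of the $O(1/T^2)$ remainder then correspond to the staleness recursion ($\dmaxt A_1'$), the local-drift unrolling ($(K-1)^2 A_2'$), and the initial-condition term $A_3' = t_0^2 L^3 \|w_1 - w_*\|^2$ (the $t_0^2$ factor arising from the shift $a \propto t_0$ in the weight profile).
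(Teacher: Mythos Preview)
Your high-level architecture (one-step recursion, then weighted telescoping with weights $(t+a-1)(t+a-2)$) matches the paper, but there is a genuine gap in how you treat the stochastic noise. You call the mini-batch noise ``zero-mean'' and say it contributes variance $K\sigma^2/N$; that would be correct for fresh gradients, but it is \emph{not} correct here and it cannot by itself produce the $\bar{\tau}_T$ factor in the leading term. The defining feature of \algoname[] is that when device $i$ is inactive, the \emph{same} noisy update $G^i_{t-\tau(t,i)}$ is reused for $\tau(t,i)$ consecutive rounds. Consequently the noise $e^i_{t-\tau(t,i),k}$ is correlated with the current iterate $w_t$ (which was built using that very noise), so the cross term $\mathbb{E}\langle e^i_{t-\tau(t,i),k},\, w_t - w_*\rangle$ is \emph{not} zero. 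The paper isolates this by writing $w_t - w_* = (w_t - w_{t-\tau(t,i)}) + (w_{t-\tau(t,i)} - w_*)$ (the second piece is independent of the noise) and then unrolling $w_t - w_{t-\tau(t,i)}$ into past updates. The diagonal contribution $\mathbb{E}\langle e^i_{t-\tau(t,i),k},\, e^i_{j-\tau(j,i),k}\rangle$ equals $\sigma^2$ for every $j \in [t-\tau(t,i),\, t-1]$ (same noise reused), yielding an extra term $\frac{2\tilde{\eta}_t\sigma^2}{KN^2}\sum_i\sum_{j=t-\tau(t,i)}^{t-1}\tilde{\eta}_j$ in the descent lemma. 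After telescoping, this is precisely the source of the $\bar{\tau}_T\sigma^2/(\mu NKT)$ term; it does not come from the variance of fresh noise in the quadratic term.

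The same correlation also generates cross terms $\mathbb{E}\langle e^i_{t-\tau(t,i),k},\, \nabla f_{i'}(w_{j-\tau(j,i')}) - \nabla f_{i'}(w_*)\rangle$, which the paper bounds by Cauchy--Schwarz, producing square-root expressions in past $\Delta_{t'} = \mathbb{E}\|w_{t'}-w_*\|^2$. Because the right-hand side of the recursion now depends nonlinearly on $\Delta_{t'}$ at many past times $t' \ge t - l_t$, one cannot simply ``absorb past contributions into a fixed fraction of the current contraction'' as you propose. Instead the paper proves by induction an explicit a-priori bound $\Delta_t \le B_t = \frac{E s_t \sigma^2}{(t+a)^2} + \frac{G}{t+a} + \frac{F}{(t+a)^2}$, with constants $E,G,F$ tuned so that (i) $B_{t-m} \le (20/19)^2 B_t$ and $\tilde{\eta}_{t-m} \le (20/19)\tilde{\eta}_t$ for all $m \le l_t$ (this is where Assumption~\ref{assumption:delay} and the choice $a = \max\{100, 40 t_0\}(L/\mu)^{1.5}$ enter), and (ii) the square-root terms are dominated by $\frac{1}{4}\mu\tilde{\eta}_t B_t$. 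Only after this induction is established does the paper return to the descent lemma, rearrange to isolate $\mathbb{E} f(w_t) - f(w_*)$, multiply by $(t+a-1)(t+a-2)$, and telescope as you describe. Without the noise-correlation analysis and the induction on $B_t$, your recursion does not close.
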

Note that the only difference between \Cref{theorem:maintheorem_general} and \Cref{theorem:maintheorem} lies in  $\dmaxt$ and $\tau_{\max, T}^2$. \Cref{theorem:maintheorem_general} yields \Cref{theorem:maintheorem} since $\dmaxt \leq \tau_{\max, T}^2$.
\subsection{Additional notation}
Define $\teta = K\eta_t$.
The update rule of \algoname[] can be summarized as
\begin{align}
    \w{t+1} = \w{t} - \frac{\eta_t}{N}\knsum \sg f_i(\w{t,k}^i) = \w{t} - \frac{\teta}{KN}\knsum \sg f_i(\w{\dl{t},k}^i).\label{eq:update rule}
\end{align}
Further, let $e^i_{t, k} = \sg f_i(\w{t,k}^i) - \nabla f_i(\w{t, k}^i)$ be the sampling noise of device $i$ at round $t$ and local step $k$. Define $\Delta_t = \EE[\xi]{\norm{\w{t}-\wopt}^2}$.
Next, we introduce the following notation about device unavailability. 
 Define $\tau_t$ and $d_t$ to be the average of the number and squared number of inactive rounds over all devices at round $t$. That is,
 $$\tau_t = \frac{1}{N}\nsum \tau(t,i), \quad d_t = \frac{1}{N}\nsum [\tau(t,i)]^2.$$
 Denote by the sum of $\tau_t$ as $s_T$, i.e., $s_T = \sum_{t=1}^{T-1}\tau_t$. Lastly, define
\begin{align*}
    l_t = \max_{i,j}\{\tau(t,i)+\tau(t-\tau(t,i), j)\}.
\end{align*}
That is, the ``oldest'' response used to update $\w{\dl{t}}$ into $\w{t}$ is received in round $t-l_t$.
For convenience, all expectations in this section are taken over sampling noise $\xi$, and the summation $\knsum$ is taken over $i=1,\cdots, N$ and $k=0,1,\cdots, K-1$.
\subsection{Preliminary lemmas}
Before starting the proof, we introduce some preliminary lemmas in this subsection.
\begin{lemma}[Property of smooth functions]\label{lemma:smooth}
For all functions $f$ that are $L$-smooth with domain $\mathcal{X}$, if $\exists \inf_{x \in \mathcal{X}}f(x):= f^*$, we have:
$$\frac{1}{2L}\norm{\nabla f(x)}^2 \leq f(x) - f^*.$$
\end{lemma}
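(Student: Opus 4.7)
The plan is to prove this standard smoothness inequality by exploiting the quadratic upper bound that $L$-smoothness provides and then minimizing it over a cleverly chosen direction. The key observation is that $L$-smoothness gives us a global quadratic majorant of $f$ at any point $x$, and the minimum of this majorant provides an upper bound on $f^*$ itself, because $f^*$ is a lower bound for every value $f$ can attain on the domain.

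First, I would invoke the $L$-smoothness of $f$ to write, for any $x, y \in \mathcal{X}$,
\begin{align*}
    f(y) \leq f(x) + \langle \nabla f(x), y - x \rangle + \tfrac{L}{2}\|y-x\|^2.
\end{align*}
Next, I would minimize the right-hand side over $y$, which is a convex quadratic in $y$ with minimizer $y_\star = x - \tfrac{1}{L}\nabla f(x)$. Substituting this choice yields
\begin{align*}
    f(y_\star) \leq f(x) - \tfrac{1}{2L}\|\nabla f(x)\|^2.
\end{align*}
Finally, since $f^* = \inf_{x \in \mathcal{X}} f(x) \leq f(y_\star)$ (implicitly assuming $y_\star$ stays in $\mathcal{X}$, or that the inequality holds by continuity/extension), rearranging gives the claimed bound $\tfrac{1}{2L}\|\nabla f(x)\|^2 \leq f(x) - f^*$.

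The only subtle point, and the one I would flag as the main obstacle, is ensuring that the chosen $y_\star = x - \tfrac{1}{L}\nabla f(x)$ actually lies in the domain $\mathcal{X}$ so that $f(y_\star) \geq f^*$ is a legitimate inequality. In the setting of this paper, $\mathcal{X} = \mathbb{R}^d$ (see Eqn.~\eqref{eq:obj}), so this concern vanishes and the proof is immediate. If one wanted full generality for arbitrary $\mathcal{X}$, one would need either convexity of $\mathcal{X}$ together with some projection argument, or a direct hypothesis that the one-step gradient iterate remains feasible; however, given the paper's unconstrained setup, no additional machinery is required and the argument above suffices as a two-line proof.
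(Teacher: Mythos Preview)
Your proof is correct and essentially identical to the paper's: both plug $y = x - \tfrac{1}{L}\nabla f(x)$ into the smoothness inequality and then use $f^* \le f(y)$ to rearrange. The paper simply writes the chain $f^* \le f(x - \tfrac{1}{L}\nabla f(x)) \le f(x) - \tfrac{1}{2L}\|\nabla f(x)\|^2$ directly without the explicit ``minimize the majorant'' framing, and it does not pause on the domain issue since $\mathcal{X}=\mathbb{R}^d$ throughout.
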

\begin{proof}
By definition of $L$-smoothness
\begin{align*}
    f^*&\leq f(x-\frac{1}{L}\nabla f(x))\\
   & \leq f(x) - \dotp{\nabla f(x)}{\frac{1}{L}\nabla f(x)} + \frac{1}{2L}\norm{\nabla f(x)}^2\\
   &= f(x) -\frac{1}{2L}\norm{\nabla f(x)}^2.
\end{align*}
Rearrange the terms on both sides and we complete the proof.
\end{proof}
The following lemma bounds the norm of local gradient $\norm{\nabla f_i(w)}$ by how close the $w$ is to to the global optimum $\wopt$.
\begin{lemma}[Bounding the local gradient] \label{lemma:diff} For all functions $f_i$ satisfying Assumptions~\ref{assumption:smooth} and \ref{assumption:cvx}$, \norm{\nabla f_i(w)}^2$ can be bounded by $\norm{w-\wopt}^2$. That is,
$$ \left \|\nabla f_i(w)\right\| ^2 \leq 2L^2 \|w-\wopt\|^2+2\|\nabla f_i(\wopt)\|^2.$$
\begin{proof}
By Jensen's inequality and $L$-smoothness, we have
\begin{align*}
    \left \|\nabla f_i(w)\right\| ^2 &\leq 2\|\nabla f_i(w) - \nabla f_i(\wopt)\|^2 + 2\|\nabla f_i(\wopt)\|^2\\
    &\leq 2L^2 \|w-\wopt\|^2+2\|\nabla f_i(\wopt)\|^2.
\end{align*}
\end{proof}
\end{lemma}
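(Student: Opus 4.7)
The plan is to prove the bound by a simple additive decomposition of $\nabla f_i(w)$ around the global minimizer $\wopt$, combined with the gradient-Lipschitz property implied by $L$-smoothness. Concretely, I would write
\[
\nabla f_i(w) \;=\; \bigl(\nabla f_i(w) - \nabla f_i(\wopt)\bigr) \;+\; \nabla f_i(\wopt),
\]
take norms, square, and absorb the cross term.

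The two tools I need are both standard. First, the elementary inequality $\norm{a+b}^2 \leq 2\norm{a}^2 + 2\norm{b}^2$, which follows either from Jensen's inequality applied to $\norm{\cdot}^2$ at the midpoint, or from expanding $\norm{a+b}^2 + \norm{a-b}^2 = 2\norm{a}^2 + 2\norm{b}^2$. Applying this with $a = \nabla f_i(w) - \nabla f_i(\wopt)$ and $b = \nabla f_i(\wopt)$ yields
\[
\norm{\nabla f_i(w)}^2 \;\leq\; 2\norm{\nabla f_i(w)-\nabla f_i(\wopt)}^2 + 2\norm{\nabla f_i(\wopt)}^2.
\]
Second, Assumption~\ref{assumption:smooth} ($L$-smoothness of $f_i$) implies $\norm{\nabla f_i(w) - \nabla f_i(\wopt)} \leq L\norm{w - \wopt}$, a standard consequence of the equivalent characterization of $L$-smoothness (recoverable in one line from the $L$-smoothness inequality applied at $w$ and at $\wopt$ and adding the two directions, or via the descent-lemma argument used in Lemma~\ref{lemma:smooth}). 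Squaring this bound and substituting into the previous display gives exactly $2L^2\norm{w-\wopt}^2 + 2\norm{\nabla f_i(\wopt)}^2$, which is the claim.

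There is really no main obstacle; this is a routine two-line estimate. The one thing worth flagging is that Assumption~\ref{assumption:cvx} (strong convexity) is stated in the hypotheses but is not actually used in the argument outlined above; only smoothness is needed. If one wanted a tighter bound one could exploit strong convexity together with co-coercivity to sharpen the constant in front of $\norm{w-\wopt}^2$, but since the lemma is invoked downstream in a form that only requires the stated quadratic-in-distance bound, the simpler $L$-smoothness-only proof suffices.
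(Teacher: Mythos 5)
Your proposal is correct and follows exactly the same route as the paper: split $\nabla f_i(w)$ as $(\nabla f_i(w)-\nabla f_i(\wopt))+\nabla f_i(\wopt)$, apply $\norm{a+b}^2\leq 2\norm{a}^2+2\norm{b}^2$, and bound the first term via the Lipschitz-gradient consequence of $L$-smoothness. Your observation that Assumption~\ref{assumption:cvx} is not actually used is also accurate.
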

The following lemma comes from Lemma 5 in \citep{pmlr-v119-karimireddy20a}.
\begin{lemma}[Perturbed strong convexity]\label{lemma: perturb}
The following holds for any $L$-smooth and $\mu$-strongly convex function $f$ and any $x, y, z$ in the domain of $f$:
$$\left<\nabla f(x), z-y \right> \geq f(z) - f(y) + \frac{\mu}{4} \norm{y-z}^2 - L \norm{z-x}^2.$$
\end{lemma}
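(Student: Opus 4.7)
The plan is to reduce the inequality to two single-point bounds evaluated at $x$—one via smoothness and one via strong convexity—and then convert the resulting $\|y-x\|^2$ term into the desired $\|y-z\|^2$ term at the cost of an extra $\|z-x\|^2$ term that can be absorbed into the $-L\|z-x\|^2$ on the right-hand side.

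First I would split the inner product telescopically as
\begin{align*}
\langle \nabla f(x), z-y \rangle = \langle \nabla f(x), z-x \rangle + \langle \nabla f(x), x-y \rangle.
\end{align*}
For the first summand I would use $L$-smoothness in its upper-quadratic form, $f(z) \le f(x) + \langle \nabla f(x), z-x\rangle + \tfrac{L}{2}\|z-x\|^2$, to obtain $\langle \nabla f(x), z-x\rangle \ge f(z) - f(x) - \tfrac{L}{2}\|z-x\|^2$. For the second summand I would apply $\mu$-strong convexity at $x$ against the point $y$, namely $f(y) \ge f(x) + \langle \nabla f(x), y-x\rangle + \tfrac{\mu}{2}\|y-x\|^2$, which rearranges to $\langle \nabla f(x), x-y\rangle \ge f(x) - f(y) + \tfrac{\mu}{2}\|y-x\|^2$. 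Adding these two bounds cancels the $f(x)$ terms and yields
\begin{align*}
\langle \nabla f(x), z-y\rangle \;\ge\; f(z) - f(y) + \tfrac{\mu}{2}\|y-x\|^2 - \tfrac{L}{2}\|z-x\|^2.
\end{align*}

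The remaining step is to replace $\|y-x\|^2$ by $\|y-z\|^2$. Writing $y-x = (y-z) + (z-x)$ and applying the elementary inequality $\|a+b\|^2 \ge \tfrac{1}{2}\|a\|^2 - \|b\|^2$ (which follows from Young's inequality with parameter $\lambda = 1/2$) gives $\|y-x\|^2 \ge \tfrac{1}{2}\|y-z\|^2 - \|z-x\|^2$. Substituting this in and collecting terms produces
\begin{align*}
\langle \nabla f(x), z-y\rangle \;\ge\; f(z) - f(y) + \tfrac{\mu}{4}\|y-z\|^2 - \tfrac{L+\mu}{2}\|z-x\|^2,
\end{align*}
and since $\mu \le L$ we have $\tfrac{L+\mu}{2} \le L$, which finishes the proof.

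The proof involves no real obstacle once the telescoping split is written down; the only design choice is the parameter in Young's inequality, which must be tuned so that the fraction of $\|y-z\|^2$ retained matches the target $\mu/4$ while the excess $\|z-x\|^2$ mass still fits under the $L\|z-x\|^2$ budget on the right-hand side. Picking $\lambda = 1/2$ makes both happen simultaneously.
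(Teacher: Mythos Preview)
Your proof is correct and is essentially identical to the paper's own argument: the same telescoping split through $x$, the same application of smoothness and strong convexity, the same inequality $\|y-x\|^2 \ge \tfrac12\|y-z\|^2 - \|z-x\|^2$ (which the paper phrases as a ``triangle inequality''), and the same final use of $\mu \le L$ to absorb the constant.
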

\begin{proof}
In order for the paper to be self-contained, we restate the proof here.

By smoothness:
$$f(z) \leq f(x) + \left<\nabla f(x), z-x \right> + \frac{L}{2}\norm{z-x}^2 \Rightarrow \left<\nabla f(x), z-x\right> \geq f(z) - f(x) - \frac{L}{2}\norm{z-x}^2.$$
By strong convexity:
$$f(y) \geq f(x) + \left<\nabla f(x), y-x \right> + \frac{\mu}{2}\norm{y-x}^2 \Rightarrow \left<\nabla f(x), x-y\right> \geq f(x) - f(y) + \frac{\mu}{2}\norm{y-x}^2.$$
Combining the above inequalities, we have:
$$\left<\nabla f(x), z-y\right> \geq f(z) - f(y) -\frac{L}{2}\norm{z-x}^2 + \frac{\mu}{2} \norm{y-x}^2.$$
By triangle inequality:
$$\norm{y-x}^2 \geq \frac{1}{2}\norm{y-z}^2 - \norm{x-z}^2.$$
Thus, 
\begin{align*}
\left<\nabla f(x), z-y\right>&\geq f(z) - f(y) + \frac{\mu}{4}\norm{y-z}^2 - \frac{L+\mu}{2}\norm{x-z}^2\\
&\geq f(z) - f(y)
+ \frac{\mu}{4}\norm{y-z}^2 - L\norm{x-z}^2,
\end{align*}
where the second inequality only uses $L \geq \mu$.
\end{proof}

The following lemma is slightly modified from Lemma 8 in \citep{pmlr-v119-karimireddy20a}.
\begin{lemma}[Bounded drift for strongly convex and smooth objective functions]\label{lemma: bounded drift}
For all $ K\geq 1$ and $ 0\leq k\leq K-1$, when $\teta \leq\tfrac{1}{10L}$, we have bounded drift:
\begin{align*}
    \EE{\norm{\w{t,k}^i - \w{t}}^2} 
    & \leq \frac{8\teta^2 L^2(K-1)}{K} \Delta_t + \frac{8(K-1)\teta^2}{K}\norm{\nabla f_i(\wopt)}^2+\frac{2(K-1)\teta^2\sigma^2}{K^2}.
\end{align*}
\end{lemma}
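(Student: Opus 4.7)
\medskip

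\noindent\textbf{Proof proposal.} The plan is the standard ``bounded client drift'' argument, tailored so that the noise contribution scales linearly in $K$ (giving the $(K-1)\teta^2\sigma^2/K^2$ term) while the deterministic contributions scale quadratically. For $k=0$ the inequality is trivial since $\w{t,0}^i=\w{t}$, so fix $1\le k\le K-1$ and denote $E_k=\EE{\norm{\w{t,k}^i-\w{t}}^2}$. I will first unroll the local SGD recursion to obtain
\begin{equation*}
\w{t,k}^i-\w{t}=-\eta_t\sum_{j=0}^{k-1}\sg f_i(\w{t,j}^i)
=-\eta_t\sum_{j=0}^{k-1}\bigl(\nabla f_i(\w{t,j}^i)+e^i_{t,j}\bigr),
\end{equation*}
with $e^i_{t,j}=\sg f_i(\w{t,j}^i)-\nabla f_i(\w{t,j}^i)$. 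Since $\{e^i_{t,j}\}_{j}$ is a martingale difference sequence with respect to the local filtration (with conditional second moment bounded by $\sigma^2$ under Assumption \ref{assumption:noise}), Young's inequality combined with the tower rule gives
\begin{equation*}
E_k\le 2\eta_t^2\,\EE{\Bigl\|\sum_{j=0}^{k-1}\nabla f_i(\w{t,j}^i)\Bigr\|^2}+2\eta_t^2\sum_{j=0}^{k-1}\EE{\|e^i_{t,j}\|^2}
\le 2\eta_t^2 k\sum_{j=0}^{k-1}\EE{\|\nabla f_i(\w{t,j}^i)\|^2}+2\eta_t^2 k\sigma^2,
\end{equation*}
where the deterministic sum was bounded by Cauchy--Schwarz. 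This step is what keeps the noise contribution linear in $k$ rather than quadratic.

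Next, I invoke Lemma~\ref{lemma:diff} with $w=\w{t,j}^i$, then apply the triangle inequality $\norm{\w{t,j}^i-\wopt}^2\le 2E_j/\text{(expectation)}+2\Delta_t$, which yields
\begin{equation*}
\EE{\|\nabla f_i(\w{t,j}^i)\|^2}\le 4L^2 E_j+4L^2\Delta_t+2\|\nabla f_i(\wopt)\|^2.
\end{equation*}
Substituting back produces the recursion
\begin{equation*}
E_k\le 8\eta_t^2 L^2 k\sum_{j=0}^{k-1}E_j+2\eta_t^2 k^2\bigl(4L^2\Delta_t+2\|\nabla f_i(\wopt)\|^2\bigr)+2\eta_t^2 k\sigma^2.
\end{equation*}

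The remaining work is to solve this recursion. The plan is a direct induction on $k$ showing that
\begin{equation*}
E_k\le 2\eta_t^2 k^2\bigl(4L^2\Delta_t+2\|\nabla f_i(\wopt)\|^2\bigr)+\tfrac{2(K-1)\eta_t^2\sigma^2}{\text{constant}},
\end{equation*}
using $\teta=K\eta_t\le 1/(10L)$ so that $8\eta_t^2 L^2 k^2\le 8K^2\eta_t^2 L^2\le 8/100<1$; the coefficient $8K^2\eta_t^2L^2$ appearing when one plugs the inductive hypothesis back into the recursion is strictly less than $1$, which absorbs the self-referential term at the cost of at most a factor of $2$. The step where I expect the most care is precisely this absorption argument: one must track two different scalings (the $k^2$ terms for $\Delta_t$ and $\|\nabla f_i(\wopt)\|^2$ versus the $k$ term for $\sigma^2$) through the induction so that the $\sigma^2$ coefficient does not pick up an extra factor of $K$. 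A clean way is to treat the $\sigma^2$ contribution and the deterministic contribution separately, since the recursion is linear in $E_j$; each piece is handled with the same step-size condition.

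Finally, since $k\le K-1$ implies $k^2\le K(K-1)$, and rewriting in terms of $\teta=K\eta_t$ via $\eta_t^2=\teta^2/K^2$, the three resulting terms become exactly $\tfrac{8\teta^2 L^2(K-1)}{K}\Delta_t$, $\tfrac{8(K-1)\teta^2}{K}\|\nabla f_i(\wopt)\|^2$ and $\tfrac{2(K-1)\teta^2\sigma^2}{K^2}$, completing the proof.
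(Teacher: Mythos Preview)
Your approach is valid and yields a bound of the same order, but it differs from the paper's proof and your final claim that the constants come out \emph{exactly} is slightly optimistic.

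\textbf{What the paper does.} The paper writes a one–step recursion: expand $\EE{\norm{\w{t,k}^i-\w{t}}^2}$ in terms of $\EE{\norm{\w{t,k-1}^i-\w{t}}^2}$, split off the noise (additive $\eta_t^2\sigma^2$ per step), and anchor the deterministic gradient at $\w{t}$ by adding and subtracting $\nabla f_i(\w{t})$. This produces
\[
E_k\le \Bigl(1+\tfrac{1}{K-1}+2K\eta_t^2L^2\Bigr)E_{k-1}+\underbrace{\tfrac{2\teta^2}{K}\EE{\|\nabla f_i(\w{t})\|^2}+\tfrac{\teta^2\sigma^2}{K^2}}_{h_2},
\]
a geometric recursion with \emph{constant} inhomogeneity. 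Unrolling and using $h_1^{K-1}<3$ gives $E_k\le 2(K-1)h_2$; Lemma~\ref{lemma:diff} is applied \emph{only once at the end} to bound $\|\nabla f_i(\w{t})\|^2$, yielding the stated constants on the nose.

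\textbf{What you do differently.} You unroll all $k$ steps at once, use the martingale structure of the noise to get the linear-in-$k$ variance contribution directly (nice), but then apply Lemma~\ref{lemma:diff} at each iterate $\w{t,j}^i$ rather than at the fixed point $\w{t}$. This reintroduces $E_j$ through the $\norm{\w{t,j}^i-\wopt}^2$ term and forces the absorption argument. That absorption works because $8\teta^2L^2\le 0.08<1$, but it costs a factor $1/(1-8\teta^2L^2)\approx 1.09$; combined with $k^2\le K(K-1)$ and your Cauchy--Schwarz on the deterministic sum, the final constants land near $(8.7,\,4.3,\,2.2)$ rather than $(8,\,8,\,2)$. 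So the bound is of the right form, but not ``exactly'' as you assert.

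\textbf{Takeaway.} Both arguments are correct. The paper's trick of anchoring at $\nabla f_i(\w{t})$ keeps the inhomogeneity constant across $k$, so no self-reference leaks in through the gradient bound and the geometric sum delivers the constants cleanly. Your full-unrolling approach is arguably more transparent about why the noise scales like $(K-1)/K^2$, but you pay for anchoring at $\wopt$ with a slightly lossy absorption. If you want the exact constants, mimic the paper: add and subtract $\nabla f_i(\w{t})$ instead of invoking Lemma~\ref{lemma:diff} at $\w{t,j}^i$.
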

\begin{proof}
For $K=1$, the bound trivially holds since $\w{t,0}^i = \w{t}$. For $K\geq 2$,
\begin{align*}
    \EE{\norm{\w{t,k}^i - \w{t}}^2} &= \EE{\norm{\w{t,k-1}^i - \w{t} -\eta_t \sg f_i(\w{t,k-1}^i)}^2}\\
    &  = \EE{\norm{\w{t,k-1}^i - \w{t} -\eta_t \nabla f_i(\w{t,k-1}^i)}^2}+\eta_t^2\sigma^2\\
    & \leq\left (1+\frac{1}{K-1}\right)\EE{\norm{\w{t,k-1}^i - \w{t}}^2} + K\eta_t^2 \EE{\norm{\nabla f_i(\w{t, k-1}^i)}^2} + \eta_t^2\sigma^2\\
    & \leq \left(1+\frac{1}{K-1}\right)\EE{\norm{\w{t,k-1}^i - \w{t}}^2} + 2K\eta_t^2 \EE{\norm{\nabla f_i(\w{t})}^2}\\
    &\quad+ 2K\eta_t^2\EE{\norm{\nabla f_i(\w{t, k-1}^i) - \nabla f_i(\w{t})}^2} + \eta_t^2\sigma^2\\
    &\leq \left(1+\frac{1}{K-1} + 2KL^2\eta_t^2\right)\EE{\norm{\w{t,k-1}^i - \w{t}}^2}\\
    &\quad+ 2K\eta_t^2 \EE{\norm{\nabla f_i(\w{t})}^2} + \eta_t^2\sigma^2.
\end{align*}
The first inequality uses $\norm{x+y}^2 \leq (1+\frac{1}{\nu})\norm{x}^2 + (1+\nu)\norm{y}^2,\forall \nu > 0 $ with $\nu = \frac{1}{K-1}$.
For $\teta \leq\frac{1}{10L}$, i.e., $\eta_t \leq \frac{1}{10KL}$, we have $2K L^2\eta_t^2 \leq \frac{1}{50(K-1)}$. Plug in the definition of $\teta$, we have
\begin{align*}
     \underbrace{\EE{\norm{\w{t,k}^i - \w{t}}^2}}_{Y_k}  \leq \underbrace{\left(1+\frac{51}{50(K-1)}\right)}_{h_1} \underbrace{\EE{\norm{\w{t,k-1}^i - \w{t}}^2}}_{Y_{k-1}} + \underbrace{\frac{2\teta^2}{K}\EE{\norm{\nabla f_i(\w{t})}^2} + \frac{\teta^2\sigma^2}{K^2}}_{h_2}.
\end{align*}
Unrolling the recursion $Y_k \leq h_1 Y_{k-1} + h_2$, where $Y_0=0$, we have
\begin{align*}
    Y_k &\leq h_1^k Y_{0}+h_2\sum_{j=0}^{k-1}h_1^j
     = \frac{ h_2(h_1^k-1)}{h_1-1} \leq \frac{h_2(h^{K-1}_1-1)}{h_1-1}.
\end{align*}
Since $h_1^{K-1} = (1+\frac{51}{50(K-1)})^{\frac{50(K-1)}{51}\cdot \frac{51}{50}}\leq \exp(\frac{51}{50})<3$ and $h_1-1 = \frac{51}{50(K-1)}>\frac{1}{K-1}$, plugging in the value of $h_2$, we have
\begin{align}
    \EE{\norm{\w{t,k}^i - \w{t}}^2} &\leq 2(K-1)\left(\frac{2\teta^2}{K}\EE{\norm{\nabla f_i(\w{t})}^2} + \frac{\teta^2\sigma^2}{K^2}\right)\label{ineq: bounded drift for sc}\\
    & \leq \frac{8\teta^2 L^2(K-1)}{K} \Delta_t + \frac{8(K-1)\teta^2}{K}\norm{\nabla f_i(\wopt)}^2+\frac{2(K-1)\teta^2\sigma^2}{K^2}\notag,
\end{align}
where the second inequality uses \Cref{lemma:diff}.
\end{proof}
\subsection{The descent lemma for smooth and strongly convex problems}
In this subsection, we state the descent lemma and provide a proof.
\begin{lemma}[Descent lemma for smooth and strongly convex problems]\label{lemma:descent lemma}
Assume that  Assumptions~\ref{assumption:smooth} to~\ref{assumption:cvx} hold.  Further assume that $\tau(1,i)=0$ for all $i \in [N]$. For any learning rate satisfying $\eta_t \leq \frac{1}{25KL}$, i.e., $\teta \leq \frac{1}{25L}$, the updates of \algoname[] satisfy:
\begin{align}
\begin{aligned}
    \Delta_{t+1} 
   & \leq \left(1-\frac{1}{2}\mu \teta\right)\Delta_t-\frac{44}{25}\teta\left(\EE{f(\w{t})}-f(\wopt)\right)\\
   &\quad+ \frac{2\teta\sigma^2}{KN^2}\nsum\sum_{j=\dl{t}}^{t-1}\teta[j] + \frac{3\teta^2\sigma^2}{KN}
    +\frac{53}{50}\calH+\calS\mathcal{Q},
\end{aligned}\label{ineq:descent lemma}
\end{align}
where
\begin{align*}
   \calH
    &=\frac{64L^3(K-1)^2}{K^2N}\teta \nsum \teta[\dl{t}]^2\Delta_{\dl{t}}+\frac{16L^3}{N^2}\teta \nsum \tau(t,i) \left(\sum_{j=\dl{t}}^{t-1}\psum \teta[j]^2 \Delta_{\dl{j}}\right)\\
    &\quad + \frac{16L D}{N}\teta \nsum \tau(t,i)\left(\sum_{j=\dl{t}}^{t-1}\teta[j]^2 \right) + \frac{64(K-1)^2 L}{K^2 N}\teta \nsum \teta[\dl{t}]^2\norm{\nabla f_i(\wopt)}^2\\
    &\quad + \frac{16(K-1)^2L\sigma^2}{K^3 N}\teta \nsum \teta[\dl{t}]^2
    + \frac{8L\sigma^2}{KN}\teta\nsum \tau(t,i)\left(\sum_{j=\dl{t}}^{t-1}\teta[j]^2 \right)\\
    &\quad +\frac{64L^5(K-1)^2}{K^2N^2}\teta\nsum \tau(t,i)\left( \sum_{j=\dl{t}}^{t-1} \psum \teta[j]^2\teta[\pdl{j}]^2\Delta_{\pdl{j}}\right)\\
    & \quad  + \frac{64L^3(K-1)^2}{K^2N^2}\teta\nsum\tau(t,i)\left(\sum_{j=\dl{t}}^{t-1}\psum \teta[j]^2 \teta[\pdl{j}]^2 \norm{\nabla f_{i'}(\wopt)}^2\right)\\
    & \quad + \frac{16L^3(K-1)^2\sigma^2}{K^3 N^2}\teta\nsum\tau(t,i)\left(\sum_{j=\dl{t}}^{t-1}\teta[j]^2\psum \teta[\pdl{j}]^2 \right),
\end{align*}
and
\begin{align*}
    &\calS\mathcal{Q} = \frac{2\sigma L\teta}{N}\nsum \sqrt{\frac{\tau(t,i)}{N}\sum_{j=\dl{t}}^{t-1}\teta[j]^2 \left(\psum \Delta_{\pdl{j}}\right)}+\\
   &  \frac{2\sigma L\teta}{N}\nsum \sqrt{\frac{\tau(t,i)(K-1)^2}{K^2N}\sum_{j=\dl{t}}^{t-1}\teta[j]^2\psum \teta[\pdl{j}]^2\left(8L^2\Delta_{\pdl{j}}+8\norm{\nabla f_{i'}(\wopt)}^2+\frac{2\sigma^2}{K}\right)}.
\end{align*}
\end{lemma}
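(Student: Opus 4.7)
My strategy is to expand $\Delta_{t+1}$ directly from the update rule~\eqref{eq:update rule}, split each stochastic local gradient into mean and noise $\sg f_i(\w{\dl{t},k}^i) = \nabla f_i(\w{\dl{t},k}^i) + e^i_{\dl{t},k}$, and then apply the perturbed strong convexity lemma (\Cref{lemma: perturb}) to the true-gradient cross term while controlling the displacement errors by the local-drift lemma (\Cref{lemma: bounded drift}) together with a recursive bound on the global drift $\norm{\w{\dl{t}} - \w{t}}^2$ incurred during the inactive window of device $i$.

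\textbf{True-gradient piece.} After squaring and taking expectation, the dominant cross term $-\tfrac{2\teta}{KN}\knsum \dotp{\nabla f_i(\w{\dl{t},k}^i)}{\w{t} - \wopt}$ is handled by \Cref{lemma: perturb} with $x = \w{\dl{t},k}^i$, $y = \wopt$, $z = \w{t}$, producing the descent $-2\teta(f(\w{t}) - f(\wopt))$, the contraction $-\tfrac{\mu \teta}{2}\Delta_t$, and an error $\tfrac{2\teta L}{KN}\knsum \norm{\w{\dl{t},k}^i - \w{t}}^2$. I split this displacement as $2\norm{\w{\dl{t},k}^i - \w{\dl{t}}}^2 + 2\norm{\w{\dl{t}} - \w{t}}^2$: the first piece is the local client drift, bounded by \Cref{lemma: bounded drift} applied at round $\dl{t}$, producing the single-staleness summands of $\calH$ (those with the $(K-1)^2/K^2$ factor). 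The second piece is the cumulative global drift, which I bound by writing $\w{\dl{t}} - \w{t} = \sum_{j=\dl{t}}^{t-1} \tfrac{\teta[j]}{KN}\pknsum \sg f_{i'}(\w{\pdl{j},k'}^{i'})$, applying $\norm{\sum_{j=1}^n x_j}^2 \leq n \sum_j \norm{x_j}^2$ with $n = \tau(t,i)$, dispatching the local-gradient squared norms through \Cref{lemma:diff} (to obtain $\Delta_{\pdl{j}}$ and $\norm{\nabla f_{i'}(\wopt)}^2$ terms), and reapplying \Cref{lemma: bounded drift} to the inner SGD points; this produces the remaining (``double-staleness'') summands of $\calH$. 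The deterministic quadratic term $\tfrac{\teta^2}{K^2 N^2}\norm{\knsum \nabla f_i(\w{\dl{t},k}^i)}^2$ is bounded via smoothness (\Cref{lemma:smooth}) and absorbed into the descent using $\teta \leq 1/(25L)$, which tightens the descent coefficient from $2$ down to $44/25$.

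\textbf{Noise piece and cross term.} Since the $KN$ noises $\{e^i_{\dl{t},k}\}$ involved at round $t$ are conditionally independent given the past, the variance contribution of the fresh noises is at most $\tfrac{\teta^2 \sigma^2}{KN}$, and the excess in the stated $\tfrac{3\teta^2\sigma^2}{KN}$ absorbs the $\sigma^2$-proportional parts of the two displacement bounds above. A more delicate issue is the cross term $-\tfrac{2\teta}{KN}\knsum \EE{\dotp{e^i_{\dl{t},k}}{\w{t} - \wopt}}$: when $\tau(t,i) > 0$, $\w{t}$ depends on $e^i_{\dl{t},k}$ because that noise already entered updates during the inactive window. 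Decomposing $\w{t} - \wopt = (\w{\dl{t}} - \wopt) + (\w{t} - \w{\dl{t}})$, the inner product with $e^i_{\dl{t},k}$ has zero expectation against the first summand, so only the second contributes. Expanding this second summand as a sum of updates over $j \in [\dl{t}, t-1]$ and bounding via Cauchy--Schwarz produces both the $\tfrac{2\teta\sigma^2}{KN^2}\nsum \sum_{j=\dl{t}}^{t-1}\teta[j]$ term (from pairing $e^i_{\dl{t},k}$ directly with the matching noises in past updates) and the two square-root summands of $\calS\mathcal{Q}$ (from pairing with the global-drift and local-drift contributions of past updates, bounded by \Cref{lemma:diff} and \Cref{lemma: bounded drift} respectively).

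\textbf{Main obstacle.} The hardest part is the \emph{nested} staleness: the updates inside the inactive window $[\dl{t}, t-1]$ are themselves driven by stale stochastic gradients at earlier points $\w{\pdl{j},k'}^{i'}$, so \Cref{lemma: bounded drift} has to be iterated in a disciplined way and the induced noise cross-correlations tracked without double-counting. Achieving exactly the stated prefactor $53/50$ on $\calH$ (rather than a looser constant) requires a judicious choice of Young-inequality weights both when splitting $\norm{\w{\dl{t},k}^i - \w{t}}^2$ into local and global drift, and when absorbing the quadratic gradient sum via smoothness; this is where the step-size condition $\teta \leq 1/(25L)$ enters quantitatively.
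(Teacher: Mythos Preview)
Your high-level plan is the paper's plan: expand $\norm{\w{t+1}-\wopt}^2$, split $\sg f_i$ into mean and noise, apply \Cref{lemma: perturb} to the true-gradient cross term to produce $-2\teta(\EE{f(\w t)}-f(\wopt))-\tfrac{\mu\teta}{2}\Delta_t$ plus the displacement error $\mathcal{C}_1:=\tfrac{2L\teta}{KN}\knsum\norm{\w{\dl{t},k}^i-\w t}^2$, bound $\mathcal{C}_1$ by decomposing $\w{\dl{t},k}^i-\w t$ into local drift and a sum of stale updates over the inactive window, and handle the noise cross term by expanding $\w t-\w{\dl{t}}$. Your account of the noise cross term (yielding the $\sigma^2$--sum and the two $\mathcal{SQ}$ roots via Cauchy--Schwarz) is exactly what the paper does. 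Two attributions of constants are off, though, and the first one is a step that would not work as you wrote it.

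The prefactor $\tfrac{53}{50}$ on $\mathcal{H}$ does \emph{not} arise from Young weights in the displacement split; it comes from the square term $\mathcal{A}_2=\tfrac{\teta^2}{K^2N^2}\norm{\knsum\sg f_i(\w{\dl{t},k}^i)}^2$. You cannot bound the ``deterministic quadratic term'' $\norm{\knsum\nabla f_i(\w{\dl{t},k}^i)}^2$ directly by \Cref{lemma:smooth}, because that lemma applies to $\norm{\nabla f(w)}^2$ at a \emph{single} point, whereas here the gradients sit at heterogeneous stale local iterates. The paper instead writes $\sg f_i(\w{\dl{t},k}^i)=(\nabla f_i(\w{\dl{t},k}^i)-\nabla f_i(\w t))+\nabla f_i(\w t)+e^i_{\dl{t},k}$ and three-way splits $\mathcal{A}_2$ with Jensen factor $3$. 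The $\nabla f_i(\w t)$ piece gives $3\teta^2\norm{\nabla f(\w t)}^2\le 6L\teta^2(\EE{f(\w t)}-f(\wopt))$, which turns the descent coefficient into $-2\teta(1-3L\teta)\le -\tfrac{44}{25}\teta$. The noise piece gives $\tfrac{3\teta^2\sigma^2}{KN}$ directly---the $3$ is this Jensen factor, not absorption of other $\sigma^2$ terms. The gradient-difference piece is bounded by $\tfrac{3L^2\teta^2}{KN}\knsum\norm{\w{\dl{t},k}^i-\w t}^2=\tfrac{3}{2}L\teta\,\mathcal{C}_1\le\tfrac{3}{2}L\teta\,\mathcal{H}$; since $\tfrac{3}{2}L\teta\le\tfrac{3}{50}$ the total $\mathcal{H}$-coefficient is $1+\tfrac{3}{50}=\tfrac{53}{50}$.

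Separately, to land exactly on the printed constants in $\mathcal{H}$, the paper splits $\w{\dl{t},k}^i-\w t$ into \emph{four} summands at once (local drift; accumulated $\nabla f_{i'}(\w{\pdl{j},k'}^{i'})-\nabla f_{i'}(\w{\pdl{j}})$; accumulated $\nabla f_{i'}(\w{\pdl{j}})$; accumulated noise) with a single Jensen factor $4$. Your proposed $2$-then-$3$ split would halve the local-drift constants and inflate the global-drift constants by $\tfrac{3}{2}$ relative to the stated $\mathcal{H}$, so while the structure would be right, the numerics would not match the lemma as written.
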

\paragraph{Proof of the descent lemma.}
According to the update rule in \eqref{eq:update rule}, we can expand $\norm{\w{t+1}-\wopt}^2$ as
\begin{align*}
    \norm{\w{t+1} - \wopt}^2 &= \norm{\w{t} - \wopt- \frac{\teta}{KN}\knsum \sg f_i(\w{\dl{t}, k}^i)}^2\\
    & = \norm{\w{t}-\wopt}^2 \underbrace{- \frac{2\teta}{KN}\knsum \dotp{\sg f_i(\w{\dl{t}, k}^i)}{\w{t}-\wopt}}_{\calA{1}}\\
    & \quad +\underbrace{\frac{\teta^2}{K^2N^2}\norm{\knsum \sg f_i(\w{\dl{t}, k}^i)}^2}_{\calA{2}}.
\end{align*}
To bound the expectation of $\norm{\w{t+1}-\wopt}^2$, we bound expectations of $\calA{1}$ and $\calA{2}$ respectively.
\subsubsection{Bounding the first term}\label{sec: A1}
Note that $\sg f_i(\w{\dl{t},k}^i)$ can be expanded as $\nabla f_i(\w{\dl{t},k}^i) + e^i_{\dl{t},k}$. Thus, $\calA{1}$ can be split as
\begin{align*}
    \calA{1}  &  = -\frac{2\teta}{KN}\knsum \dotp{\nabla f_i(\w{\dl{t}, k}^i)}{\w{t}-\wopt}  -\frac{2\teta}{KN}\knsum \dotp{e^i_{\dl{t}, k}}{\w{t}-\wopt}.
\end{align*}
Due to reuse of noisy updates, $e^i_{\dl{t},k}$ is correlated with $\w{t}$ and $\EE{\dotp{e^i_{\dl{t}, k}}{\w{t}-\wopt}}$ is not necessarily zero. Further expanding $\w{t}-\wopt$ as $(\w{t}-\w{\dl{t},k}^i)+(\w{\dl{t},k}^i-\wopt)$, we obtain
\begin{align*}
      \calA{1} &=\underbrace{ -\frac{2\teta}{KN}\knsum \dotp{\nabla f_i(\w{\dl{t}, k}^i)}{\w{t}-\wopt}}_{\calB{1}}\underbrace{-\frac{2\teta}{KN}\knsum \dotp{e^i_{\dl{t}, k}}{\w{t}-\w{\dl{t}, k}^i}}_{\calB{2}} \\
    &\quad \underbrace{- \frac{2\teta}{KN}\knsum \dotp{e^i_{\dl{t}, k}}{\w{\dl{t}, k}^i-\wopt}}_{\calB{3}}.
\end{align*}
Due to independence of $e^i_{\dl{t}, k}$ and $\w{\dl{t}, k}^i$, we have $\EE{\calB{3}}=0$.
By \Cref{lemma: perturb}, 
\begin{align*}
   \calB{1} & \leq -2\teta \left(f(\w{t}) - f(\wopt)\right) - \frac{\mu\teta}{2}\norm{\w{t}-\wopt}^2
   + \underbrace{\frac{2L\teta}{KN}\knsum \norm{\w{\dl{t}, k}^i-\w{t}}^2}_{\calC{1}}.
\end{align*}
To estimate the bound for $\calC{1}$, we take a closer look at one summand of $\calC{1}$. Note that $(\w{\dl{t}, k}^i - \w{t})$ can be split in the following way.
\begin{align*}
    & \quad \w{\dl{t}, k}^i - \w{t} \\
    &= \w{\dl{t}, k}^i - \w{\dl{t}} + \w{\dl{t}} - \w{t}\\
    &= \w{\dl{t}, k}^i - \w{\dl{t}} - \frac{1}{KN}\sum_{j=\dl{t}}^{t-1} \teta[j]\pknsum \sg f_{i'}(\w{\pdl{j}, k'}^{i'})\\
    & = \w{\dl{t}, k}^i - \w{\dl{t}} - \frac{1}{KN}\sum_{j=\dl{t}}^{t-1} \teta[j]\pknsum\left( \nabla  f_{i'}(\w{\pdl{j}, k'}^{i'})-\nabla f_{i'}(\w{\pdl{j}})\right)\\
    &\quad-\frac{1}{KN}\sum_{j=\dl{t}}^{t-1}\teta[j]\psum \nabla f_{i'}(\w{\pdl{j}}) - \frac{1}{N}\sum_{j=\dl{t}}^{t-1} \teta[j]\pknsum e^{i'}_{\pdl{j}}.
\end{align*}
By Jensen's inequality, we expand $\norm{\w{\dl{t}, k}^i - \w{t}}^2$ as four parts.
\begin{align*}
     \norm{\w{\dl{t}, k}^i - \w{t}}^2 &\leq \underbrace{4\norm{\w{\dl{t},k}^i - \w{\dl{t}}}^2}_{\calD{1}} \\
     & \quad + \underbrace{4\norm{\frac{1}{KN}\sum_{j=\dl{t}}^{t-1} \teta[j]\left[\pknsum\left( \nabla  f_{i'}(\w{\pdl{j}, k'}^{i'})-\nabla f_{i'}(\w{\pdl{j}})\right)\right]}^2}_{\calD{2}}\\
     & \quad +\underbrace{ 4\norm{\frac{1}{KN}\sum_{j=\dl{t}}^{t-1}\teta[j]\pknsum \nabla f_{i'}(\w{\pdl{j}})}^2}_{D_3} \\
     &\quad + \underbrace{4\norm{\frac{1}{KN}\sum_{j=\dl{t}}^{t-1} \teta[j]\pknsum e^{i'}_{\pdl{j},k'}}^2}_{\calD{4}}.
\end{align*}
According to \Cref{lemma: bounded drift}, for $k=0$, $\calD{1}=0$. For $k\geq 1$,
\begin{align*}
    \EE{\calD{1}} & \leq \frac{32 L^2(K-1)\teta[\dl{t}]^2}{K} \Delta_{\dl{t}}\\
    &\quad + \frac{32(K-1)\teta[\dl{t}]^2}{K}\norm{\nabla f_i(\wopt)}^2+\frac{8(K-1)\teta[\dl{t}]^2\sigma^2}{K^2}.
\end{align*}
Repeatedly applying Jensen's inequality and further using $L$-smoothness,
\begin{align*}
    \EE{\calD{2}} & \leq \frac{4\tau(t,i)}{K^2N^2}\sum_{j=\dl{t}}^{t-1}\teta[j]^2\norm{\pknsum\left( \nabla  f_{i'}(\w{\pdl{j}, k'}^{i'})-\nabla f_{i'}(\w{\pdl{j}})\right)}^2\\
     & \leq \frac{4\tau(t,i)}{K N}\sum_{j=\dl{t}}^{t-1}\teta[j]^2\pknsum\norm{ \nabla  f_{i'}(\w{\pdl{j}, k'}^{i'})-\nabla f_{i'}(\w{\pdl{j}})}^2\\
    & \leq \frac{4L^2\tau(t,i)}{KN} \sum_{j=\dl{t}}^{t-1}\teta[j]^2 \pknsum \norm{\w{\pdl{j}, k'}^{i'}-\w{\pdl{j}}}^2.
\end{align*}
By \Cref{lemma: bounded drift},
\begin{align*}
    \EE{\calD{2}} &\leq \frac{32L^4\tau(t,i)(K-1)^2}{K^2N}\sum_{j=\dl{t}}^{t-1}\teta[j]^2 \psum \teta[\pdl{j}]^2\Delta_{\pdl{j}}\\
    & \quad  + \frac{32L^2\tau(t,i)(K-1)^2}{K^2N}\sum_{j=\dl{t}}^{t-1}\teta[j]^2\psum \teta[\pdl{j}]^2 \norm{\nabla f_{i'}(\wopt)}^2\\
    & \quad + \frac{8L^2\tau(t,i)(K-1)^2\sigma^2}{K^3 N}\sum_{j=\dl{t}}^{t-1}\teta[j]^2\psum \teta[\pdl{j}]^2. 
\end{align*}
Expanding $\calD{3}$ by Jensen's inequality and applying \Cref{lemma:diff},
\begin{align*}
    \EE{\calD{3}} &\leq \frac{4\tau(t,i)}{KN}\sum_{j=\dl{t}}^{t-1}\teta[j]^2\pknsum \norm{\nabla f_{i'}(\w{\dl{j}})}^2\\
    & \leq \frac{8\tau(t,i)L^2}{N}\sum_{j=\dl{t}}^{t-1}\teta[j]^2 \psum \Delta_{\dl{j}} + 8\tau(t,i) D \sum_{j=\dl{t}}^{t-1}\teta[j]^2.
\end{align*}
Due to independence of $e^i_{j,k}$ and $e^{i'}_{j,k'}$ for $i'\neq i$ or $k\neq k'$, $\EE{\norm{\pknsum e^{i'}_{\pdl{j}, k'}}^2}\leq KN\sigma^2$. Still by Jensen's inequality, the expectation of $\calD{4}$ can be bounded as follows.
\begin{align*}
    \EE{\calD{4}}& \leq \frac{4\tau(t,i) }{K^2N^2}\sum_{j=\dl{t}}^{t-1}\norm{\teta[j]\pknsum e^{i'}_{\pdl{j}, k'}}^2\leq \frac{4\tau(t,i)\sigma^2}{KN}\sum_{j=\dl{t}}^{t-1}\teta[j]^2.
\end{align*}
Intuitively, $\calD{1}$ quantifies the drift induced by multiple local steps. $\calD{3}$ and $D_4$ correspond to errors caused by inactivity.  $\calD{2}$ is induced by both local steps and inactivity. Note that $\calD{2}$ to $\calD{4}$ vanish when $\tau(t,i) = 0$ and $\calD{1}$ and that $\calD{2}$ vanish when $K=1$. Combining the expectation of $\calD{1}$ to $\calD{4}$, we have
\begin{align*}
    &\quad \EE{\norm{\w{\dl{t}, k}^i - \w{t}}^2 } \\
    &\leq  \frac{32 L^2(K-1)^2\teta[\dl{t}]^2}{K^2} \Delta_{\dl{t}}
    + \frac{8\tau(t,i)L^2}{N}\sum_{j=\dl{t}}^{t-1}\teta[j]^2 \psum \Delta_{\dl{j}}\\
    &\quad +  8\tau(t,i) D \sum_{j=\dl{t}}^{t-1}\teta[j]^2 + \frac{32(K-1)^2\teta[\dl{t}]^2}{K^2}\norm{\nabla f_i(\wopt)}^2\\
    &\quad +\frac{8(K-1)^2\teta[\dl{t}]^2\sigma^2}{K^3} + \frac{4\tau(t,i)\sigma^2}{KN}\sum_{j=\dl{t}}^{t-1}\teta[j]^2\\
    &\quad +\frac{32L^4\tau(t,i)(K-1)^2}{K^2N}\sum_{j=\dl{t}}^{t-1}\teta[j]^2 \psum \teta[\pdl{j}]^2\Delta_{\pdl{j}}\\
    & \quad  + \frac{32L^2\tau(t,i)(K-1)^2}{K^2N}\sum_{j=\dl{t}}^{t-1}\teta[j]^2\psum \teta[\pdl{j}]^2 \norm{\nabla f_{i'}(\wopt)}^2\\
    & \quad + \frac{8L^2\tau(t,i)(K-1)^2\sigma^2}{K^3 N}\sum_{j=\dl{t}}^{t-1}\teta[j]^2\psum \teta[\pdl{j}]^2 .
\end{align*} 
Since when $i$ and $t$ are fixed, $\EE{\norm{\w{\dl{t}, k}^i - \w{t}}^2 }$ can be uniformly bounded for all $0 \leq k \leq K-1 $, we can bound the expectation of $\calC{1}$.
\begin{align*}
    &\quad\EE{\calC{1}}\\
    &\leq\frac{64L^3(K-1)^2}{K^2N}\teta \nsum \teta[\dl{t}]^2\Delta_{\dl{t}}+\frac{16L^3}{N^2}\teta \nsum \tau(t,i) \left(\sum_{j=\dl{t}}^{t-1}\psum \teta[j]^2 \Delta_{\dl{j}}\right)\\
    &\quad + \frac{16L D}{N}\teta \nsum \tau(t,i)\left(\sum_{j=\dl{t}}^{t-1}\teta[j]^2 \right) + \frac{64(K-1)^2 L}{K^2 N}\teta \nsum \teta[\dl{t}]^2\norm{\nabla f_i(\wopt)}^2\\
    &\quad + \frac{16(K-1)^2L\sigma^2}{K^3 N}\teta \nsum \teta[\dl{t}]^2
    + \frac{8L\sigma^2}{KN}\teta\nsum \tau(t,i)\left(\sum_{j=\dl{t}}^{t-1}\teta[j]^2 \right)\\
    &\quad +\frac{64L^5(K-1)^2}{K^2N^2}\teta\nsum \tau(t,i)\left( \sum_{j=\dl{t}}^{t-1} \psum \teta[j]^2\teta[\pdl{j}]^2\Delta_{\pdl{j}}\right)\\
    & \quad  + \frac{64L^3(K-1)^2}{K^2N^2}\teta\nsum\tau(t,i)\left(\sum_{j=\dl{t}}^{t-1}\psum \teta[j]^2 \teta[\pdl{j}]^2 \norm{\nabla f_{i'}(\wopt)}^2\right)\\
    & \quad + \frac{16L^3(K-1)^2\sigma^2}{K^3 N^2}\teta\nsum\tau(t,i)\left(\sum_{j=\dl{t}}^{t-1}\teta[j]^2\psum \teta[\pdl{j}]^2 \right).
\end{align*}
Here we denote the RHS of the above inequality as $\calH$. Therefore,
\begin{align}
    \EE{\calB{1}}\leq -\frac{1}{2}\mu\teta\Delta_t-2\teta \left(\EE{f(\w{t})} - f(\wopt)\right) + \calH.
\end{align}\label{ineq:B1}
Next we estimate the bound for $\calB{2}. $Unrolling one summand of $\calB{2}$,
\begin{align*}
-\dotp{e^i_{\dl{t}, k}}{\w{t}-\w{\dl{t}, k}^i} &=  \underbrace{-\dotp{e^i_{\dl{t}, k}}{\w{t}-\w{\dl{t}}}}_{\calC{2}}\\
&\quad \underbrace{-\dotp{e^i_{\dl{t}, k}}{ \w{\dl{t}}
 -\w{\dl{t}, k}^i}}_{\calC{3}}.
\end{align*}
Due to independence of $e^i_{\dl{t}, k}$ and $\w{\dl{t}}
 -\w{\dl{t}, k}^i$, $\EE{\calC{3}}=0$.
Then we turn to $\calC{2}$,
\begin{align*}
    \calC{2} &= \frac{1}{KN}\dotp{e_{\dl{t}, k}^i}{\sum_{j=\dl{t}}^{t-1}\teta[j]\pknsum \left(\nabla f_{i'}(\w{\pdl{j}, k'-1}^{i'})+e_{\pdl{j}, k'}^{i}\right)}\\
   & =  \underbrace{\frac{1}{KN} \dotp{e_{\dl{t}, k}^i}{\sum_{j=\dl{t}}^{t-1}\teta[j] e_{\dl{j}, k}^i}}_{\calD{5}} \\
   &\quad + \underbrace{\frac{1}{KN}\dotp{e^i_{\dl{t}, k}}{\sum_{j=\dl{t}}^{t-1}\teta[j] \sum_{\substack{k'\neq k\\ \text{or\ } i'\neq i}}e_{\pdl{j}, k'}^{i'}}}_{\calD{6}}\\
    & \quad +\underbrace{\frac{1}{KN}\dotp{e_{\dl{t}, k}^i}{\sum_{j=\dl{t}}^{t-1}\teta[j]\pknsum \nabla f_{i'}(\w{\pdl{j}, k'}^{i'})}}_{{\calD{7}}}.
\end{align*}
Applying the identity $e^i_{\dl{t}, k} = e^i_{\dl{t-1},k}=\cdots=e^i_{\dl{\dl{t}}, k}$, the expectation of $\calD{5}$ can be bounded by
\begin{align*}
   \EE{\calD{5}} \leq \frac{\sigma^2}{KN}\sum_{j=\dl{t}}^{t-1}\teta[j].
\end{align*}
Due to independence of $e^i_{j,k}$ and $e^{i'}_{j,k'}$ for $i'\neq i$ or $k\neq k'$, $\EE{\calD{6}}=0$. Note that $\nabla f_{i'}(\w{\pdl{j}, k'}^{i'})$ can be split as $(\nabla f_{i'}(\w{\pdl{j}, k'}^{i'}) - \nabla f_{i'}(\w{\pdl{j}}))+(\nabla f_{i'}(\w{\pdl{j}}) - \nabla f_{i'}(\wopt))$, where the first part is the difference between the gradient on the local parameter and on the global parameter, and the second part is the difference between the gradient on the global parameter and  on the global optimum. By Cauchy-Schwartz inequality $\EE{\dotp{X}{Y}}\leq \sqrt{\EE{\norm{X}^2}\EE{\norm{Y}^2}}$,  we bound the expectation of $\calD{7}$,
\begin{align*}
   &\quad\EE{\calD{7} }\\
   &= \frac{1}{KN}\EE{\dotp{e_{\dl{t}, k}^i}{\sum_{j=\dl{t}}^{t-1}\teta[j]\pknsum \left(\nabla f_{i'}(\w{\pdl{j}, k'}^{i'}) - \nabla f_{i'}(\w{\pdl{j}}) \right)} }\\
   &\quad + \frac{1}{KN}\EE{\dotp{e_{\dl{t}, k}^i}{\sum_{j=\dl{t}}^{t-1}\teta[j]\pknsum \left(\nabla f_{i'}(\w{\pdl{j}}) - \nabla f_{i'}(\wopt) \right)} }\\
   & \leq \frac{1}{KN} \sigma \sqrt{\EE{\norm{\sum_{j=\dl{t}}^{t-1}\teta[j]\pknsum \left(\nabla f_{i'}(\w{\pdl{j}, k'}^{i'}) - \nabla f_{i'}(\w{\pdl{j}}) \right)}^2 }}\\
   & \quad+ \frac{1}{KN}\sigma \sqrt{\EE{\norm{\sum_{j=\dl{t}}^{t-1}\teta[j]\pknsum \left( \nabla f_{i'}(\w{\pdl{j}})-\nabla f_{i'}(\wopt) \right)}^2 }}.
\end{align*}
By Jensen's inequality and $L$-smoothness, the term inside the first square root can be bounded as follows.
\begin{align*}
    & \quad \norm{\sum_{j=\dl{t}}^{t-1}\teta[j]\pknsum \left(\nabla f_{i'}(\w{\pdl{j}, k'}^{i'}) - \nabla f_{i'}(\w{\pdl{j}}) \right)}^2\\
    &  \leq \tau(t,i)\sum_{j=\dl{t}}^{t-1}\teta[j]^2\norm{\pknsum \left(\nabla f_{i'}(\w{\pdl{j}, k'}^{i'}) - \nabla f_{i'}(\w{\pdl{j}}) \right)}^2 \\
    & \leq KN\tau(t,i)\sum_{j=\dl{t}}^{t-1}\teta[j]^2 \left(\pknsum\norm{\nabla f_{i'}(\w{j,k'}^{i'}) - \nabla f_{i'}(\w{j})}^2 \right)\\
    & \leq KNL^2 \tau(t,i)\sum_{j=\dl{t}}^{t-1}\teta[j]^2 \left(\pknsum\norm{\w{\pdl{j},k'}^{i'}- \w{\pdl{j}}}^2\right).
\end{align*}
Similarly, the term inside the second square root can be bounded as follows.
\begin{align*}
    &\quad \norm{\sum_{j=\dl{t}}^{t-1}\teta[j]\pknsum \left( \nabla f_{i'}(\w{\pdl{j}})-\nabla f_{i'}(\wopt) \right)}^2 \\
    & \leq  K^2NL^2 \tau(t,i)\sum_{j=\dl{t}}^{t-1}\teta[j]^2 \left(\psum\norm{\w{\pdl{j}} - \wopt}^2\right).
\end{align*}
Therefore,
\begin{align*}
    &\quad\EE{\calD{7}}\\
   &\leq \sigma L \sqrt{\frac{\tau(t,i)}{KN}\sum_{j=\dl{t}}^{t-1}\teta[j]^2\pknsum\EE{ \norm{\w{\pdl{j},k'}^{i'}- \w{\pdl{j}}}^2}}\\
   &\quad + \sigma L \sqrt{\frac{\tau(t,i)}{N}\sum_{j=\dl{t}}^{t-1}\teta[j]^2 \left(\psum \Delta_{\pdl{j}}\right)}\\
   &\leq \sigma L \sqrt{\frac{\tau(t,i)(K-1)^2}{K^2N}\sum_{j=\dl{t}}^{t-1}\teta[j]^2\psum \teta[\pdl{j}]^2\left(8L^2\Delta_{\pdl{j}}+8\norm{\nabla f_{i'}(\wopt)}^2+\frac{2\sigma^2}{K}\right)}\\
   &\quad + \sigma L \sqrt{\frac{\tau(t,i)}{N}\sum_{j=\dl{t}}^{t-1}\teta[j]^2 \left(\psum \Delta_{\pdl{j}}\right)},
\end{align*}
where the second inequality uses \Cref{lemma: bounded drift}. Combining the expectation of $\calD{5}$ to $\calD{7}$, we have
\begin{align*}
   &\quad \EE{\calC{2} }\\
   &\leq \frac{\sigma^2}{KN}\sum_{j=\dl{t}}^{t-1}\teta[j]+\sigma L \sqrt{\frac{\tau(t,i)}{N}\sum_{j=\dl{t}}^{t-1}\teta[j]^2 \left(\psum \Delta_{\pdl{j}}\right)}\\
   & \quad+ \sigma L \sqrt{\frac{\tau(t,i)(K-1)^2}{K^2N}\sum_{j=\dl{t}}^{t-1}\teta[j]^2\psum \teta[\pdl{j}]^2\left(8L^2\Delta_{\pdl{j}}+8\norm{\nabla f_i(\wopt)}^2+\frac{2\sigma^2}{K}\right)}.
\end{align*}
The first term can be interpreted as the accumulated noise due to reuse of noisy gradients. The expression inside the square root of the second term stands  for the effect of inactivity and it vanishes when $\tau(t,i)=0$. The expression inside the square root of the second term stands for the effect of unavailability and local updates and it vanishes when $\tau(t,i) = 0$ or $K=1$. Then the expectation of $\calB{2}$ can be bounded by
\begin{align}
\begin{aligned}
   &\quad \EE{\calB{2} }\\
   &\leq \frac{2\teta\sigma^2}{KN^2}\nsum\sum_{j=\dl{t}}^{t-1}\teta[j]+\underbrace{\frac{2\sigma L\teta}{N}\nsum \sqrt{\frac{\tau(t,i)}{N}\sum_{j=\dl{t}}^{t-1}\teta[j]^2 \left(\psum \Delta_{\pdl{j}}\right)}}_{\calS\mathcal{Q}_1}+\\
   & \underbrace{\frac{2\sigma L\teta}{N}\nsum \sqrt{\frac{\tau(t,i)(K-1)^2}{K^2N}\sum_{j=\dl{t}}^{t-1}\teta[j]^2\psum \teta[\pdl{j}]^2\left(8L^2\Delta_{\pdl{j}}+8\norm{\nabla f_i(\wopt)}^2+\frac{2\sigma^2}{K}\right)}}_{\calS\mathcal{Q}_2}.
\end{aligned}\label{ineq:B2}
\end{align}
Combining \eqref{ineq:B1} and \eqref{ineq:B2}, we bound the expectation of $\calA{1}$.
\begin{align*}
     \EE{\calA{1} } 
    & \leq \EE{\calB{1} }+ \EE{\calB{2} }\\
    &\leq  - \frac{\mu\teta}{2}\Delta_t -2\teta \left(\EE{f(\w{t})} - f(\wopt)\right) +\frac{2\teta\sigma^2}{KN^2}\nsum\sum_{j=\dl{t}}^{t-1}\teta[j] + \calH + \calS\mathcal{Q},
\end{align*}
where $\calS\mathcal{Q} = \calS\mathcal{Q}_1+\calS\mathcal{Q}_2$.
\subsubsection{Bounding the second term}\label{sec: A2}
Note that $\sg f_i(\w{\dl{t}, k}^i)$ can be split into three terms, i.e., 
\begin{align*}
    \sg f_i(\w{\dl{t}, k}^i) 
    & =\left( \nabla f_i(\w{\dl{t}, k}^i) - \nabla f_i(\w{t}) \right) +\nabla  f_i(\w{t}) + e^i_{\dl{t}, k}.
\end{align*}
By Jensen's inequality,
\begin{align*}
    \EE{\calA{2} } &\leq \underbrace{\frac{3\teta^2}{K^2N^2}\EE{\norm{\knsum \left( \nabla f_i(\w{\dl{t}, k}^i)-\nabla f_i(\w{t})\right)}^2}}_{\calB{4}}\\
    & \quad +  \underbrace{\frac{3\teta^2}{K^2N^2}\EE{\norm{\knsum \nabla f_i(\w{t})}^2 }}_{\calB{5}} + \underbrace{\frac{3\teta^2}{K^2N^2}\EE{\norm{\knsum e_{\dl{t},k}^i}^2}}_{\calB{6}}.
\end{align*}
Due to independence of $e^i_{\dl{t},k}$ and $e^{i'}_{\dl{t},k'}$ for $i\neq i'$ or $k\neq k'$, we have $\calB{6} \leq \frac{3\teta^2\sigma^2}{KN}$. 
Recall $\calC{1} = \frac{2L\teta}{KN}\knsum \norm{\w{\dl{t}, k}^i - \w{t}}^2$. By Jensen's inequality and $L$-smoothness,
 we then bound $\calB{4}$.
\begin{align*}
    \calB{4} 
    & \leq \frac{3L^2\teta^2}{KN}\knsum \EE{\norm{ \w{\dl{t}, k}^i)-\w{t}}^2}=\frac{3}{2}L\teta\EE{\calC{1}}\leq \frac{3}{2}L\teta \calH.
\end{align*}
By \Cref{lemma:smooth}, we have
\begin{align*}
    \calB{5} &\leq 3\teta^2 \EE{\norm{\nabla f(\w{t})}^2 }  \leq 6L\teta^2 \left(\EE{f(\w{t})} - f(\wopt) \right).
\end{align*}
Therefore
\begin{align*}
   \EE{\calA{2} } \leq \frac{3}{2}L\teta\calH + \frac{3\teta^2\sigma^2}{KN} +  6L\teta^2 \left(\EE{f(\w{t}) } - f(\wopt) \right).
\end{align*}
Combining \Cref{sec: A1} and \Cref{sec: A2}, we have
\begin{align*}
   \Delta_{t+1} 
   &\leq \left(1-\frac{1}{2}\mu \teta\right)\Delta_t-2\teta\left(1-3L\teta\right)\left(\EE{f(\w{t})}-f(\wopt)\right) \\
    &\quad + \frac{2\teta\sigma^2}{KN^2}\nsum\sum_{j=\dl{t}}^{t-1}\teta[j] + \frac{3\teta^2\sigma^2}{KN}
    +\left(1+\frac{3}{2}L\teta\right)\calH+\calS\mathcal{Q}.
\end{align*}
Since when $\teta \leq \frac{1}{25L}$, $-2\teta (1-3L\teta) \leq -\frac{44}{25}\teta$ and $\frac{3}{2}L\teta\leq \frac{3}{50}$,   \Cref{lemma:descent lemma} holds.

\subsection{Deriving the convergence bound}
In this subsection, we obtain \Cref{theorem:maintheorem_general} based on the descent lemma. We provide a bound for $\Delta_t$ in \Cref{sec:w} $\forall 1 \leq t \leq T$  and further bound $\EE{f(\taubart)}-f(\wopt)$ in \Cref{sec:proof_general}.
\subsubsection{Bounding the distance from the global optimum}\label{sec:w}
\begin{lemma}[A bound for the expected squared $l_2$-distance from the global optimum]\label{lemma:delta}  Assume that Assumptions~\ref{assumption:smooth} to~\ref{assumption:cvx} hold. Further assume that the device availability sequence $\tau(t,i)$ satisfies \Cref{assumption:delay} and $\tau(t,i)=0$, for all $i\in[N]$. By setting the learning rate $\eta_t = \frac{4}{\mu K(t+a)}$ with $a=\max\{100, 40t_0\}(\frac{L}{\mu})^{1.5}$. For all $ 1\leq t \leq T$, after $ t-1$ communication rounds, $\Delta_{t}$  satisfies:
\begin{align}\label{ineq:delta}
    \Delta_{t} \leq \frac{E s_{t}\sigma^2}{ (t+a)^2} + \frac{G}{t+a} + \frac{F}{(t+a)^2} := B_t,
\end{align}
where 
\begin{align*}
E&=\frac{35\sigma^2}{\mu^2 N K}, G=\frac{32\sigma^2}{\mu^2 N K}, F = \frac{\dmaxt C_1 + (K-1)^2C_2+C_3}{\mu^3 K^2},
\end{align*}
and 
\begin{align*}
    C_1 & = 2500L K^2(D+2L \sigma^2/\mu), 
  C_2 = 5000L(D+\sigma^2/K), C_3  = \max\{1600t_0^2, 10000\} L^3 K^2\Delta_1^2.
\end{align*}
\end{lemma}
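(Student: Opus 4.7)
The plan is to prove $\Delta_t \leq B_t$ by \emph{strong} induction on $t$. Strong induction is required because the descent recursion of Lemma~\ref{lemma:descent lemma} depends not only on $\Delta_t$ but also on the past distances $\Delta_{t-\tau(t,i)}$, $\Delta_{\dl{j}}$, and $\Delta_{\pdl{j}}$ with $j\in[\dl{t},t-1]$, which can reach back as far as roughly $t-2\tau_{\max,T}$ rounds.

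For the base case $t=1$, the assumption $\tau(1,i)=0$ forces $s_1=0$, so it suffices to verify $\Delta_1 \leq G/(1+a) + F/(1+a)^2$. The constant $C_3$ inside $F$ carries a factor $\max\{1600t_0^2,10000\}L^3 K^2$, and with $a \geq 100(L/\mu)^{1.5}$ the denominator $(1+a)^2$ is comparable to $10^4 L^3/\mu^3$ up to an absolute constant. A direct numerical comparison then yields $F/(1+a)^2 \geq \Delta_1$.

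For the inductive step, I would assume $\Delta_j\leq B_j$ for all $j\leq t$ and apply Lemma~\ref{lemma:descent lemma} to $\Delta_{t+1}$. First, drop the non-positive suboptimality term $-\tfrac{44}{25}\tilde\eta_t(\EE{f(w_t)}-f(w_*))\leq 0$. Then handle the remaining pieces in three stages. (i) Replace every past distance $\Delta_{\dl{t}},\Delta_{\dl{j}},\Delta_{\pdl{j}}$ by its $B$-bound using the induction hypothesis; Assumption~\ref{assumption:delay} together with $a\geq 40t_0(L/\mu)^{1.5}$ and $b=40(L/\mu)^{1.5}$ gives $\tau(t,i)\leq (t+a)/b$, hence $\dl{t}+a \geq (1-1/b)(t+a)$, so $B_{\dl{t}}\leq (1-1/b)^{-2}B_t$, and similarly for doubly-delayed indices. (ii) Evaluate the learning-rate sums $\sum_{j=\dl{t}}^{t-1}\tilde\eta_j^p$ via the explicit form $\tilde\eta_j = \tfrac{4}{\mu(j+a)}$ and the length bound $\tau(t,i)$, producing terms of order $\tau(t,i)\tilde\eta_t^p$ up to $(1-1/b)^{-p}$ factors. (iii) Tame the square-root terms in $\mathcal{SQ}$ by Jensen's inequality $\tfrac{1}{N}\sum_i\sqrt{X_i}\leq\sqrt{\tfrac{1}{N}\sum_i X_i}$ followed by AM-GM $2\sqrt{ab}\leq \lambda a + b/\lambda$, with $\lambda$ chosen so that the $\lambda a$ portion is absorbed into the contraction $(1-\mu\tilde\eta_t/2)\Delta_t$ and the $b/\lambda$ portion contributes at the $1/(t+a)^3$ scale.

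The hard part will be matching coefficients with the three components of $B_{t+1}$. The available slack is the discrete derivative $B_{t+1}-(1-\mu\tilde\eta_t/2)B_t$: the $G/(t+a)$ piece releases $\Theta(G/(t+a)^2)$, which must cover the variance noise $\tfrac{3\tilde\eta_t^2\sigma^2}{KN}$ and fixes $G=32\sigma^2/(\mu^2 NK)$; the $Es_t\sigma^2/(t+a)^2$ piece releases $\Theta(\sigma^2(s_{t+1}-s_t)/[NK(t+a)^2])$, which must cover $\tfrac{2\tilde\eta_t\sigma^2}{KN^2}\sum_i\sum_{j=\dl{t}}^{t-1}\tilde\eta_j$ and fixes $E$; and the $F/(t+a)^2$ piece releases $\Theta(F/(t+a)^3)$, which must absorb the whole bias bundle $\tfrac{53}{50}\mathcal{H}+\mathcal{SQ}$ after (i)--(iii) have reduced it to $O(1/(t+a)^3)$ times a linear combination of data-dissimilarity, local-step, and initial-error contributions, thereby fixing $C_1,C_2,C_3$. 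The choice $b=40(L/\mu)^{1.5}$ in Assumption~\ref{assumption:delay} is precisely tight enough that the $(1-1/b)^{-2}$ factors appearing through the induction hypothesis do not compound into a divergent constant, and the large numerical constants in $a$ and $C_3$ are what make the AM-GM absorption in step (iii) strict.
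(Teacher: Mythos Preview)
Your overall architecture---strong induction on $t$, relate $B_{t-m}$ and $\tilde\eta_{t-m}$ back to $B_t$ and $\tilde\eta_t$ via Assumption~\ref{assumption:delay}, reduce $\mathcal{H}$ and $\mathcal{SQ}$ to multiples of $\mu\tilde\eta_t B_t$ plus $O(\tilde\eta_t^3)$ remainders, then match those remainders against $B_{t+1}-(1-c\mu\tilde\eta_t)B_t$---is exactly what the paper does, and your base case and steps (i)--(iii) line up with it. But there is one genuine gap that breaks the induction with the constants as stated.

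You propose to \emph{drop} the term $-\tfrac{44}{25}\tilde\eta_t\bigl(\EE{f(w_t)}-f(w_*)\bigr)$ as nonpositive. The paper does \emph{not} drop it: it applies strong convexity, $\EE{f(w_t)}-f(w_*)\ge\tfrac{\mu}{2}\Delta_t$, to convert it into an extra $-0.88\,\mu\tilde\eta_t\Delta_t$, so that the effective contraction is $(1-1.38\,\mu\tilde\eta_t)\Delta_t$ rather than $(1-0.5\,\mu\tilde\eta_t)\Delta_t$. This extra contraction is not optional here. With $a$ and $b=40(L/\mu)^{1.5}$ fixed by the lemma one gets $\tau(t,i)\tilde\eta_t\le \mu^{1/2}/(10L^{3/2})$, which pins the $B_t$-proportional part of $\tfrac{53}{50}\mathcal{H}$ (the pieces containing $\Delta_{\dl{t}},\Delta_{\dl{j}},\Delta_{\pdl{j}}$, after substituting the induction hypothesis) at about $0.33\,\mu\tilde\eta_t B_t$; and with the given value of $C_1$ the two $\mathcal{SQ}$ terms cost another $\tfrac14\,\mu\tilde\eta_t B_t$. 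These coefficients cannot be reduced by retuning $E,G,F$---they come from the structure of the descent lemma together with the fixed $a,b$. Starting from only $0.5\,\mu\tilde\eta_t$ of contraction you are at a net $\approx -0.08\,\mu\tilde\eta_t$ after these absorptions and the induction does not close; even the bare discrete derivative $\tfrac{1}{(t+a)^2}-\tfrac{1}{(t+a+1)^2}\approx \tfrac{2}{(t+a)^3}$ versus $c\mu\tilde\eta_t\cdot\tfrac{1}{(t+a)^2}=\tfrac{4c}{(t+a)^3}$ already requires the residual $c$ to exceed $1/2$. The one-line fix is precisely the paper's: invoke strong convexity on the suboptimality term \emph{before} bounding anything else, so that the contraction budget is $1.38$ rather than $0.5$.
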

 We prove \Cref{lemma:delta} by induction. We first show that \eqref{ineq:delta} holds when $t=1$. Then assuming that $\Delta_{t'}\leq B_{t'}$ holds for all $ 1\leq t' \leq t$, we prove $\Delta_{t+1} \leq B_{t+1}$ by verifying
\begin{align}
    B_{t+1} \stackrel{(a)}{\geq} F(B_{t}, \teta) \stackrel{(b)}{\geq} \text{RHS of }  \eqref{ineq:descent lemma} \geq \Delta_{t+1},
\end{align}
where $F$ is a function of $B_t$ and $\teta$. To validate (b), we prove that for all $ 0\leq m \leq l_t$, $B_{t-m}$ and $\teta[t-m]$ can be bounded by $B_t$ and $\teta$ respectively in  \Cref{sec:relationship}. We simplify terms of higher degree in \Cref{sec:higher degree} and simplify terms with square roots in \Cref{sec:sqrt}. Finally, relation (a) is verified in \Cref{sec:verify(a)}. A formal proof is provided as follows. 
\paragraph{Proof of \Cref{lemma:delta}.} Note that \eqref{ineq:delta} holds trivially when $t=1$ since $\frac{C_3}{\mu^3 K^2} \geq a^2 \Delta_1$. Now we assume $\forall 1\leq t' \leq t$, $\Delta_{t'}\leq B_{t'}$ holds. 

\subsubsection{Connecting bounds and learning rates at different rounds}\label{sec:relationship}

According to  \Cref{assumption:delay}, $\tau(t,i) \leq t_0 + \frac{1}{40}t$, $l_t \leq 2t_0 + \frac{1}{20}t$. Combining with $t_0\leq \frac{1}{40} a$, we have
\begin{align*}
     \frac{1}{t+a-\tau(t,i)} \leq \frac{40}{39(t+a)}\text{\ and\ } \frac{1}{t+a-l_t}  \leq \frac{20}{19(t+a)}.
\end{align*}

Therefore, for all $0 \leq n \leq \tau(t,i)$, we have
\begin{align*}
  B_{t-n}& = \frac{s_{t-n}E }{(t+a-n)^2} + \frac{G}{t+a-n} +  \frac{F}{(t+a-n)^2} \\
  &\leq \frac{s_{t}E}{(t+a-\tau(t,i))^2} + \frac{G}{t+a-\tau(t,i)}+ \frac{F}{(t+a-\tau(t,i))^2}\\
  & \leq \left(\frac{40}{39}\right)^2 B_t.
\end{align*}
For all $0\leq m \leq l_t$,
\begin{align*}
    B_{t-m} \leq  \frac{s_{t}E}{(t+a-l_t)^2} + \frac{G}{t+a-l_t}+ \frac{F}{(t+a-l_t)^2}\leq \left(\frac{20}{19}\right)^2 B_t,
\end{align*}
and
\begin{align*}
 \teta[t-n] &\leq \teta[\dl{t}] \leq \frac{40}{39} \teta, \forall 0 \leq n \leq \tau(t,i),\\   
 \teta[t-m]& \leq \teta[t-l_t] \leq \frac{20}{19}\teta, \forall 0\leq m \leq l_t.
\end{align*}
Also, we have
\begin{align}
    \frac{2\teta \sigma^2}{KN^2}\nsum \sum_{j=\dl{t}}^{t-1}\teta[j] \leq \frac{80\tau_t \sigma^2}{39KN}\teta^2\label{ineq:sigma square}.
\end{align}
\subsubsection{Simplifying terms of higher degree}\label{sec:higher degree}
In this section, we simplify $\calH$ in \eqref{ineq:descent lemma} and bound it by $B_t$ and $\teta$. Rearranging $\calH$, we have
\begin{align*}
    \calH & =\underbrace{ \frac{16L^3}{N^2}\teta \nsum \tau(t,i) \left(\sum_{j=\dl{t}}^{t-1}\psum \teta[j]^2 \Delta_{\dl{j}}\right)}_{\calI[1]}+ \underbrace{\frac{16L D}{N}\teta \nsum \tau(t,i)\left(\sum_{j=\dl{t}}^{t-1}\teta[j]^2 \right)}_{\calI[2]} \\
    & \quad+ \underbrace{\frac{8L\sigma^2}{KN}\teta\nsum \tau(t,i)\left(\sum_{j=\dl{t}}^{t-1}\teta[j]^2 \right)}_{\calI[3]}\\
    & \quad +\underbrace{ \frac{64L^3(K-1)^2}{K^2N}\teta \nsum \teta[\dl{t}]^2\Delta_{\dl{t}}}_{\calI[4]}\\ &\quad +\underbrace{\frac{64L^5(K-1)^2}{K^2 N^2}\teta\nsum \tau(t,i)\left( \sum_{j=\dl{t}}^{t-1} \psum \teta[j]^2\teta[\pdl{j}]^2\Delta_{\pdl{j}}\right)}_{\calI[5]}\\
    & \quad + \underbrace{\frac{64(K-1)^2L}{K^2N}\teta \nsum \teta[\dl{t}]^2\norm{\nabla f_i(\wopt)}^2}_{\calI[6]}\\
    &\quad + \underbrace{\frac{64L^3(K-1)^2}{K^2 N^2}\teta\nsum\tau(t,i)\left(\sum_{j=\dl{t}}^{t-1}\psum \teta[j]^2 \teta[\pdl{j}]^2 \norm{\nabla f_{i'}(\wopt)}^2\right)}_{\calI[7]}\\
    &  \quad + \underbrace{\frac{16(K-1)^2 L\sigma^2}{K^3 N}\teta \nsum \teta[\dl{t}]^2}_{\calI[8]}\\
    &\quad + \underbrace{\frac{16L^3(K-1)^2\sigma^2}{K^3 N^2}\teta\nsum\tau(t,i)\left(\sum_{j=\dl{t}}^{t-1}\teta[j]^2\psum \teta[\pdl{j}]^2 \right)}_{\calI[9]}.
\end{align*}
We first show that $\calI[1]$, $\calI[4]$ and  $\calI[5]$ can be bounded by $\mu\teta B_t$.  According to \Cref{assumption:delay},
\begin{align}
    \tau(t,i)\teta \leq \frac{4[t_0 + (1/b)t]}{\mu(a + t)}\le  \frac{4[t_0 + (1/b)t]}{\mu(b t_0 + t)}\le  \frac{4}{\mu b}\leq \frac{\mu^{0.5}}{10 L^{1.5}} \leq \frac{1}{10L}. \label{ineq:bound eta}
\end{align}
Combining the result in \Cref{sec:relationship}, we can bound $\calI[1]$ in the following way.
\begin{align*}
     \calI[1] & \leq 16\left(\frac{40}{39}\right)^2\left(\frac{20}{19}\right)^2 L\teta\left( L^2\teta^2\frac{1}{N} \nsum\tau(t,i)^2\right) B_t \leq 0.19\mu \teta B_t .
\end{align*}
Similarly, $\calI[5]$ and $\calI[4]$ can be bounded as follows.
\begin{align*}
    \calI[5] & \leq \frac{64L^5 (K-1)^2}{K^2 N^2}\teta \nsum \tau(t,i) \sum_{j=\dl{t}}^{t-1}\psum \left(\frac{40}{39}\right)^2 \left(\frac{20}{19}\right)^4 \teta^4 B_t 
     \\
     &\leq \frac{83 L^5(K-1)^2}{K^2}\teta^3 \Big(\teta\tau(t,i)\Big)^2 B_t\\
     & \leq \frac{0.83L^3(K-1)^2}{K^2}\teta^3B_t,\\
     \calI[4] & \leq 64 \left(\frac{40}{39}\right)^4 \frac{ L^3(K-1)^2}{K^2}\teta^3B_t
     \leq  \frac{71 L^3(K-1)^2}{K^2}\teta^3B_t.
\end{align*}
Further using  $\teta \le \frac{4}{\mu a} \le \frac{\mu^{0.5}}{25L^{1.5}}$, we have
\begin{align*}
    \calI[4]+\calI[5]\leq \frac{68.13 L^3(K-1)^2}{K^2}\teta^3 B_t \leq \left(\frac{71.83 L^3}{\mu} \teta^2 \right)\mu \teta B_t= 0.12 \mu \teta B_t.
\end{align*}
By the same token, 
\begin{align*}
    \calI[6] + \calI[7] &\leq \frac{68.1(K-1)^2L}{K^2}\teta^3 D,\\
    \calI[8]+\calI[9]&\leq \frac{17.03(K-1)^2L}{K^3}\teta^3\sigma^2.
\end{align*}

Still using the result in \Cref{sec:relationship}, we can bound $\calI[2]$ and $\calI[3]$.
\begin{align*}
    \calI[2] &\leq 16\left(\frac{40}{39}\right)^2  \left(\frac{1}{N}\nsum \tau(t,i)^2\right)LD \teta^3 \le16.84 LD  d_t\teta^3,\\
    \calI[3] &\leq 8\left(\frac{40}{39}\right)^2\left(\frac{1}{N}\nsum \tau(t,i)^2\right) \frac{L\sigma^2}{K}\teta^3\leq \frac{8.42 d_t L\sigma^2 \teta^3}{K}.
\end{align*}
Therefore, 
\begin{align}
\begin{aligned}
    \calH &\leq 0.31\mu\teta B_t  + 16.84 L D d_t\teta^3 + \frac{68.1(K-1)^2L}{K^2}\teta^3 D \\
    & \quad + \frac{8.42 d_t L\sigma^2 \teta^3}{K} + \frac{17.03(K-1)^2L}{K^3}\teta^3 \sigma^2
\end{aligned}    \label{ineq:h}
\end{align}
\subsubsection{Simplifying terms with square roots}\label{sec:sqrt}
In this section, we bound terms with square roots on RHS of \eqref{ineq:descent lemma}, i.e., $\calS\mathcal{Q}$. We apply the results in \Cref{sec:relationship} to bound the first term.
\begin{align*}
  &\quad 2\sigma \teta L\frac{1}{N}\nsum \sqrt{\tau(t,i)\sum_{j=\dl{t}}^{t-1}\teta[j]^2\frac{1}{N}\psum\Delta_{\pdl{j}}}\\
  &\leq\frac{80}{39} \sigma \teta^2 L\frac{1}{N}\nsum \sqrt{\tau(t,i) \frac{1}{N} \sum_{ j=\dl{t}}^{t-1}\psum B_{\pdl{j}}}\\
  &\leq \frac{80}{39}\sigma \teta^2\tau_t L\frac{1}{N}\nsum\sqrt{\left(\frac{20}{19}\right)^2B_{t}}\\
  & \leq 2.16\sigma \teta^2 \tau_t L \sqrt{B_t}.
\end{align*}
Recall
\begin{align*}
    B_t \ge \frac{\dmaxt C_1}{\mu^3 (t+a)^2}\ge \frac{5000 \dmaxt L^2\sigma^2}{\mu^4 (t+a)^2}.
\end{align*}
Since $\tau_t^2 = \left[\frac{1}{N}\nsum \tau(t,i)\right]^2 \leq \frac{1}{N} \nsum \tau(t,i)^2\leq \dmaxt$, we have
\begin{align*}
    \sqrt{B_t}\ge \frac{70\tau_t L \sigma}{\mu^2 (t+a)} \geq \frac{1}{\mu}\cdot 8 \cdot \frac{4}{\mu (t+a)} (2.16 \sigma\tau_t L)=\frac{8}{\mu}(2.16 \sigma \teta\tau_t L).
\end{align*}
Therefore,
\begin{align*}
    \frac{1}{8}\mu \teta B_t \ge 2.16\sigma\teta^2\tau_t L\sqrt{B_t}.
\end{align*}
Next, we bound the second term.
\begin{align*}
    & \frac{2\sigma L\teta}{N}\nsum \sqrt{\frac{\tau(t,i)(K-1)^2}{K^2N}\sum_{j=\dl{t}}^{t-1}\teta[j]^2\psum \teta[\pdl{j}]^2\left(8L^2\Delta_{\pdl{j}}+8\norm{\nabla f_{i'}(\wopt)}^2+\frac{2\sigma^2}{K}\right)}\\
    & \leq \frac{2\sigma L\teta}{N}\nsum \sqrt{\frac{\tau(t,i)(K-1)^2}{K^2N}\sum_{j=\dl{t}}^{t-1}\teta[t-\tau(t,i)]^2\psum \teta[t-l_t]^2\left(8L^2 B_{\pdl{j}}+8\norm{\nabla f_{i'}(\wopt)}^2+\frac{2\sigma^2}{K}\right)}\\
    & \leq \frac{2.16\sigma L\teta^3}{N}\nsum\sqrt{\frac{\tau(t,i)^2(K-1)^2}{K^2}\left(8\left(\frac{20}{19}\right)^2 L^2 B_{t} + 8D + \frac{2\sigma^2}{K} \right)}\\
    & \leq  \frac{(K-1)2.16\sigma L\tau_t\teta^3}{K}\sqrt{\left(8\left(\frac{20}{19}\right)^2L^2 B_{t} + 8D + \frac{2\sigma^2}{K} \right)}.
\end{align*}
To show that $ \frac{(K-1)2.16\sigma L\tau_t\teta^3}{K}\sqrt{\left(8\left(\frac{20}{19}\right)^2L^2 B_{t} + 8D + \frac{2\sigma^2}{K} \right)} \le \frac{1}{8}\mu\teta B_t$,
we only have to prove
\begin{align}
    B_t^2 &\geq \frac{64(2.16)^2(K-1)^2\sigma^2 L^2 \tau_t^2}{\mu^2 K^2}\teta^4 \left[8\left(\frac{20}{19}\right)^2L^2B_t + 8D + \frac{2\sigma^2}{K}\right].\label{ineq:sqrt1}
\end{align}
To let \eqref{ineq:sqrt1} hold, we only have to verify
\begin{align}
    \frac{1}{2}B_t &\geq 2647\frac{(K-1)^2\sigma^2 L^4 \tau_t^2}{\mu^2K^2}\teta^4,\label{ineq:sqrt2}\\
    \frac{1}{4}B_t^2 &\geq 2500\frac{(K-1)^2\sigma^2 L^2 \tau_t^2D}{\mu^2 K^2}\teta^4 \Leftrightarrow B_t \ge \frac{ 100(K-1)\sigma L \tau_t \sqrt{D}}{\mu K}\teta^2=\frac{1600(K-1)\sigma L\tau_t\sqrt{D}}{\mu^3K(t+a)^2},\label{ineq:sqrt3}\\
     \frac{1}{4}B_t^2 &\geq 625 \frac{(K-1)^2\sigma^2 L^2 \tau_t^2}{\mu^2 K^2}\teta^4 (\frac{\sigma^2}{K})\Leftrightarrow B_t \geq \frac{50(K-1)L\sigma\tau_t}{\mu K}\teta^2 \frac{\sigma}{\sqrt{K}}=\frac{800(K-1)L\sigma^2\tau_t}{\mu^3K^{1.5}(t+a)^2}.\label{ineq:sqrt4}
\end{align}
Since $\teta \leq \frac{\mu^{0.5}}{25L^{1.5}}$, we have
\begin{align*}
    2647\frac{(K-1)^2\sigma^2 L^4 \tau_t^2}{\mu^2K^2}\teta^4 \leq \frac{4.24 (K-1)^2 L \dmaxt \sigma^2}{\mu^3 K^2(t+a)^2} \leq \frac{2500 \dmaxt L^2\sigma^2}{\mu^4 (t+a)^2} \leq \frac{1}{2}B_t.
\end{align*}
Therefore \eqref{ineq:sqrt2} holds. Also note that
\begin{align*}
    \frac{1600(K-1)\sigma L \tau_t \sqrt{D}}{\mu^3 K(t+a)2}&=\frac{1600L}{\mu^3(t+a)^2}\left[\left(\frac{K-1}{K}\sqrt{D}\right)\left(\tau_t \sigma\right) \right]\\
    & \leq \frac{800L(K-1)^2D}{\mu^3K^2(t+a)^2}+\frac{800L\dmaxt\sigma^2}{\mu^3(t+a)^2}\\
    & \leq B_t.
\end{align*}
Hence, \eqref{ineq:sqrt3} holds. Similarly,
\begin{align*}
   \frac{800(K-1)L\sigma^2\tau_t}{\mu^3K^{1.5}(t+a)^2}=\frac{800\sigma^2L}{\mu^3(t+a)^2}\left[\left(\frac{K-1}{K^{1.5}}\right)\right]\tau_t\leq \frac{400 L(K-1)^2\sigma^2}{\mu^3 K^3(t+a)^2}+\frac{400L\dmaxt \sigma^2}{\mu^3(t+a)^2}\le B_t.
\end{align*}
Therefore, \eqref{ineq:sqrt4} holds. Now we have obtained a bound for $\mathcal{SQ}$. That is,
\begin{align}
\mathcal{SQ}\leq \frac{1}{4}\mu\teta B_t\label{ineq:s}.
\end{align}
\subsubsection{Verifying relation (b)}
In this subsection, we verify relation (b) by using the results in \Cref{sec:relationship}, \Cref{sec:higher degree} and \Cref{sec:sqrt}.
First apply the definition of strong convexity and therefore,
\begin{align}
    \EE{f(\w{t})}-f(\wopt)  \geq \frac{\mu}{2}\Delta_t\label{ineq:strongc}.
\end{align}
Since $\mu\teta \leq \frac{4}{a} \leq \frac{1}{25}$, $1-1.38\mu\teta \ge 0$. We have
\begin{align}
    \left(1-\frac{1}{2}\mu \teta\right)\Delta_t-\frac{44}{25}\teta\left(\EE{f(\w{t})}-f(\wopt)\right) \leq \left(1-1.38 \mu \teta\right)\Delta_t \leq \left(1-1.38 \mu \teta\right)B_t. \label{ineq:strongcvx}
\end{align}
Combining \eqref{ineq:sigma square}, \eqref{ineq:h}, 
\eqref{ineq:s} and \eqref{ineq:strongcvx}, we obtain
\begin{align}
\begin{aligned}
    \text{RHS of } \eqref{ineq:descent lemma}& \leq \left(1-1.38\mu \teta \right)B_t + 0.25\mu\teta B_t + 0.33\mu\teta B_t +
    \frac{80\tau_t \sigma^2}{39KN}\teta^2 + \frac{3\sigma^2}{KN}\teta^2\\
    &\quad + \left[18 \dmaxt + \frac{73(K-1)^2}{K^2}\right]LD\teta^3 + \left[\frac{9\dmaxt}{K}+\frac{18.1(K-1)^2}{K^3}\right]L\sigma^2\teta^3.
\end{aligned}\label{ineq:forinduc}
\end{align}
Therefore, relation (b) is verified.
\subsubsection{Verifying relation (a)}\label{sec:verify(a)}
To verify relation (a), we only have to show
\begin{align}
\begin{aligned}
    B_{t+1} + 0.8\mu\teta B_t &\geq B_t + \frac{80\tau_t \sigma^2}{39KN}\teta^2 
       + \frac{3\sigma^2}{KN}\teta^2+\left[18 \dmaxt + \frac{73(K-1)^2}{K^2}\right]LD\teta^3 \\
      &\quad+ \left[\frac{9\dmaxt}{K}+\frac{18.1(K-1)^2}{K^3}\right]L\sigma^2\teta^3.\label{ineq:induction}
\end{aligned}
\end{align}
Note that $B_{t+1}$ can be split as
\begin{align*}
   B_{t+1} = \frac{\tau_t E}{(t+a+1)^2} + \frac{s_{t}E}{(t+a+1)^2}+\frac{G}{t+a+1} + \frac{F}{(t+a+1)^2},
\end{align*}
and that
\begin{align*}
    \frac{1}{t+a} - \frac{1}{t+a+1} &= \frac{1}{(t+a)(t+a+1)}\le\frac{1}{(t+a)^2},\\
     \frac{1}{(t+a)^2} - \frac{1}{(t+a+1)^2} &= \frac{2t+2a+1}{(t+a)^2(t+a+1)^2}\le\frac{2}{(t+a)^3}.
\end{align*}
Therefore, to prove \eqref{ineq:induction}, we only have to show
\begin{align}
    \frac{\tau_t E}{(t+a+1)^2} \ge \frac{80\tau_t\sigma^2}{39KN}\teta^2=\frac{1280\tau_t\sigma^2}{39\mu^2 KN(t+a)^2},\label{ineq:E}
\end{align}
and
\begin{align}
\begin{aligned}
0.8\mu\teta B_t  &\geq \frac{2Es_{t}}{(t+a)^3} + \frac{2F}{(t+a)^3}+\frac{G}{(t+a)^2}+\frac{48\sigma^2}{KN\mu^2(t+a)^2}\\
& \quad +\left[18\dmaxt + \frac{73(K-1)^2}{K^2}\right]LD\teta^3 \\
&\quad+ \left[\frac{9\dmaxt}{K}+\frac{18.1(K-1)^2}{K^3}\right]L\sigma^2\teta^3.
\end{aligned}\label{ineq:F}
\end{align}
\eqref{ineq:E} holds since
\begin{align*}
    E&=\frac{35\sigma^2}{\mu^2 N K} \geq \frac{1280(40+1)^2}{39(40^2)\mu^2N K}\geq\frac{1280(t+a+1)^2}{39\mu^2N K(t+a)^2}.
\end{align*}
To show that \eqref{ineq:F} holds, we plug in the value of $B_t$ and $\teta$ and make minor adjustments.
\begin{align*}
    &\quad \frac{1.2Es_{t-1}}{(t+a)^3} + \frac{1.2F}{(t+a)^3} + \frac{2.2G}{(t+a)^2}\\
    &\geq  \frac{48\sigma^2}{KN\mu^2(t+a)^2}
   + \frac{\dmaxt L(1152D+576\sigma^2/K)}{\mu^3 (t+a)^3} + \frac{(K-1)^2}{K^2}\cdot\frac{L(4672D+1158.4\sigma^2/K)}{\mu^3(t+a)^3}.
\end{align*}
Recall
\begin{align*}
G=\frac{22\sigma^2}{\mu^2 N K}, F \ge \frac{\dmaxt L(2500D+5000L\sigma^2/\mu)}{\mu^3} + \frac{(K-1)^2}{K^2}\cdot\frac{5000L(D+\sigma^2/K)}{\mu^3}.
\end{align*}
Thus \eqref{ineq:F} holds. Now we have completed the induction step and obtain \Cref{lemma:delta}.

\subsection{Proof of \Cref{theorem:maintheorem_general}}\label{sec:proof_general}
In this subsection, we provide a bound for $\EE{f(\overline{w}_T)}-f(\wopt)$ based on the bound for $\Delta_T$.
Here we restate the descent lemma.
\begin{align}
\begin{aligned}
    \Delta_{t+1} 
   & \leq \left(1-\frac{1}{2}\mu \teta\right)\Delta_t-\frac{44}{25}\teta\left(\EE{f(\w{t})}-f(\wopt)\right)\\
   &\quad+\underbrace{ \frac{2\teta\sigma^2}{KN^2}\nsum\sum_{j=\dl{t}}^{t-1}\teta[j] + \frac{3\teta^2\sigma^2}{KN}
    +\frac{53}{50}\calH+\calS\calQ}_{\calQ[t]}.
\end{aligned}\label{ineq:descent lemma last}
\end{align}
Interestingly, the proof in \Cref{sec:w} generates a bound for $\calQ[t]$. Combining \eqref{ineq:forinduc} and \eqref{ineq:induction}, we find
\begin{align*}
    B_{t+1} &\geq (1-1.38\mu\teta)B_t + 0.25\mu\teta B_t + 0.33\mu\teta B_t + \frac{80\tau_t \sigma^2}{39KN}\teta^2\\
    &\quad + \left[18 \dmaxt + \frac{73(K-1)^2}{K^2}\right]LD\teta^3 + \left[\frac{9\dmaxt}{K}+\frac{18.1(K-1)^2}{K^3}\right]L\sigma^2\teta^3\\
   & \geq (1-1.38\mu\teta)B_t + \calQ[t].
\end{align*}
Hence
\begin{align*}
    \calQ[t] \leq B_{t+1} - B_t + 1.38\mu \teta B_t\leq \frac{E\tau_t}{(t+a+1)^2} + 1.38\mu \teta B_t.
\end{align*}
Rearrange \eqref{ineq:descent lemma last}, we have
\begin{align*}
    \frac{44}{25}\teta\left(\EE{f(\w{t})}-f(\wopt)\right) \leq  \left(1-\frac{1}{2}\mu \teta\right)\Delta_t - \Delta_{t+1} + \mathcal{Q}_t.
\end{align*}
Apply \eqref{ineq:strongc} and subtract $0.25\mu \teta\Delta_t$ on the RHS and $0.5\teta(\EE{f(\w{t})}-f(\wopt))$ on the LHS. Then we have
\begin{align*}
    \frac{63}{50}\teta(\EE{f(\w{t})}- f(\wopt))\leq \left(1-\frac{3}{4}\mu\teta\right)\Delta_t - \Delta_{t+1}+\calQ[t].
\end{align*}
Dividing $\teta$ on both sides and multiplying both sides by $(t+a-1)(t+a-2)$, we have
\begin{align*}
    &\quad (t+a-1)(t+a-2)(\EE{f(\w{t})}- f(\wopt))\\
    &\leq \frac{\mu(t+a-3)(t+a-2)(t+a-1)}{4}\Delta_t - \frac{\mu(t+a-2)(t+a-1)(t+a)}{4}\Delta_{t+1}\\
    &\quad +\frac{\mu(t+a-2)(t+a-1)(t+a)}{4}\calQ[t]\\
    & \leq\frac{\mu(t+a-3)(t+a-2)(t+a-1))}{4}\Delta_t - \frac{\mu(t+a-2)(t+a-1)(t+a)}{4}\Delta_{t+1}\\
    &\quad +  E'' \tau_t(t+a) + E's_{t} + F' + (t+a)G'.
\end{align*}
where $E''=\frac{\mu}{4}E,E'=0.345\mu E, F'=0.345\mu F, G'=0.345\mu G$. Telescoping from $t=1$ to $T-1$, we have
\begin{align}
\begin{aligned}
    &\quad \sum_{t=1}^{T-1} (t+a-1)(t+a-2)\left\{\EE{f(\w{t})}-f(\wopt)\right\}+\frac{\mu(T+a-3)(T+a-2)(T+a-1)}{4}\Delta_{T}\\
    & \leq \frac{\mu a^3}{4}\Delta_1 + E''\sum_{t=1}^{T-1}\tau_t (t+a)+E'\sum_{t=1}^{T-1} s_{t} + F' T + G'\sum_{t=1}^{T-1} (t+a).
\end{aligned}\label{ineq:telescope}
\end{align}
By $L$-smoothness, $f(\w{t})-f(\wopt) \leq \tfrac{L}{2}\Delta_t$. Since $a\geq 100(\tfrac{L}{\mu})^{1.5}$, $\tfrac{\mu(a-2)}{4}\geq \tfrac{L}{2}$. Therefore,
\begin{align*}
    \frac{\mu(T+a-3)(T+a-2)(T+a-1)}{4}\Delta_{T} & \geq \frac{\mu(a-2)(T+a-2)(T+a-1)}{4}\Delta_{T}\\
    & \geq (T+a-2)(T+a-1)\left\{\EE{f(\w{T})}-f(\wopt)\right\}.
\end{align*}
Then \eqref{ineq:telescope} can be further simplified as
\begin{align}
\begin{aligned}
     &\quad \sum_{t=1}^{T} (t+a-1)(t+a-2)\left\{\EE{f(\w{t})}-f(\wopt)\right\}\\
     & \leq \frac{\mu a^3}{4}\Delta_1 + E''\sum_{t=1}^{T-1}\tau_t (t+a)+E'\sum_{t=1}^{T-1} s_{t} + F' T + G'\sum_{t=1}^{T-1} (t+a).
\end{aligned}\label{ineq:telescope2}
\end{align}
Since $\sum_{t=1}^{T-1} s_{t}=\sum_{t=1}^{T-1}\sum_{t'=1}^{t-1}\tau_{t'}=\sum_{t=1}^{T-1}(T-1-t)\tau_t $, we have
\begin{align*}
    E''\sum_{t=1}^{T-1} \tau_t(t+a) + E'\sum_{t=1}^{T-1}s_{t}& \leq E'(T-1+a)\sum_{t=1}^{T-1}\tau_t\\
    & \leq E'(T+a)s_{T}.
\end{align*}
Therefore,
\begin{align*}
    \text{RHS of }\eqref{ineq:telescope2} \leq \frac{\mu a^3}{4}\Delta_1 + E'(T+a)s_{T}+F'T+G'T(T+a).
\end{align*}

Define $W_T = \sum_{t=1}^T (t+a-1)(t+a-2)=\frac{1}{3}T^3+(a-1)T^2+(a^2-2a+\frac{2}{3})T$. Note that $W_T \ge \frac{1}{3}T^2(T+a)$. Dividing $W_T$ on both sides, we have
\begin{align*}
    & \quad\left\{\frac{1}{W_T}\sum_{t=1}^T (t+a-1)(t+a-2)\EE{f(\w{t})}\right\}-f(\wopt) \\
    & \leq \frac{3\mu a^3}{4(T+a)^3}\Delta_1 + \frac{3E's_{T}}{T^2}+ \frac{3G'}{T}+ \frac{3F'}{T^2}.
\end{align*}
Considering $\frac{\mu a^3}{(T+a)^3}\leq \frac{\mu a^2}{T^2}$ and convexity of $f(w)$, we have
\begin{align*}
    \EE{f(\overline{w}_T)} - f(\wopt) = \mathcal{O}\left(\frac{G'+E'\taubart}{T} + \frac{F'}{T^2}\right).
\end{align*}
where $\overline{w}_T = \frac{1}{W_T}\sum_{t=1}^T (t+a-1)(t+a-2) w_t$. 
Plugging in $G', E' \text{ and } F'$, we obtain \Cref{theorem:maintheorem_general}. Since $\dmaxt \leq \tau_{\max,T}^2$, \Cref{theorem:maintheorem} holds.
\section{Proof of convergence for smooth and non-convex objective functions}\label{appendix:noncvx}
In this section, we first state a more general version of \Cref{theorem:noncvx} and then provide a proof. The proof of \Cref{theorem:noncvx} is provided as a corollary (See \Cref{corr:noncvx}). Regarding the number of inactive rounds,  we have the following relaxed assumption.
\begin{assumption}\label{assumption:delay nonconvex}
There exists a constant $t_0$ such that $\forall t\geq 1$ and $i \in [N]$, $\tau(t,i) \leq \frac{1}{4}\sqrt{\frac{L}{(L^2+\rho \delta)KN} }\max\{\sqrt{t}, \sqrt{t_0}\}$.
\end{assumption}
Note that different from \Cref{assumption:constant delay}, \Cref{assumption:delay nonconvex} allows $\tau(t,i)$ to grow as $ \mathcal{O}(\sqrt{t})$. Let $\taubart$ and $\tau_{\max,T}$ be be defined the same as in \Cref{sec:cvx}. Further define $$\taubarmaxt = \frac{1}{N}\nsum \max_{1\leq t\leq T-1}\{\tau(t,i)\},$$
which  takes the maximum number of inactive rounds over rounds $1, \cdots, T-1$ for each device and takes the average across devices.  And define 
$$\bar{d}_T = \frac{1}{T-1}\sum_{t=1}^{T-1}d_t,$$
which is the average of squared number of inactive rounds across all devices and rounds.
The following theorem summarizes the performance of $\algoname[]$ on smooth and non-convex problems.
\begin{theorem}\label{theorem:noncvx_general}
Let Assumptions \ref{assumption:smooth}, \ref{assumption:noise}  and \ref{assumption: hessian} to \ref{assumption: bounded dissimilarity for noncvx} hold.   Further assume that the device availability sequence $\tau(t,i)$ satisfies \Cref{assumption:delay nonconvex} and $\tau(t,i)=0$ for all $i\in [N]$. By setting the learning rate $\eta = c_0\sqrt{\frac{N}{KTL(1+\taubart)}}$, where constant $c_0$ satisfies $0 < c_0 \leq 1$ and $T\geq \max \{\frac{64\alpha ^2K N L^3}{L^2 + \rho \delta}, 16L N K, t_0\}$, after communication rounds $1, \cdots,T-1$, \algoname satisfies:
\begin{align*}
  \min_{1\leq t \leq T} \EE[\xi]{\norm{\nabla f(\w{t})}^2 } = \mathcal{O}\left(\sqrt{\frac{(1+\taubart)L}{TKN}}(f(\w{1}) - f^*+\sigma^2)+\frac{A_6}{T}\right),
  \end{align*}
where
\begin{align*}
    A_6 = \frac{1}{(1+\taubart)}\Big[\sigma^2 \taubarmaxt^2 N K L\left(1+\frac{\alpha \taubarmaxt L^2}{(\rho \delta+L^2)(1+\taubart)}\right) + \frac{(L^2 + \rho \delta)\sigma^2}{L}\bar{d}_T\\
   +(K-1)N L(\beta  +\sigma^2/K)\Big]+ LKN \tau_{\max, T} \sigma \sqrt{\beta + \frac{\sigma^2}{KN}}.
\end{align*}
\end{theorem}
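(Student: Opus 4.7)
\textbf{Proof plan for Theorem \ref{theorem:noncvx_general}.} The plan is to run the standard non-convex descent argument for $L$-smooth objectives, and carefully bound the error induced by both local steps and stale updates in $\bar G_t := \tfrac{1}{N}\sum_i G_t^i$. First I would write the update as $w_{t+1}-w_t = -\tfrac{\eta}{KN}\sum_{k,i}\tilde\nabla f_i(w_{t-\tau(t,i),k}^i)$ and apply $L$-smoothness to obtain
\begin{align*}
\E f(w_{t+1}) - f(w_t) \le -\eta\bigl\langle \nabla f(w_t),\ \E \bar G_t\bigr\rangle + \tfrac{L\eta^2}{2}\E\|\bar G_t\|^2.
\end{align*}
Taking expectation over the stochastic noise, and decomposing $\bar G_t$ as $\nabla f(w_t)$ plus three distinct error terms ($E_1$: replacing $w_t$ by $w_{t-\tau(t,i)}$, i.e.\ staleness; $E_2$: replacing $w_{t-\tau(t,i)}$ by the local iterate $w_{t-\tau(t,i),k}^i$, i.e.\ local drift; $E_3$: the mean-zero stochastic noise surviving after expectation, which is nonzero only because $G_t^i$ reuses past noise correlated with $w_t$), I would produce a descent inequality of the form $\E f(w_{t+1}) \le \E f(w_t) - \tfrac{\eta}{2}\E\|\nabla f(w_t)\|^2 + \eta \cdot \text{(error terms)} + \tfrac{L\eta^2}{2}\E\|\bar G_t\|^2$.

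The next step is to bound each error piece. For the local drift $E_2$ I would prove a non-convex analogue of Lemma \ref{lemma: bounded drift}, using Assumption \ref{assumption: bounded dissimilarity for noncvx} to control $\|\nabla f_i(w_{t-\tau(t,i)})\|^2$ by $\alpha\|\nabla f(w_{t-\tau(t,i)})\|^2+\beta_i$; this produces the $(K-1)$ terms in $A_5$ and part of $A_4$. For the staleness term $E_1$ I would perform a Taylor expansion of $\nabla f_i$ at $w_t$ to first order and invoke Hessian Lipschitzness (Assumption \ref{assumption: hessian}) together with bounded noise (Assumption \ref{assumption:bounded noise}) to control the remainder by $\tfrac{\rho}{2}\|w_t - w_{t-\tau(t,i)}\|^2$, and then bound $\|w_t - w_{t-\tau(t,i)}\|^2$ using the update rule, past gradient norms, and $\tau_{\max,T}$. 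The combination $L^2 + \rho\delta$ in $A_4$ arises precisely from collecting the Hessian-linear part (size $L^2$) and the Hessian-Lipschitz remainder (size $\rho\delta$ when one factor of the displacement is absorbed into the almost sure noise bound $\delta$). For the correlated-noise term $E_3$, I would use Cauchy--Schwarz and Assumption \ref{assumption: bounded dissimilarity for noncvx} to produce the $\sqrt{\beta+\sigma^2/(KN)}$ factor in $A_6$.

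For $\E\|\bar G_t\|^2$, I would split the sum into a systematic part bounded via the above error analysis and an independent noise part of variance $\mathcal{O}(\sigma^2/(KN))$, being careful that noise from different rounds but the same device is correlated (hence the factor involving $\bar{\nu}+1$ rather than a naive $1$). After assembling everything, the plan is to telescope from $t=1$ to $T$, divide by $\eta T$, and lower-bound the left side by $\min_t \E\|\nabla f(w_t)\|^2$. Choosing $\eta = c_0\sqrt{N/(KTL(1+\bar\tau_T))}$ then balances $\tfrac{f(w_1)-f^*}{\eta T}$ against $\tfrac{L\eta \sigma^2(1+\bar\tau_T)}{KN}$ to yield the leading $\mathcal{O}(\sqrt{(1+\bar\tau_T)L/(TKN)})$ rate, while the remaining error terms contribute the $A_6/T$ higher-order correction once $T$ satisfies the stated lower bound.

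The main obstacle I expect is cleanly handling the feedback loop in the staleness term: $w_t$ depends on past $G_{t'}^i$, and $G_{t'}^i$ depends on past stochastic noise, so the cross terms in $\langle \nabla f(w_t), E_1\rangle$ and in $\E\|\bar G_t\|^2$ do not vanish under expectation. Bounding them tightly without blowing up the final rate is what forces the Hessian Lipschitz plus bounded-noise combination: Hessian Lipschitz lets us replace $\nabla f_i(w_{\text{old}})-\nabla f_i(w_t)$ by the Hessian at $w_t$ (which is deterministic given $w_t$) plus a quadratic remainder, and the almost sure noise bound $\delta$ is needed to absorb one factor of displacement into a constant when bounding this quadratic remainder in expectation. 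Pushing this argument through carefully while still producing a bound that scales only with averages $\bar\tau_T, \bar\nu, \bar d_T$ rather than worst-case $\tau_{\max,T}$ in the leading term is the most delicate part of the proof.
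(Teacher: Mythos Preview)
Your overall architecture (smoothness descent, decompose $\bar G_t$, telescope, tune $\eta$) matches the paper, and you have correctly located the central difficulty: the term $\langle\nabla f(w_t),\,e^i_{t-\tau(t,i),k}\rangle$ does not vanish in expectation because $w_t$ depends on the stale noise. But you have placed the Hessian--Lipschitz step on the wrong error component, and that is not a cosmetic issue --- it is what determines whether the leading rate carries the $1/(KN)$ speedup.

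In the paper the Hessian--Lipschitz/bounded-noise combination is applied to your $E_3$, not to $E_1$. One writes
\[
\big\langle\nabla f(w_t),\,e^i\big\rangle=\big\langle\nabla f(w_t)-\nabla f(w_{t-\tau(t,i)}),\,e^i\big\rangle+\big\langle\nabla f(w_{t-\tau(t,i)}),\,e^i\big\rangle,
\]
the second piece having mean zero. Lemma~\ref{lemma: hessian lip} is then invoked on the \emph{global} gradient difference with $z=-e^i$, so that $\|z\|\le\delta$ a.s.\ yields the remainder $\tfrac{\rho\delta}{2}\|w_t-w_{t-\tau(t,i)}\|^2$. The linear Hessian part isolates the diagonal correlation $\EE{(e^i)^{\!\top}\nabla^2 f(w_{t-\tau(t,i)})\,e^i}\le L\sigma^2$; because this carries a $1/(KN)$ weight, it produces the $\tfrac{L\tau_t\sigma^2}{KN}\teta[]^2$ term and hence the $\sqrt{(1+\taubart)L/(TKN)}$ leading rate. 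If instead you only Cauchy--Schwarz the noise cross-term, you get $L\sigma\sqrt{\EE{\|w_t-w_{t-\tau(t,i)}\|^2}}$, whose noise contribution is $\teta[]\tau(t,i)\sigma/\sqrt{KN}$ after the square root; this loses a factor $\sqrt{KN}$ in the dominant variance term. Conversely, applying Hessian Lipschitz to $E_1=\tfrac{1}{KN}\sum_{k,i}[\nabla f_i(w_{t-\tau(t,i)})-\nabla f_i(w_t)]$ inside $\langle\nabla f(w_t),E_1\rangle$ gives remainder $\tfrac{\rho}{2}\|\nabla f(w_t)\|\,\|w_t-w_{t-\tau(t,i)}\|^2$ --- no $\delta$ appears, so Assumption~\ref{assumption:bounded noise} does nothing there. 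The paper handles this term ($\calT[2]$) via the polarization identity $\langle x,y\rangle=\tfrac12\|x\|^2+\tfrac12\|y\|^2-\tfrac12\|x-y\|^2$, which simultaneously yields the driver $-\tfrac{\teta[]}{2}\|\nabla f(w_t)\|^2$ and a negative $-\tfrac{\teta[]}{2}\|g_t\|^2$. That negative $\|g_t\|^2$ is essential: after telescoping it absorbs the positive $\tau(t,i)\teta[]^3\sum_j\|g_j\|^2$ terms that emerge from $\|w_t-w_{t-\tau(t,i)}\|^2$ (the $\calV[7]\le 0$ step), a cancellation your plan does not set up.
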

\subsection{Additional notation}
Define $r_T = \sum_{t=1}^{T-1}d_t$, which is the sum of average squared number of inactive rounds over the  first $T-1$ communication rounds. Define $g_t = \frac{1}{KN}\knsum \nabla f_i(\w{\dl{t}, k}^i)$, which is the scaled accumulated true gradients at round 
$t$. Also define $l_{\max, T} = 2\tau_{\max, T}$ and $\teta[] = K\eta$ for convenience.
\subsection{Preliminary lemmas}
Before starting the proof, we introduce some preliminary lemmas in this subsection.
\begin{lemma}[Property of Hessian Lipschitz functions]\label{lemma: hessian lip}
For a $\rho$-Hessian Lipschitz function $f$ and for all $ w, v$ and $z$, the following holds. 
\begin{align*}
     \quad\dotp{\nabla f(w) - \nabla f(v)}{z}   \leq \dotp{\hessian(v)(w-v)}{z} + \frac{\rho}{2} \norm{z}\norm{w-v}^2.
\end{align*}
\end{lemma}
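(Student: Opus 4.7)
The plan is to use the fundamental theorem of calculus in the form
$$\nabla f(w) - \nabla f(v) = \int_0^1 \nabla^2 f(v + s(w-v)) (w-v) \, ds,$$
which holds for any twice-differentiable $f$, and then add and subtract $\nabla^2 f(v)(w-v)$ inside the integrand to peel off the linear (first-order Taylor) part. This yields the exact identity
$$\nabla f(w) - \nabla f(v) = \nabla^2 f(v) (w-v) + \int_0^1 \bigl[\nabla^2 f(v + s(w-v)) - \nabla^2 f(v)\bigr](w-v) \, ds.$$

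Taking the inner product with $z$ and using linearity of the integral, the first term on the right gives exactly $\langle \nabla^2 f(v)(w-v), z\rangle$, so it only remains to bound the remainder term. I would apply Cauchy--Schwarz to get
$$\Bigl\langle \bigl[\nabla^2 f(v + s(w-v)) - \nabla^2 f(v)\bigr](w-v),\, z\Bigr\rangle \leq \bigl\|\nabla^2 f(v + s(w-v)) - \nabla^2 f(v)\bigr\|\, \|w-v\|\, \|z\|,$$
and then invoke the $\rho$-Hessian Lipschitz assumption to bound the operator-norm factor by $\rho s \|w-v\|$. Integrating $\rho s \|w-v\|^2 \|z\|$ over $s\in[0,1]$ produces $\tfrac{\rho}{2}\|w-v\|^2\|z\|$, which is the stated bound.

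There is essentially no obstacle here since the argument is standard Taylor remainder analysis; the only mild subtlety is justifying the interchange of inner product and integral, which follows from continuity of $\nabla^2 f$ and standard dominated convergence / Bochner-integral considerations in finite dimensions. The conclusion is then immediate by combining the two displayed bounds.
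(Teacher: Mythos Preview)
Your proposal is correct and follows essentially the same approach as the paper's proof: both use the integral representation $\nabla f(w)-\nabla f(v)=\int_0^1 \nabla^2 f(v+s(w-v))(w-v)\,ds$, subtract off $\nabla^2 f(v)(w-v)$, apply Cauchy--Schwarz together with the Hessian-Lipschitz bound $\norm{\nabla^2 f(v+s(w-v))-\nabla^2 f(v)}\le \rho s\norm{w-v}$, and integrate $\int_0^1 s\,ds=\tfrac12$. The only cosmetic difference is that the paper first bounds the norm of the integral and then passes the norm inside, whereas you bound inside the integral directly; the resulting inequality is identical.
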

\begin{proof}
\begin{align*}
     &  \quad\dotp{\nabla f(w) - \nabla f(v)}{z} \\
    & = \dotp{\left[\int_0^1 \hessian(v+\theta(w-v))d\theta\right] (w-v)}{z}\\
    & = \dotp{\hessian(v)(w-v)}{z} + \dotp{\left\{\int_0^1 \left[\hessian(v+\theta(w-v)) - \hessian(v)\right]d\theta\right\}(w-v)}{z}\\
    & \leq \dotp{\hessian(v)(w-v)}{z} + \norm{z}\norm{w-v}\norm{\int_0^1 \left[\hessian(v+\theta(w-v)) - \hessian(v)\right]d\theta}\\
    &  \leq \dotp{\hessian(v)(w-v)}{z} + \norm{z}\norm{w-v}\int_0^1 \norm{\hessian(v+\theta(w-v)) - \hessian(v)}d\theta\\
    & \leq \dotp{\hessian(v)(w-v)}{z} + \rho \norm{z}\norm{w-v}^2\int_0^1 \theta d\theta\\
    & \leq \dotp{\hessian(v)(w-v)}{z} + \frac{\rho}{2} \norm{z}\norm{w-v}^2.
\end{align*}
\end{proof}
\begin{lemma}[Bounded drift for non-convex objective functions]\label{lemma: bounded drift noncvx}
For all $K\geq 1, 0\leq k\leq K-1$, $\teta[] \leq\frac{1}{10L}$, we have bounded drift
\begin{align*}
    \EE{\norm{\w{t,k}^i - \w{t}}^2 } &\leq  \frac{4\alpha \teta[]^2 (K-1)}{K}\EE{\norm{\nabla f(\w{t})}^2}  + \frac{4(K-1)\teta[]^2\beta_i}{K}+\frac{2(K-1)\teta[]^2\sigma^2}{K^2}.
\end{align*}
\end{lemma}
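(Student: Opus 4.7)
The plan is to mirror the argument used for the strongly convex case in Lemma~\ref{lemma: bounded drift}, but to replace the strongly convex bound on $\|\nabla f_i(w)\|^2$ (which gave rise to the $D$ and $\Delta_t$ terms there) by the gradient-dissimilarity bound of Assumption~\ref{assumption: bounded dissimilarity for noncvx}. The $K=1$ case is immediate because $w^i_{t,0}=w_t$, so both sides are zero, so the remainder of the argument is carried out under $K\ge 2$.

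Starting from the local SGD update $w^i_{t,k}=w^i_{t,k-1}-\eta_t\sg f_i(w^i_{t,k-1})$, I would first separate the stochastic noise from the mean direction using Assumption~\ref{assumption:noise} (unbiasedness and variance bound $\sigma^2$), giving
\begin{align*}
\EE{\|w^i_{t,k}-w_t\|^2}=\EE{\|w^i_{t,k-1}-w_t-\eta_t\nabla f_i(w^i_{t,k-1})\|^2}+\eta_t^2\sigma^2.
\end{align*}
Then I would apply the Young-type inequality $\|x+y\|^2\le(1+\tfrac{1}{K-1})\|x\|^2+K\|y\|^2$ (with the same choice $\nu=\tfrac{1}{K-1}$ as in the strongly convex case) to split the squared term into a drift piece and a gradient-norm piece, and bound the local gradient by
\begin{align*}
\|\nabla f_i(w^i_{t,k-1})\|^2\le 2\|\nabla f_i(w^i_{t,k-1})-\nabla f_i(w_t)\|^2+2\|\nabla f_i(w_t)\|^2\le 2L^2\|w^i_{t,k-1}-w_t\|^2+2\alpha\|\nabla f(w_t)\|^2+2\beta_i,
\end{align*}
using Assumption~\ref{assumption:smooth} for the first inequality and Assumption~\ref{assumption: bounded dissimilarity for noncvx} for the second. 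Substituting back into the recursion and using $\teta[]=K\eta_t\le\tfrac{1}{10L}$ to absorb the $2KL^2\eta_t^2\le\tfrac{1}{50(K-1)}$ term into the first coefficient, this produces a scalar recurrence of the form $Y_k\le h_1 Y_{k-1}+h_2$ with $h_1=1+\tfrac{51}{50(K-1)}$ and $h_2=\tfrac{2\teta[]^2}{K}\bigl(\alpha\|\nabla f(w_t)\|^2+\beta_i\bigr)+\tfrac{\teta[]^2\sigma^2}{K^2}$.

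To finish, I would unroll the recursion from $Y_0=0$ exactly as in Lemma~\ref{lemma: bounded drift}, obtaining $Y_k\le h_2(h_1^{K-1}-1)/(h_1-1)$, and then use the two elementary estimates $h_1^{K-1}\le\exp(51/50)<3$ and $h_1-1>1/(K-1)$ to conclude $Y_k\le 2(K-1)h_2$, which is exactly the claimed bound. The step that requires the most care is the choice of Young's parameter $\nu=1/(K-1)$ combined with the learning-rate condition $\teta[]\le 1/(10L)$: it is what makes the coefficient $h_1$ stay strictly below $\exp(51/50)/\mathrm{const}$, which in turn controls the geometric blow-up across $K$ local steps; without this precise calibration the dependence on $K$ in the final bound would degrade. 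This is the same delicate bookkeeping as in the strongly convex case, so no genuinely new obstacle arises once the strongly convex gradient bound is swapped for the gradient-dissimilarity bound.
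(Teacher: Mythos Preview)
Your proposal is correct and follows essentially the same route as the paper: the paper's proof simply cites the intermediate inequality~\eqref{ineq: bounded drift for sc} from Lemma~\ref{lemma: bounded drift} (which only used smoothness and the noise bound, not strong convexity) and then applies Assumption~\ref{assumption: bounded dissimilarity for noncvx} to $\|\nabla f_i(w_t)\|^2$, whereas you re-derive that recursion from scratch and insert the dissimilarity bound one step earlier. The bookkeeping and final constants coincide exactly.
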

\begin{proof}
Simply combining \eqref{ineq: bounded drift for sc} in \Cref{lemma: bounded drift} and \Cref{assumption: bounded dissimilarity for noncvx}, we have
\begin{align*}
    \EE{\norm{\w{t,k}^i - \w{t}}^2} &\leq 2(K-1)\left(\frac{2\teta[]^2}{K}\EE{\norm{\nabla f_i(\w{t})}^2} + \frac{\teta[]^2\sigma^2}{K^2}\right)\\
    & \leq \frac{4\alpha \teta[]^2 (K-1)}{K}\EE{\norm{\nabla f(\w{t})}^2}  + \frac{4(K-1)\teta[]^2\beta_i}{K}+\frac{2(K-1)\teta[]^2\sigma^2}{K^2}.
\end{align*}
\end{proof}
\begin{lemma}[Bounding the difference of parameters at different rounds]\label{lemma:lmax}
For all $t \ge t', t-t'\leq l$, where $l$ is a constant and $\teta[] \leq \frac{1}{\sqrt{12}L}$, the following inequality holds.
\begin{align*}
    \EE{\norm{\w{t}-\w{t'}}^2}& \leq \frac{4\alpha l\teta[]^2}{N}\sum_{j=\max\{t- l, 1\}}^{t-1}\nsum \EE{\norm{\nabla f(\w{\dl{j}})}^2 }\\
    &\quad +4 l^2 \beta \teta[]^2+\frac{4\teta[]^2 l^2}{KN}\sigma^2.
\end{align*}
\end{lemma}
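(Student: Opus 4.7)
My starting point would be the per-round \algoname[] update, namely $\w{j+1} - \w{j} = -\frac{\teta[]}{KN}\pknsum \sg f_{i'}(\w{\pdl{j}, k'}^{i'})$ (here $\eta$ is constant and $\teta[] = K\eta$ in the non-convex setup). Telescoping this identity from $j = t'$ to $j = t-1$, writing $g_j = \frac{1}{KN}\pknsum \sg f_{i'}(\w{\pdl{j}, k'}^{i'})$, and applying Jensen's inequality $\norm{\sum_{j=t'}^{t-1} v_j}^2 \le (t - t')\sum_{j} \norm{v_j}^2 \le l \sum_{j} \norm{v_j}^2$ yields $\norm{\w{t} - \w{t'}}^2 \le \teta[]^2\, l \sum_{j=t'}^{t-1} \norm{g_j}^2$. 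Because $t - t' \le l$, the summation index $j$ always lies inside $\{\max\{t - l, 1\}, \ldots, t - 1\}$, which matches the range appearing in the statement.

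The second step is to bound $\EE{\norm{g_j}^2}$ by decomposing $\sg f_{i'}(\w{\pdl{j}, k'}^{i'}) = \nabla f_{i'}(\w{\pdl{j}}) + [\nabla f_{i'}(\w{\pdl{j}, k'}^{i'}) - \nabla f_{i'}(\w{\pdl{j}})] + e^{i'}_{\pdl{j}, k'}$ and applying $\norm{\cdot}^2$-Jensen to separate the three pieces. The anchor piece, after Jensen and \Cref{assumption: bounded dissimilarity for noncvx}, is bounded by $\frac{\alpha}{N}\psum \EE{\norm{\nabla f(\w{\pdl{j}})}^2} + \beta$; summed over $j$ and scaled by $\teta[]^2 l$, this will supply the first two target terms. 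The noise piece uses independence of $\{e^{i'}_{\pdl{j}, k'}\}$ across $(i', k')$ within a single round, giving $\EE{\norm{(KN)^{-1}\pknsum e^{i'}_{\pdl{j}, k'}}^2} \le \sigma^2/(KN)$, which telescopes into the third target term. The local-drift piece is controlled by $L$-smoothness combined with \Cref{lemma: bounded drift noncvx}, so it contributes at most $\teta[]^2 L^2$ times a linear combination of the same three types of quantities.

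Finally, the hypothesis $\teta[] \le 1/(\sqrt{12}L)$ ensures $\teta[]^2 L^2 \le 1/12$, which is precisely what is needed to fold the drift contribution back into the anchor, heterogeneity, and noise terms while keeping each leading constant at most $4$. After collecting everything, summing over $j \in \{t', \ldots, t-1\}$, and multiplying by the outer $\teta[]^2 l$, one recovers exactly the three terms displayed in the claim.

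The main obstacle will be constant bookkeeping: each invocation of $\norm{a+b}^2 \le 2\norm{a}^2 + 2\norm{b}^2$ together with the recursion in \Cref{lemma: bounded drift noncvx} tends to inflate every coefficient, so the decompositions must be ordered carefully and the bound $\teta[]^2 L^2 \le 1/12$ invoked at the right places to land on the leading constant $4$. The most delicate point is preserving the $1/(KN)$ variance reduction on the noise term, which requires exploiting independence of $\{e^{i'}_{\pdl{j}, k'}\}$ across both local steps and devices \emph{before} the drift bound has a chance to re-introduce a cruder $\sigma^2/K$ contamination.
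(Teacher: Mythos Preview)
Your proposal is correct and follows essentially the same route as the paper: telescope the update, split each stochastic gradient into the anchor $\nabla f_{i'}(\w{\pdl{j}})$, the local drift $\nabla f_{i'}(\w{\pdl{j},k'}^{i'})-\nabla f_{i'}(\w{\pdl{j}})$, and the noise $e^{i'}_{\pdl{j},k'}$, bound the three pieces via Jensen plus \Cref{assumption: bounded dissimilarity for noncvx}, $L$-smoothness plus \Cref{lemma: bounded drift noncvx}, and independence respectively, and then absorb the drift contributions using $\teta[]^2 L^2\le 1/12$ to reach the constant $4$. The only cosmetic difference is ordering---the paper performs the three-way split on the full sum before applying Jensen over $j$, whereas you Jensen over $j$ first and then split each $g_j$---and this does not change the resulting bounds.
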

\begin{proof}
Since $\sg f_i(\w{\dl{j}, k}^i) = \nabla f_i(\w{\dl{j}, k}^i)-\nabla f_i(\w{\dl{j}}) + \nabla f_i(\w{\dl{j}})+e_{\dl{j}, k}^i$,
\begin{align*}
    &\quad \EE{\norm{\w{t}-\w{t'}}^2} \\
    & = \teta[] ^2\EE{\norm{\sum_{j=t'}^{t-1}\frac{1}{KN} \knsum \sg f_i(\w{\dl{j}, k}^i)}^2}\\
    & \leq 3\teta[]^2 \EE{\norm{\frac{1}{KN}\sum_{j=t'}^{t-1} \knsum\left(\nabla f_i(\w{\dl{j}, k}^i)-\nabla f_i(\w{\dl{j}})\right)}^2 }\\
    & \quad + 3\teta[]^2\EE{\norm{\frac{1}{N}\sum_{j=t'}^{t-1}\nsum \nabla f_i(\w{\dl{j}})}^2  } + \frac{3\teta[]^2}{K^2N^2}\EE{\norm{\sum_{j=t'}^{t-1}\Big(\knsum e^i_{\pdl{j},k}\Big )}^2} \\
    & \leq \frac{3(t-t')L^2\teta[]^2}{KN}\sum_{j=t-t'}^{t-1}\knsum \EE{\norm{\w{\dl{j}, k}^i-\w{\dl{j}}}^2}\\
    & \quad+\frac{3\teta[]^2(t-t')}{N}\sum_{j=t-t'}^{t-1}\nsum \EE{\norm{\nabla f_i(\w{\dl{j}})}^2 }+\frac{3\teta[]^2(t-t')^2}{KN}\sigma^2 \\
   &\leq \frac{3\alpha l\teta[]^2}{N}\sum_{j=\max\{t- l, 1\}}^{t-1}\nsum \EE{\norm{\nabla f(\w{\dl{j}})}^2 }+3 l^2 \beta \teta[]^2+\frac{3\teta[]^2 l^2}{KN}\sigma^2\\
   & \quad + \frac{12\alpha l L ^2\teta[]^4(K-1)^2}{N K^2}\sum_{j=\max\{t- l, 1\}}^{t-1}\nsum \EE{\norm{\nabla f(\w{\dl{j}})}^2 }+\frac{12(K-1)^2 l^2L^2\beta\teta[]^4}{K^2}\\
   & \quad + \frac{6(K-1)^2 l^2L^2 \teta[]^4\sigma^2}{K^3}\\
   & \leq \frac{4\alpha l\teta[]^2}{N}\sum_{j=t- l}^{t-1}\nsum \EE{\norm{\nabla f(\w{\dl{j}})}^2 } +4 l^2 \beta \teta[]^2+\frac{4\teta[]^2 l^2}{KN}\sigma^2.
\end{align*}
The first inequality above uses Jensen's inequality. The second one utilizes $L$-smoothness and Jensen's inequality. The third one uses \Cref{lemma: bounded drift noncvx} and the last one holds since $\teta[] \leq \frac{1}{\sqrt{12}L}$.
\end{proof}

\subsection{The descent lemma for smooth and non-convex problems}
In this subsection, we state the descent lemma and provide a proof.
\begin{lemma}[Descent lemma for non-convex problems]
Assume that  Assumptions~\ref{assumption:smooth}, \ref{assumption:noise} and \ref{assumption: hessian} to \ref{assumption: bounded dissimilarity for noncvx} hold.  Further assume that $\tau(1,i)=0$ for all $i \in [N]$. For any learning rate satisfying $\teta[] \leq \frac{1}{\sqrt{12}L}$, i.e., $\eta \leq \frac{1}{\sqrt{12}KL}$, the following holds for all $1\leq t \leq T$.
\begin{align}
\begin{aligned}
     & \quad \EE{f(\w{t+1})} - \EE{f(\w{t})}\\
     & \leq  -\frac{\teta[]}{2}\EE{\norm{\nabla f(\w{t})}^2}  + \frac{L(1+\tau_t)\sigma^2 }{KN}\teta[]^2 + (H_1d_t + H_2 \tau_t + H_3)\teta[]^3\\
     & \quad  +2\tau_t\sigma L^2 \teta[]^3 \sqrt{\frac{\alpha  l }{N} \sum_{j=\max\{t-\lmaxt, 1\}}^{t-1}\psum \EE{\norm{\nabla f(\w{\pdl{j}})}^2}}\\
     &\quad +  \frac{(4L^2 + \rho \delta)}{N}\teta[]^3\nsum \tau(t,i)\left(\sum_{j=\dl{t}}^{t-1} \EE{\norm{g_j}^2}\right) - \frac{\teta[]}{2}\left(1-2L\teta[]\right) \EE{\norm{g_t}^2}\\
    & \quad + \sigma L\teta[]^2\tau_t \sqrt{\EE{\norm{\nabla f(\w{t})}^2 }}+ \frac{8\alpha L^2 (K-1)\teta[]^3 }{KN}\nsum \EE{\norm{\nabla f(\w{\dl{t}})}^2},
\end{aligned}\label{ineq: decent lemma noncvx}
\end{align}
where $H_1 = \frac{(4L^2 + \rho \delta)\sigma^2 }{KN}$,  $H_2 = 2L^2  \lmaxt \sigma \sqrt{\beta + \frac{\sigma^2}{KN}} $ and $H_3 = \frac{4(K-1)L^2(2\beta + \sigma^2/K)}{K}$ . 
\end{lemma}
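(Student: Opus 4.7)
The plan is to start from $L$-smoothness of $f$, which gives
\begin{align*}
f(\w{t+1}) - f(\w{t}) \le \dotp{\nabla f(\w{t})}{\w{t+1}-\w{t}} + \frac{L}{2}\norm{\w{t+1}-\w{t}}^2.
\end{align*}
Substituting the update rule $\w{t+1}-\w{t} = -\tfrac{\teta[]}{KN}\knsum \sg f_i(\w{\dl{t},k}^i)$ and splitting each $\sg f_i$ into true gradient plus noise $e_{\dl{t},k}^i$, taking expectation reduces the problem to bounding three pieces: (i) the inner product $-\teta[]\,\EE{\dotp{\nabla f(\w{t})}{g_t}}$; (ii) the inner product of $\nabla f(\w{t})$ with the averaged noise term; and (iii) the squared norm $\tfrac{L\teta[]^2}{2}\,\EE{\norm{g_t + \text{noise average}}^2}$.

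For (i), the identity $-\dotp{a}{b}=-\tfrac12\norm{a}^2-\tfrac12\norm{b}^2+\tfrac12\norm{a-b}^2$ extracts the leading $-\tfrac{\teta[]}{2}\norm{\nabla f(\w{t})}^2-\tfrac{\teta[]}{2}\norm{g_t}^2$ and leaves the bias $\tfrac{\teta[]}{2}\norm{\nabla f(\w{t})-g_t}^2$. To control this bias at order $\teta[]^3$, I would expand $\nabla f_i(\w{t})-\nabla f_i(\w{\dl{t},k}^i)$ via Hessian-Lipschitzness (\Cref{lemma: hessian lip}) as $\nabla^2 f_i(\w{t})(\w{t}-\w{\dl{t},k}^i)+R_i$ with $\norm{R_i}\le \tfrac{\rho}{2}\norm{\w{t}-\w{\dl{t},k}^i}^2$. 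The Hessian piece contributes a factor of $L^2$ after using $\norm{\nabla^2 f_i}\le L$; the quadratic remainder, after pairing one factor of $\norm{\w{t}-\w{\dl{t},k}^i}$ with the almost-sure noise bound $\delta$ from \Cref{assumption:bounded noise}, yields the coefficient $\rho\delta$. Unrolling $\w{t}-\w{\dl{t},k}^i$ into past scaled gradients $g_j$, stored noise, and within-round drift, and then invoking \Cref{lemma: bounded drift noncvx} for the per-round drift and \Cref{lemma:lmax} for cross-round position gaps, matches the bias into the weighted sum $\tfrac{1}{N}\nsum \tau(t,i)\sum_{j=\dl{t}}^{t-1}\norm{g_j}^2$ with coefficient $4L^2+\rho\delta$, the $H_1 d_t$ term from aggregate noise accumulation, and the residual $\tfrac{8\alpha L^2(K-1)}{KN}\nsum\norm{\nabla f(\w{\dl{t}})}^2$ coming from local drift at each active round (which uses \Cref{assumption: bounded dissimilarity for noncvx}).

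For (ii), because $e_{\dl{t},k}^i$ is reused across the subsequent inactive rounds of device $i$, it correlates with $\w{t}$ and does not vanish in expectation. Following the strategy already employed in the strongly convex descent lemma, I would split $\w{t}=\w{\dl{t}}+(\w{t}-\w{\dl{t}})$: the first piece is independent of $e_{\dl{t},k}^i$ and contributes zero, while the second is handled by the Cauchy--Schwarz bound $\EE{\dotp{X}{Y}}\le \sigma\sqrt{\EE{\norm{Y}^2}}$. Applying \Cref{lemma:lmax} with window $l=\lmaxt$ to $\EE{\norm{\w{t}-\w{\dl{t}}}^2}$ produces the two square-root terms in the statement, including the $H_2\tau_t$ contribution; the analogous pairing of $\nabla f(\w{t})$ with the $\sigma^2$-variance part gives the $\sigma L\teta[]^2\tau_t\sqrt{\EE{\norm{\nabla f(\w{t})}^2}}$ term. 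For (iii), Jensen splits the square into the true-gradient part $\norm{g_t}^2$ and a noise part; pairwise independence of $e_{\dl{t},k}^i$ across distinct $(k,i)$, combined with the fact that the same stored noise is effectively reused $\tau(t,i)+1$ times across rounds for inactive devices, caps the noise variance by $\tfrac{(1+\tau_t)\sigma^2}{KN}$. Combining with (i) produces $-\tfrac{\teta[]}{2}(1-2L\teta[])\norm{g_t}^2$ and the $\tfrac{L(1+\tau_t)\sigma^2}{KN}\teta[]^2$ variance term, while the $H_3$ contribution arises from controlling the drift-induced gap $\norm{g_t-\tfrac{1}{N}\nsum \nabla f_i(\w{\dl{t}})}^2$ via \Cref{lemma: bounded drift noncvx} and \Cref{assumption: bounded dissimilarity for noncvx}.

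The main obstacle is the precise bookkeeping required to separate the first-order Taylor (yielding $L^2$) and the second-order Hessian-Lipschitz residual (yielding $\rho\delta$ via the almost-sure noise hypothesis) contributions so that they combine into the stated coefficient $4L^2+\rho\delta$ of the history sum $\tfrac{1}{N}\nsum \tau(t,i)\sum_{j=\dl{t}}^{t-1}\norm{g_j}^2$ without double-counting, and to funnel the several Cauchy--Schwarz estimates—each arising from a distinct parameter--noise correlation spanning up to $\lmaxt$ rounds—into exactly the two square-root expressions in the statement. This is where \Cref{assumption: hessian} and \Cref{assumption:bounded noise} jointly play an essential role, and it is the step where most of the delicate constant-tracking takes place.
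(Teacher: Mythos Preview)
Your high-level decomposition into the three pieces $\calT[1],\calT[2],\calT[3]$ matches the paper, but the allocation of the Hessian--Lipschitz argument is inverted, and this is a genuine gap rather than a cosmetic difference.

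In the paper, \Cref{lemma: hessian lip} is applied inside the \emph{noise} inner product $\calT[1]=-\teta[]\dotp{\nabla f(\w{t})}{\tfrac{1}{KN}\knsum e_{\dl{t},k}^i}$, with $z=-e_{\dl{t},k}^i$, not inside the bias term $\calT[2]$. Writing $\dotp{\nabla f(\w{t})-\nabla f(\w{\dl{t}})}{-e}\le \dotp{\nabla^2 f(\w{\dl{t}})(\w{t}-\w{\dl{t}})}{-e}+\tfrac{\rho}{2}\norm{e}\norm{\w{t}-\w{\dl{t}}}^2$ and then unrolling $\w{t}-\w{\dl{t}}$ into its true-gradient and noise parts produces three separate contributions: the Hessian--noise--noise piece $\calV[2]=\dotp{\nabla^2 f(\w{\dl{t}})\,e_{\dl{j},k}^i}{e_{\dl{t},k}^i}$ collapses (via the identity $e_{\dl{t},k}^i=e_{\dl{j},k}^i$) to the clean $\tfrac{L\tau_t\sigma^2}{KN}\teta[]^2$ term; the $\tfrac{\rho\delta}{2}\norm{\w{t}-\w{\dl{t}}}^2$ remainder yields the $\rho\delta$ portion of both the $\sum\norm{g_j}^2$ coefficient and of $H_1 d_t$, without any square root; and the Hessian--true-gradient--noise piece $\calV[1]$ is where the two square-root terms and $H_2\tau_t$ actually originate. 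The bias term $\calT[2]$ is handled purely by $L$-smoothness: $\norm{\nabla f(\w{t})-g_t}^2\le \tfrac{2L^2}{N}\nsum\norm{\w{t}-\w{\dl{t}}}^2+\tfrac{2L^2}{KN}\knsum\norm{\w{\dl{t},k}^i-\w{\dl{t}}}^2$, which supplies the $4L^2$ half of the $\sum\norm{g_j}^2$ coefficient, the $4L^2$ half of $H_1 d_t$, the drift term with $\tfrac{8\alpha L^2(K-1)}{KN}$, and all of $H_3$.

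Your route fails on both sides of this swap. Applying \Cref{lemma: hessian lip} to the \emph{squared norm} $\norm{\nabla f(\w{t})-g_t}^2$ cannot outperform plain $L$-smoothness: the lemma bounds an inner product, and there is no mean-zero $z$ to pair the remainder with, so no $\rho\delta$ coefficient can appear there. Conversely, handling $\calT[1]$ by splitting $\nabla f(\w{t})=\nabla f(\w{\dl{t}})+[\nabla f(\w{t})-\nabla f(\w{\dl{t}})]$ and applying Cauchy--Schwarz directly gives $\sigma L\sqrt{\EE{\norm{\w{t}-\w{\dl{t}}}^2}}$, which leaves both the $\sum\norm{g_j}^2$ and the accumulated-noise contributions trapped under a square root; you then cannot recover the stated $\tfrac{(4L^2+\rho\delta)}{N}\teta[]^3\nsum\tau(t,i)\sum_j\norm{g_j}^2$ term, and the noise piece comes out as $\tfrac{L\tau_t\sigma^2}{\sqrt{KN}}\teta[]^2$ rather than $\tfrac{L\tau_t\sigma^2}{KN}\teta[]^2$. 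Relatedly, your explanation for the $\tau_t$ factor in the leading variance is incorrect: $\calT[3]$ is a single-round squared norm and contributes only $\tfrac{L\sigma^2}{KN}\teta[]^2$; the extra $\tau_t$ comes from $\calV[2]$ inside $\calT[1]$, which is exactly the step that requires the Hessian--Lipschitz decomposition.
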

\paragraph{Proof of the descent lemma.} According to the update rule in \eqref{eq:update rule} and $L$-smoothness, 
\begin{align*}
    &\quad f(\w{t+1}) - f(\w{t}) \\
    & \leq \dotp{\nabla f(\w{t})}{\w{t+1}-\w{t}} + \frac{L}{2}\norm{\w{t+1}-\w{t}}^2\\
    & = -\teta[] \dotp{\nabla f(\w{t})}{\frac{1}{KN}\knsum \sg f_i(\w{\dl{t}, k}^i)}+\frac{L\teta[]^2}{2}\norm{\frac{1}{KN}\knsum \sg f_i(\w{\dl{t}, k}^i )}^2\\
    & = \underbrace{-\teta[] \dotp{\nabla f(\w{t})}{\frac{1}{KN}\knsum e_{\dl{t}, k}^i}}_{\calT[1]} \underbrace{-\teta[] \dotp{\nabla f(\w{t})}{\frac{1}{KN}\knsum \nabla f_i(\w{\dl{t}, k}^i)}}_{\calT[2]}\\
    & \quad +\underbrace{\frac{L\teta[]^2}{2}\norm{\frac{1}{KN}\knsum \sg f_i(\w{\dl{t}, k}^i)}^2}_{\calT[3]}.
\end{align*}
\subsubsection{Bounding the first term}\label{sec: noncvx T1}
Due to reuse of noisy updates, $e^i_{\dl{t},k}$ is correlated with $\w{t}$ and $\EE{\calT[1]}$ is not necessarily zero. Unrolling one summand of $\calT[1]$,
\begin{align*}
    -\teta[]\dotp{\nabla f(\w{t})}{e_{\dl{t}, k}^i} & = \underbrace{-\teta[]\dotp{\nabla f(\w{t}) - \nabla f(\w{\dl{t}})}
    {e_{\dl{t}, k}^i}}_{\calU[1]} \\
    & \quad \underbrace{-\teta[] \dotp{\nabla f(\w{\dl{t}})}{e_{ \dl{t}, k}^i}}_{\calU[2]}.
\end{align*}
Since $\w{\dl{t}}$ and $e^i_{\dl{t}, k}$ are independent, we have
$\EE{\calU[2] }=0$. Plugging $z = -e^i_{\dl{t},k}$ into \Cref{lemma: hessian lip},
\begin{align*}
    & \quad \EE{\calU[1]} \\
    & \leq  \EE{-\teta[] \dotp{\hessian(\w{\dl{t}})(\w{t}-\w{\dl{t}})}{e_{\dl{t}, k}^i}} + \frac{1}{2}\rho \delta\teta[]\EE{\norm{\w{t}-\w{\dl{t}}}^2 }\\
    &=  \underbrace{\teta[]^2   \EE{\dotp{\hessian(\w{\dl{t}})\frac{1}{KN}\sum_{j=\dl{t}}^{t-1} \pknsum \nabla f_{i'}(\w{\pdl{j}}^{i'})}{e_{\dl{t}, k}^i}}}_{\calV[1]}\\
    & \quad +\underbrace{\teta[]^2  \frac{1}{KN}\sum_{j=\dl{t}}^{t-1} \pknsum\EE{\dotp{\hessian(\w{\dl{t}})e_{\dl{j}, k'}^{i'}}{e_{ \dl{t}, k}^i} }}_{\calV[2]}\\
    & \quad + \frac{1}{2}\rho \delta\teta[]\underbrace{\EE{\norm{\w{t}-\w{\dl{t}}}^2 }}_{\calV[3]}.
\end{align*}
Using the identity $e_{\dl{t}, k}^i = e_{\dl{t-1}, k}^i=\cdots=e_{ \dl{\dl{t}}, k}^i$ and independence of $e_{j,k}^i$ and $e_{j', k'}^{i'}$ for all $i\neq i'$ or $k\neq k'$, we can bound $\calV[2]$.
\begin{align*}
    \calV[2] = \frac{\teta[]}{KN}\tau(t,i)\EE{\dotp{\hessian(\w{\dl{t}})e_{ \dl{t}, k}^i}{e_{\dl{t}, k}^i} }\leq \tau(t,i)\frac{\teta[]^2 L}{KN}\sigma^2,
\end{align*}
where the second inequality uses $L$-smoothness of $f$. 
Note that $\nabla f_{i'}(\w{\pdl{j}})$ can be split as $\nabla f_{i'}(\w{\pdl{j}})- \nabla f_{i'}(\w{t})+\nabla f_{i'}(\w{t})$. Further using Cauchy-Schwartz inequality $\EE{\dotp{X}{Y}}\leq \sqrt{\EE{\norm{X}^2}\EE{\norm{Y}^2}}$ and $L$-smoothness, we can bound $\calV[1]$ in the following way.
\begin{align*}
\calV[1]& =
    \teta[]^2 \frac{1}{KN}\sum_{j=\dl{t}}^{t-1}\pknsum\EE{\dotp{\hessian(\w{\dl{t}})\left(\nabla f_{i'}(\w{\pdl{j}}) - \nabla f_{i'}(\w{t})\right)}{e_{\dl{t}, k}^i}}\\
    & \quad + \teta[]^2\tau(t,i)\EE{\dotp{\hessian(\w{\dl{t}})\nabla f(\w{t})}{e_{i, \dl{t}}} }\\
    &\leq \underbrace{ \frac{\sigma L\teta[]^2}{N}\sum_{j=\dl{t}}^{t-1}\psum \sqrt{\EE{\norm{\nabla f_{i'}(\w{t})-\nabla f_{i'}(\w{\pdl{j}})}^2 }}}_{\calV[4]}\\
    & \quad + \sigma L\teta[]^2\tau(t,i) \sqrt{\EE{\norm{\nabla f(\w{t})}^2 }}.
\end{align*}
Note that for all $ t-\tau(t,i)\leq j \leq t-1$ and $i'\in [N]$, $t - (\pdl{j}) \leq \lmaxt$. By $L$-smoothness and \Cref{lemma:lmax}, we obtain an upper bound for $\calV[4]$.
\begin{align*}
    \EE{\calV[4]} & \leq \frac{\sigma L^2 \teta[]^2}{N}\sum_{j=\dl{t}}^{t-1}\psum \sqrt{\EE{\norm{\w{t}- \w{\pdl{j}}}^2 }}\\
    & \leq 2\tau(t,i)\sigma L^2 \teta[]^3 \sqrt{\frac{\alpha \lmaxt }{N} \sum_{j=\max\{t-\lmaxt, 1\}}^{t-1}\psum \EE{\norm{\nabla f(\w{\pdl{j}})}^2}}\\
    & \quad + 2\sigma L^2 \lmaxt \tau(t,i) \teta[]^3\sqrt{\beta + \frac{\sigma^2}{KN}},
\end{align*}
where the last in equality uses $\sqrt{x+y}\leq \sqrt{x}+\sqrt{y}, \forall x,y \geq 0$. Now we can obtain an upper bound for $\calV[1]$.
 \begin{align*}
     \calV[1] &\leq 2\tau(t,i)\sigma L^2 \teta[]^3 \sqrt{\frac{\alpha \lmaxt }{N} \sum_{j=\max\{t-\lmaxt, 1\}}^{t-1}\psum \EE{\norm{\nabla f(\w{\pdl{j}})}^2}}\\
    & \quad + \sigma L\teta[]^2\tau(t,i) \sqrt{\EE{\norm{\nabla f(\w{t})}^2 \mid 
    \calF}}+2\sigma L^2 \lmaxt \tau(t,i) \teta[]^3\sqrt{\beta + \frac{\sigma^2}{KN}}.
 \end{align*}
 We proceed to bound $\calV[3]$ by Jensen's inequality.
 \begin{align*}
     \calV[3] & =\EE{ \norm{\frac{\teta[]}{KN} \sum_{j=\dl{t}}^{t-1}\pknsum \sg f_{i'}(\w{\pdl{j}, k'}^{i'})}^2}\\
     & = \EE{\norm{\teta[]\sum_{j=\dl{t}}^{t-1} g_{j} + \frac{\teta[]}{KN}\sum_{j=\dl{t}}^{t-1}\pknsum  e^{i'}_{\pdl{j}, k'}}^2}\\
     & \leq 2\teta[]^2\EE{\norm{\sum_{j=\dl{t}}^{t-1} g_j}^2} + 2\teta[]^2\EE{\norm{\frac{1}{KN}\sum_{j=\dl{t}}^{t-1}\pknsum  e^{i'}_{\pdl{j}, k'}}^2}\\
     & \leq 2\tau(t,i)\teta[]^2 \sum_{j=\dl{t}}^{t-1} \EE{\norm{g_j}^2} + \frac{2\tau(t,i)^2\sigma^2\teta[]^2}{KN}.
 \end{align*}
Combining $\calV[1]$ to $\calV[3]$, we have
\begin{align*}
    \EE{\calU[1]  } &\leq \frac{\tau(t,i)L\sigma^2 }{KN}\teta[]^2+ \frac{\rho \delta\tau(t,i)^2\sigma^2}{KN}\teta[]^3+2\sigma L^2 \lmaxt \tau(t,i) \teta[]^3\sqrt{\beta + \frac{\sigma^2}{KN}}\\
    &\quad +2\tau(t,i)\sigma L^2 \teta[]^3 \sqrt{\frac{\alpha \lmaxt }{N} \sum_{j=\max\{t-\lmaxt, 1\}}^{t-1}\psum \EE{\norm{\nabla f(\w{\pdl{j}})}^2}}\\
    & \quad + \sigma L\teta[]^2\tau(t,i) \sqrt{\EE{\norm{\nabla f(\w{t})}^2 }}+ \rho \delta\tau(t,i)\teta[]^3 \sum_{j=\dl{t}}^{t-1} \EE{\norm{g_j}^2} .
\end{align*}
Finally we bound the expectation of $\calT[1]$ and conclude this section.
\begin{align*}
    \EE{\calT[1] }  &\leq \frac{\tau_t L\sigma^2 }{KN}\teta[]^2+ \frac{\rho \delta d_t\sigma^2}{KN}\teta[]^3+2\sigma L^2 \lmaxt \tau_t \teta[]^3\sqrt{\beta + \frac{\sigma^2}{KN}}\\
    &\quad +2\tau_t\sigma L^2 \teta[]^3 \sqrt{\frac{\alpha \lmaxt }{N} \sum_{j=\max\{t-\lmaxt, 1\}}^{t-1}\psum \EE{\norm{\nabla f(\w{\pdl{j}})}^2}}\\
    & \quad + \sigma L\teta[]^2\tau_t \sqrt{\EE{\norm{\nabla f(\w{t})}^2 }}+ \frac{\rho \delta\teta[]^3}{N}\nsum \tau(t,i)\sum_{j=\dl{t}}^{t-1} \EE{\norm{g_j}^2}.
\end{align*}

\subsection{Bounding the second term} \label{sec: noncvx T2}
Since $\dotp{x}{y} = \frac{1}{2}\norm{x}^2 + \frac{1}{2}\norm{y}^2 - \frac{1}{2}\norm{x-y}^2$,
\begin{align*}
   \calT[2]& = -\frac{\teta[]}{2}\norm{\nabla f(\w{t})}^2-\frac{\teta[]}{2}\norm{g_t}^2 +\frac{\teta[]}{2}\underbrace{\norm{\nabla f(\w{t}) - \frac{1}{KN}\knsum \nabla f_i(\w{\dl{t}, k}^i)}^2}_{\calU[3]}.
\end{align*}
Next we bound $\calU[3]$. Note that $\nabla f(\w{t}) - \frac{1}{KN}\knsum \nabla f_i(\w{\dl{t}, k}^i)$ can be split as
\begin{align*}
    & \quad \nabla f(\w{t}) - \frac{1}{KN}\knsum \nabla f_i(\w{\dl{t}, k}^i) \\
    & = \frac{1}{N} \nsum \left (\nabla f_i(\w{t}) - \nabla f_i(\w{\dl{t}})\right) + \frac{1}{KN}\knsum \left(\nabla f_i(\w{\dl{t}, k}^i )-\nabla f_i(\w{\dl{t}})\right).\\
\end{align*}
By Jensen's inequality and $L$-smoothness, 
\begin{align*}
    & \quad \EE{\calU[3]}\\ 
    & \leq \frac{2L^2}{N}\nsum \EE{\norm{\w{t}-\w{\dl{t}}}^2} + \frac{2L^2}{KN}\knsum \norm{\w{\dl{t}, k}^i - \w{\dl{t}}}^2\\
    & \leq  \frac{4L^2 \teta[]^2}{N}\nsum \tau(t,i) \sum_{j=\dl{t}}^{t-1} \EE{\norm{g_j}^2} + \frac{4 d_t L ^2\sigma^2\teta[]^2}{KN}\\
    &\quad  + \frac{8\alpha L^2 (K-1)\teta[]^2 }{KN}\nsum \EE{\norm{\nabla f(\w{\dl{t}})}^2}  + \frac{8L^2(K-1)\teta[]^2\beta}{K}+\frac{4L^2 (K-1)\teta[]^2\sigma^2}{K^2},
\end{align*}
where we apply \Cref{lemma: bounded drift noncvx} and plug in the bound for $\calV[3]$ in the second inequality. To sum up, we derive the following bound for the expectation of $\calT[2]$.
\begin{align*}
    & \quad \EE{\calT[2]}\\
    & \leq -\frac{\teta[]}{2}\EE{\norm{\nabla f(\w{t})}^2} - \frac{\teta[]}{2}\EE{\norm{g_t}^2} \\
    & \quad + \frac{4L^2 \teta[]^3}{N}\nsum \tau(t,i) \sum_{j=\dl{t}}^{t-1} \EE{\norm{g_j}^2} + \frac{4 d_t L ^2\sigma^2\teta[]^3}{KN}\\
    & \quad +  \frac{8\alpha L^2 (K-1)\teta[]^3 }{KN}\nsum \EE{\norm{\nabla f(\w{\dl{t}})}^2}  + \frac{8L^2(K-1)\teta[]^3\beta}{K}+\frac{4L^2 (K-1)\teta[]^3\sigma^2}{K^2}.
\end{align*}

\subsection{Bounding the third term}\label{sec: noncvx T3}
By Jensen's inequality, 
\begin{align*}
    \EE{\calT[3]}& = \frac{L\teta[]^2}{2}\EE{\norm{g_t + \frac{1}{KN}\knsum e^i_{\dl{t}, k}}^2} \\
    & \leq L\teta^2 \EE{\norm{g_t}^2} + L\teta[]^2 \EE{\norm{\frac{1}{KN}\nsum e^i_{\dl{t}, k}}^2}\\
    & \leq L\teta[]^2 \EE{\norm{g_t}^2} + \frac{L\sigma ^2 \teta[]^2}{KN}.
\end{align*}

Combining the results in \Cref{sec: noncvx T1},  \Cref{sec: noncvx T2} and \Cref{sec: noncvx T3}, we have
\begin{align*}
     & \quad \EE{f(\w{t+1})} - \EE{f(\w{t})}\\
     & \leq  -\frac{\teta[]}{2}\EE{\norm{\nabla f(\w{t})}^2}  + \frac{L(1+\tau_t)\sigma^2 }{KN}\teta[]^2 + (H_1d_t + H_2 \tau_t + H_3)\teta[]^3\\
     & \quad  +2\tau_t\sigma L^2 \teta[]^3 \sqrt{\frac{\alpha \lmaxt }{N} \sum_{j=\max\{t-\lmaxt, 1\}}^{t-1}\psum \EE{\norm{\nabla f(\w{\pdl{j}})}^2}}\\
     &\quad +  \frac{(4L^2 + \rho \delta)}{N}\teta[]^3\nsum \tau(t,i)\left(\sum_{j=\dl{t}}^{t-1} \EE{\norm{g_j}^2}\right) - \frac{\teta[]}{2}\left(1-2L\teta[]\right) \EE{\norm{g_t}^2}\\
    & \quad + \sigma L\teta[]^2\tau_t \sqrt{\EE{\norm{\nabla f(\w{t})}^2 }}+ \frac{8\alpha L^2 (K-1)\teta[]^3 }{KN}\nsum \EE{\norm{\nabla f(\w{\dl{t}})}^2},
\end{align*}
where $H_1 = \frac{(4L^2 + \rho \delta)\sigma^2 }{KN}$,  $H_2 = 2L^2  \lmaxt \sigma \sqrt{\beta + \frac{\sigma^2}{KN}} $ and $H_3 = \frac{4(K-1)L^2(2\beta + \sigma^2/K)}{K}$ . Now we have proved the descent lemma.
\subsection{Deriving the convergence rate}
Since $\sum_{t=1}^{T-1} \EE{\norm{\nabla f(\w{\dl{t}})}^2} \leq (1+\max_{1 \leq t \leq T-1}\{\tau(t,i)\}) \sum_{t=1}^{T-1} \EE{\norm{ \nabla f(\w{t})}^2}$, the
telescoping sum of \eqref{ineq: decent lemma noncvx} from $t=1$ to $T-1$ satisfies
\begin{align}
\begin{aligned}
    &\quad \EE{f(\w{T})}-\EE{f(\w{1})}\\
    & \leq \underbrace{-\teta[] \left(\frac{1}{2}-  \frac{8\alpha L^2 (K-1)\taubarmaxt \teta[]^2}{K} \right)\sum_{t=1}^{T-1}\EE{\norm{\nabla f(\w{t})}^2}}_{\calV[5]} + \frac{L(T+s_{T})\sigma^2}{KN}\teta[]^2 \\
    & \quad + (H_1 r_{T}+ H_2 s_{T} + H_3T)\teta[]^3 \\
    & \quad +\underbrace{2\sum_{t=1}^{T-1}\tau_t\sigma L^2 \teta[]^3 \sqrt{\frac{\alpha \lmaxt }{N} \sum_{j=\max\{t-\lmaxt, 1\}}^{t-1}\psum \EE{\norm{\nabla f(\w{\pdl{j}})}^2}}}_{\calV[6]}\\
   &\quad + \underbrace{\frac{(4L^2 + \rho \delta)}{N}\teta[]^3\sum_{t=1}^{T-1}\nsum \tau(t,i)\left(\sum_{j=\dl{t}}^{t-1} \EE{\norm{g_j}^2}\right) - \frac{\teta[]}{2}\left(1-2L\teta[]\right) \sum_{t=1}^{T-1}\EE{\norm{g_t}^2}}_{\calV[7]}\\
    & \quad + \underbrace{\sigma L\teta[]^2\sum_{t=1}^{T-1}\tau_t \sqrt{\EE{\norm{\nabla f(\w{t})}^2 }}}_{\calV[8]}.
\end{aligned}\label{ineq: convergence rate raw}
\end{align}
Next, we bound $\calV[5]$ to $\calV[8]$ respectively. When $\teta[] \leq \sqrt{\frac{1}{32\alpha \taubarmaxt[T] L^2}}$, we have
\begin{align*}
    \calV[5] \leq -\frac{\teta[]}{4} \sum_{t=1}^T\EE{\norm{\nabla f(\w{t})}^2}.
\end{align*}
By Jensen's inequality $(\sum_{t=1}^T \sqrt{a_t})^2 \leq T\sum_{t=1}^T a_t$, i.e. $\sum_{t=1}^T \sqrt{a_t} \leq \sqrt{T\sum_{t=1}^T a_t}$,
\begin{align*}
    \calV[6] & \leq 2\taubarmaxt \sigma L^2 \teta[]^3\sum_{t=1}^T \sqrt{\frac{\alpha \lmaxt }{N} \sum_{j=\max\{t-\lmaxt, 1\}}^{t-1}\psum \EE{\norm{\nabla f(\w{\pdl{j}})}^2}}\\
    & \leq 2\taubarmaxt\sigma L^2 \teta[]^3\sqrt{\frac{\alpha \lmaxt^2 T}{N}\sum_{t=1}^T \psum \EE{\norm{\nabla f(\w{\pdl{t}})}^2}}\\
    & \leq 2\taubarmaxt\lmaxt \sigma L^2 \teta[]^3\sqrt{\alpha T \taubarmaxt[T]\sum_{t=1}^T \EE{\norm{\nabla f(\w{t})}^2}}.
\end{align*}
When $
    \teta[] \leq \frac{1}{4L}
   $ and $ \teta[]  \leq \sqrt{\frac{1}{2(4 L^2+\rho \delta)\dbarmaxt[T]}}
$ , we can bound $\calV[7]$ as
\begin{align*}
   \calV[7] \leq -\frac{\teta[]}{2}\left[1-2L\teta[]- (4L^2 + \rho \delta)\dbarmaxt[T]\teta[]^2\right]\left( \sum_{t=1}^T \EE{\norm{g_t}^2}\right)\leq 0.
\end{align*}
Using $\tau_t \leq \bar{\tau}_{\max, T}$ for all $1 \leq t \leq T-1$ and Jensen's inequality, we have
\begin{align*}
    \calV[8] \leq \sigma L \teta[]^2 \taubarmaxt[T] \sqrt{T \sum_{t=1}^T \EE{\norm{\nabla f(\w{t})}^2}}. 
\end{align*}
After minor rearrangement, \eqref{ineq: convergence rate raw} can be simplified as
\begin{align}
\begin{aligned}
    &\quad \sum_{t=1}^{T-1} \EE{\norm{\nabla f(\w{t})}^2} \\
& \leq \frac{4}{\teta[]}(f(\w{1}) - f(\w{T})) +  \frac{4L(T+s_{T})\sigma^2}{KN}\teta[]  + 4(H_1 r_{T} + H_2 s_{T} + H_3T)\teta[]^2 \\
    & \quad +4\left(1 + 2\lmaxt\sqrt{\alpha\taubarmaxt[T]} L\teta[]\right) \sqrt{T} \sigma L \taubarmaxt[T]\teta[]\sqrt{\sum_{t=1}^T \EE{\norm{\nabla f(\w{t})}^2}}.
\end{aligned}\label{ineq:noncvx telescope}
\end{align}
By \Cref{lemma:smooth}, $\EE{\norm{\nabla f(\w{T})}^2} \leq 2L (\EE{f(\w{T})}- f^*)$. Multiplying both sides by $\teta[]$ and further using $\teta[]\leq \tfrac{1}{4L}$, we have
\begin{align*}
    \teta[]\EE{\norm{\nabla f(\w{T})}^2} & \leq 2L\teta[] (\EE{f(\w{T})}- f^*)\\
    & \leq \frac{1}{2}(\EE{f(\w{T})}- f^*).
\end{align*}
The adding $\EE{\norm{\nabla f(\w{T})}^2}$ to the LHS and $\frac{4}{\teta}\EE{f(\w{T})}-f(\wopt)$ to the RHS, \eqref{ineq:noncvx telescope} can be further simplified as
\begin{align*}
    &\quad \sum_{t=1}^{T} \EE{\norm{\nabla f(\w{t})}^2} \\
& \leq \frac{4}{\teta[]}(f(\w{1}) - f^*) +  \frac{4L(T+s_{T})\sigma^2}{KN}\teta[]  + 4(H_1 r_{T} + H_2 s_{T} + H_3T)\teta[]^2 \\
    & \quad +4\left(1 + 2\lmaxt\sqrt{\alpha\taubarmaxt[T]} L\teta[]\right) \sqrt{T} \sigma L \taubarmaxt[T]\teta[]\sqrt{\sum_{t=1}^T \EE{\norm{\nabla f(\w{t})}^2}}.
\end{align*}
Define $\Omega_T = \sum_{t=1}^T \EE{\norm{\nabla f(\w{t})}^2}$, $H_4 =\frac{4}{\teta[]}(f(\w{1}) - f^*) + \frac{4L(T+s_{T})\sigma^2}{KN}\teta[]  + 4(H_1 r_{T} + H_2 s_{T} + H_3 T)\teta[]^2$, $H_5 = 4\left(1 + 2\lmaxt\sqrt{\alpha\taubarmaxt[T]} L\teta[]\right) \sqrt{T} \sigma L \taubarmaxt[T]\teta[]$. Now we solve the following inequality.
\begin{align*}
    \Omega_T \leq H_5 \sqrt{\Omega_T} + H_4 \Rightarrow \sqrt{\Omega_T} \leq \frac{1}{2}(H_5 + \sqrt{H_5^2 + 4 H_4}) \Rightarrow \Omega_T \leq H_5^2 + 2H_4.
\end{align*}
Therefore,
\begin{align}
\begin{aligned}
    \frac{1}{T}\sum_{t=1}^T \EE{\norm{\nabla f(\w{t})}^2 }&  \leq \frac{8}{T\teta[]}(f(\w{1}) - f^*) + \frac{8L(1+\taubart[T])\sigma^2}{KN}\teta[] \\
    &\quad + 4(H_1 \bar{d}_{T} + H_2 \taubart[T] + H_3)\teta[]^2\\
    &\quad+ 32\sigma^2 \taubarmaxt[T]^2 L^2\teta[]^2 + 128\alpha \sigma^2 \taubarmaxt[T] ^3 \lmaxt^2 L^4 \teta[]^4.
    \end{aligned}\label{ineq:ergodic mean}
\end{align}
Let $\teta[] = c_0\sqrt{\frac{KN}{TL(1+\taubart)}}$, where $c_0$ is a constant and $0 < c_0\leq 1$. We will show that for $T\geq \max \{\frac{64\alpha ^2K N L^3}{L^2 + \rho \delta}, 16L N K, t_0\}$, the following holds.
\begin{align}
\teta[] & \leq \sqrt{\frac{1}{2(4 L^2+\rho \delta)\dbarmaxt[T]}},\label{ineq:eta1}\\
\teta[] &\leq \frac{1}{4L},\label{ineq:eta2}\\
\teta[] &\leq\sqrt{ \frac{1}{32\alpha \taubarmaxt[T] L^2}}.\label{ineq:eta3}
\end{align}
By \Cref{assumption:delay nonconvex}, when  $T \geq t_0$, $\tau(t,i) \leq \frac{1}{4}\sqrt{\frac{LT}{NK(\rho \delta + L^2)}}$, $\forall t \leq T$. Thus \eqref{ineq:eta1} holds.
Since $T\geq 16LNK$, \eqref{ineq:eta2} holds. To verify \eqref{ineq:eta3}, we only have to show
\begin{align*}
    \teta[]^2 \leq \frac{1}{32\alpha \taubarmaxt[T] L^2} \Leftrightarrow \taubarmaxt[T] \leq \frac{T(1+\taubart[T])}{32\alpha c_0^2LKN}.
\end{align*}
Still by \Cref{assumption:delay nonconvex}, we only have to show
\begin{align*}
   \frac{1}{4}\sqrt{\frac{LT}{NK(\rho \delta + L^2)}} \leq \frac{T(1+\taubart[T])}{32\alpha LKN}, 
\end{align*}
which holds for $T \geq \frac{64\alpha ^2KNL^3}{L^2 + \rho \delta}$. Now we only have to plug the value of $\teta[]$ into \eqref{ineq:ergodic mean} and make minor adjustments. Still by \Cref{assumption:delay nonconvex}, we have 
\begin{align*}
    (\lmaxt \teta[])^2 = \mathcal{O}\left(\frac{1}{(\rho\delta + L^2)(1+\taubart)}\right).
\end{align*}
Since $ \min_{1\leq t \leq T}\EE{\norm{\nabla f(\w{t})}^2 } \leq \frac{1}{T}\sum_{t=1}^T \EE{\norm{\nabla f(\w{t})}^2 }$, we have
\begin{align*}
   \min_{1\leq t \leq T}\EE{\norm{\nabla f(\w{t})}^2 }  = \mathcal{O}\left(\sqrt{\frac{(1+\taubart)L}{TKN}}(f(\w{1}) - f^*+\sigma^2)+\frac{A_6}{T}\right),
  \end{align*}
where
\begin{align*}
    A_6 = \frac{1}{(1+\taubart)}\left[\sigma^2 \taubarmaxt^2 N K L\left(1+\frac{\alpha \taubarmaxt L^2}{(\rho \delta+L^2)(1+\taubart)}\right) + \frac{(L^2 + \rho \delta)\sigma^2}{L}\bar{d}_T\right.\\
    +(K-1)N L(\beta  +\sigma^2/K)\Big]+ LKN \tau_{\max, T}\sigma \sqrt{\beta + \frac{\sigma^2}{KN}}.
\end{align*}
Now we have completed the proof of \Cref{theorem:noncvx_general}. The following corollary is the same as \Cref{theorem:noncvx}, which holds under the assumption of bounded number of inactive rounds.
\begin{corollary}[Bounded number of inactive rounds]
\label{corr:noncvx}
Assume that Assumptions \ref{assumption:smooth}, \ref{assumption:noise},  and \ref{assumption: hessian} to \ref{assumption: bounded dissimilarity for noncvx} hold.   Further assume that the device availability sequence $\tau(t,i)$ satisfies 
 \Cref{assumption:constant delay} and $\tau(1,i)=0$ for all $i \in [N]$. By using a learning rate $\eta = \sqrt{\frac{N}{KTL(1+\bar{\nu})}}$, for $T\geq \max \{32\alpha LNK, 16L N K, \frac{8 KN\nu_{\max}^2(L^2+\rho \delta)}{L}\}$, after $T-1$ communication rounds, \algoname[] satisfies:
\begin{align*}
    \min_{1\leq t \leq T} \EE[\xi]{\norm{\nabla f({w}_t)}^2} = \mathcal{O}\left(\sqrt{\frac{(1+\bar{\nu})L}{TKN}}(f(\w{1}) - f^*+\sigma^2)+\frac{A_4+A_5}{T}\right),
\end{align*}
where $f^*$ is the optimal value, and:
\begin{align*}
    A_4 &= N K L \left(\alpha \sigma^2\bar{\nu}+\frac{\sigma^2\nu_{\max}}{\sqrt{KN}}+\sigma\nu_{\max}\sqrt{\beta}\right) +\frac{(L^2+ \rho\delta) \sigma^2 \nu_{\max}}{L},\\
    A_5 &= \frac{(K-1)N L(\beta  +\sigma^2/K)}{\bar{\nu}+1}.
\end{align*}
\end{corollary}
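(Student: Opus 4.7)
The plan is to obtain Corollary~\ref{corr:noncvx} as a direct specialization of Theorem~\ref{theorem:noncvx_general} to the uniform-boundedness setting of Assumption~\ref{assumption:constant delay}. The argument has two components: checking that the hypotheses of Theorem~\ref{theorem:noncvx_general} hold under $\tau(t,i)\le \nu_i\le \nu_{\max}$, and translating the inactivity statistics appearing in its conclusion into the simpler quantities $\bar{\nu}$ and $\nu_{\max}$.

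For the hypotheses, first I would verify Assumption~\ref{assumption:delay nonconvex}. Taking $t_0 = 16KN(L^2+\rho\delta)\nu_{\max}^2/L$, the bound
$\tau(t,i)\le \nu_{\max}\le \tfrac14\sqrt{Lt_0/(KN(L^2+\rho\delta))}\le \tfrac14\sqrt{L\max\{t,t_0\}/(KN(L^2+\rho\delta))}$
certifies the growth condition for every $t$. Next, $T\ge 16LNK$ is directly in the hypotheses. The remaining stepsize constraint used in the proof of Theorem~\ref{theorem:noncvx_general} is $\eta^2K^2\le 1/(32\alpha\bar{\tau}_{\max,T}L^2)$; plugging in $\eta^2=N/(KTL(1+\bar{\nu}))$ and using $\bar{\tau}_{\max,T}\le \bar{\nu}$, it suffices that $T\ge 32\alpha LNK\bar{\nu}/(1+\bar{\nu})$, which follows from $T\ge 32\alpha LNK$. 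This refines the looser constraint $T\ge 64\alpha^2 KNL^3/(L^2+\rho\delta)$ from Theorem~\ref{theorem:noncvx_general}; the only place I would revisit that proof is to confirm this weakening is legitimate.

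Once the theorem applies, I would bound each inactivity statistic by the uniform quantities,
\[
\bar{\tau}_T\le \bar{\nu},\quad \tau_{\max,T}\le \nu_{\max},\quad \bar{\tau}_{\max,T}\le \bar{\nu},\quad l_{\max,T}\le 2\nu_{\max},\quad \bar{d}_T\le \nu_{\max}\bar{\nu},
\]
where the last uses $\tau(t,i)^2\le \nu_{\max}\tau(t,i)$. Substituting into $A_6$, the first summand collapses to $\alpha\sigma^2\bar{\nu}NKL$ via $\bar{\tau}_{\max,T}^2/(1+\bar{\tau}_T)\le \bar{\nu}$ and $L^2/(L^2+\rho\delta)\le 1$; the $\bar{d}_T$ summand yields $(L^2+\rho\delta)\sigma^2\nu_{\max}/L$; the $(K-1)NL(\beta+\sigma^2/K)/(1+\bar{\tau}_T)$ summand is exactly $A_5$; and splitting $\sqrt{\beta+\sigma^2/(KN)}\le \sqrt{\beta}+\sigma/\sqrt{KN}$ in the straggler term $LKN\tau_{\max,T}\sigma\sqrt{\beta+\sigma^2/(KN)}$ produces the remaining pieces $NKL\sigma\nu_{\max}\sqrt{\beta}$ and $NKL\sigma^2\nu_{\max}/\sqrt{KN}$ of $A_4$. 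The leading rate $\sqrt{(1+\bar{\tau}_T)L/(TKN)}$ similarly becomes $\sqrt{(1+\bar{\nu})L/(TKN)}$.

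The main obstacle is the stepsize verification in the second paragraph: to argue that $T\ge 32\alpha LNK$ suffices, one must look inside the proof of Theorem~\ref{theorem:noncvx_general} and observe that its stepsize inequality is only invoked with $\bar{\tau}_{\max,T}$, so replacing that quantity by its uniform upper bound $\bar{\nu}$ saves a factor of roughly $\alpha L^2/(L^2+\rho\delta)$ in the constraint on $T$. Every other step is a routine substitution of uniform upper bounds.
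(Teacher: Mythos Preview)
Your overall strategy matches the paper's---reverify the three stepsize constraints under Assumption~\ref{assumption:constant delay} and then simplify the residual terms---but there is a real gap in the way you bound the first summand of $A_6$. You claim
\[
\frac{\bar{\tau}_{\max,T}^2}{1+\bar{\tau}_T}\le \bar{\nu},
\]
but this is false in general: nothing prevents $\bar{\tau}_T$ from being much smaller than $\bar{\tau}_{\max,T}$. For instance, with a single device that is inactive for rounds $2,\dots,101$ and active thereafter over a horizon $T=10^6$, one has $\bar{\tau}_{\max,T}=100$, $\bar{\tau}_T\approx 0.005$, and $\bar{\nu}=100$, so the left side is roughly $10^4$ while the right side is $100$. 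Hence the ``collapses to $\alpha\sigma^2\bar{\nu}NKL$'' step cannot be obtained by black-box substitution into the $A_6$ formula of Theorem~\ref{theorem:noncvx_general}.

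The paper avoids this by \emph{not} substituting into $A_6$ at all. It returns to the pre-substitution inequality~\eqref{ineq:ergodic mean}, reverifies the stepsize bounds \eqref{ineq:eta1}--\eqref{ineq:eta3} under Assumption~\ref{assumption:constant delay} (exactly as you outline), and then plugs in the Corollary's learning rate $\teta[]^2=KN/(TL(1+\bar\nu))$ directly. The crucial difference is that this puts $(1+\bar\nu)$ rather than $(1+\bar\tau_T)$ in the denominator of every $\teta[]^2$-proportional term; in particular the problematic term becomes $32\sigma^2\bar{\tau}_{\max,T}^2L^2\teta[]^2\le 32\sigma^2\bar\nu^2 LKN/(T(1+\bar\nu))\le 32\sigma^2\bar\nu LKN/T$, since $\bar\nu^2/(1+\bar\nu)\le\bar\nu$ is now a valid step. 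The same mechanism handles the $\teta[]^4$ term via $\teta[]^2\nu_{\max}^2=\mathcal{O}(1/((1+\bar\nu)(L^2+\rho\delta)))$. Your remaining substitutions (for $\bar d_T$, the $(K-1)$ term, and the $\sqrt{\beta+\sigma^2/(KN)}$ split) are fine and agree with the paper.
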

\begin{proof}
We first show that \eqref{ineq:eta1} to \eqref{ineq:eta3} hold. \eqref{ineq:eta1} holds because when $T \geq \frac{8KN\nu_{\max}^2(L^2+\rho\delta)}{L}$,
\begin{align*}
    \teta[] \leq \sqrt{\frac{1}{2(4L^2+\rho\delta)\nu_{\max}^2}}.
\end{align*}
Also, \eqref{ineq:eta2} holds when $T\geq 16LNK$. \eqref{ineq:eta3} holds when $T\geq 32\alpha LNK$. Therefore, \eqref{ineq:ergodic mean} holds and it can be further simplified as
\begin{align}
    \begin{aligned}
      \frac{1}{T}\sum_{t=1}^T \EE{\norm{\nabla f(\w{t})}^2 }&  \leq \frac{4}{T\teta[]}(f(\w{1}) - f^*) + \frac{4L(1+\bar{\nu})\sigma^2}{KN}\teta[] \\
    &\quad + 4\left[H_1\left(\frac{1}{N}\nsum \nu_i^2\right) + H_2 \bar{\nu} + H_3\right]\teta[]^2\\
    &\quad+ 32\sigma^2 \bar{\nu}^2 L^2\teta[]^2 + 512\alpha \sigma^2\bar{\nu} ^3 \nu_{\max}^2 L^4 \teta[]^4.
    \end{aligned}\label{ineq:ergodic mean2}
\end{align}
Since $T\geq \frac{8KN\nu_{\max}^2(L^2+\rho\delta)}{L}$,
\begin{align*}
    \teta[]^2 \nu_{\max}^2=\mathcal{O}\left( \frac{1}{(1+\bar{\nu})(L^2 + \rho \delta)}\right).
\end{align*}   
Therefore,
\begin{align*}
    \alpha \sigma^2 \bar{\nu}^3 \nu_{\max}^2 L^4 \teta[]^4 = \mathcal{O}\left(\frac{\alpha \sigma^2 \bar{\nu}^2 L^4}{L^2+\rho \delta} \teta[]^2\right) = \mathcal{O}\left(\frac{\alpha \sigma^2 \bar{\nu} LKN}{T} \right).
\end{align*}
Besides, 
\begin{align*}
    32\sigma^2\bar{\nu^2}L^2\teta[]^2 &=\mathcal{O}\left(\frac{LKN\bar{\nu}\sigma^2}{T}\right),\\
    H_1 \left(\frac{1}{N}\nsum \nu_i^2\right)&=\mathcal{O}\left(\frac{L^2+\rho\sigma^2}{L(\bar{\nu}+1)}\left(\frac{1}{N}\nsum \nu_i^2\right)\right) = \mathcal{O}\left(\frac{(L^2+\rho\sigma^2)\nu_{\max}}{L}\right), \\
    H_2\bar{\nu}\teta[]^2&=\mathcal{O}\left(\frac{LKN\nu_{\max}}{T}\left(\sigma\sqrt{\beta}+\sigma^2/\sqrt{KN}\right)\right),\\
    H_3\teta[]^2 &= \mathcal{O}\left(\frac{(K-1)NL(\beta+\sigma^2/K)}{1+\bar{\nu}}\right).
\end{align*}
Now we have completed the proof of \Cref{corr:noncvx}.
 \end{proof}

\section{Proofs in Section~\ref{sec:case_study}}

Our analysis is based on the observation that $\tau(t,i)$ is a truncated geometric random variable with success probability $p_i$ for the Bernoulli participation model.

\begin{lemma}
\label{lem:prop:tau}
For i.i.d. Bernoulli participation model with participation probabilities $\{p_i\}$, we have $\tau(t,i)$ is a truncated geometric random variable taking values in $\{0, 1, \dots, t-1\}$.
\end{lemma}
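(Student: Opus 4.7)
The plan is to verify the claim by directly computing $\Pr[\tau(t,i) = k]$ for each $k \in \{0, 1, \dots, t-1\}$, exploiting the independence built into the i.i.d.\ Bernoulli participation model together with the deterministic activation at round $1$. From the definition $\tau(t,i) = t - \max\{t' \le t : i \in \mathcal{A}(t')\}$, the event $\{\tau(t,i) = k\}$ corresponds to a precise sequence of activations of device $i$, which makes this essentially a calculation.

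First I would handle the three cases separately. For $k = 0$, the event is $\{i \in \mathcal{A}(t)\}$; by Definition~\ref{def:bernoulli}, this has probability $p_i$ when $t > 1$ and probability $1$ when $t = 1$. For $1 \le k \le t-2$, the event is $\{i \notin \mathcal{A}(t), i \notin \mathcal{A}(t-1), \dots, i \notin \mathcal{A}(t-k+1), i \in \mathcal{A}(t-k)\}$; since $t-k \ge 2$, all of these are independent Bernoulli trials, giving probability $(1-p_i)^k p_i$. For $k = t-1$, the event is $\{i \notin \mathcal{A}(t), \dots, i \notin \mathcal{A}(2)\}$, where we use that round $1$ is deterministic ($i \in \mathcal{A}(1)$ always), so the probability is $(1-p_i)^{t-1}$. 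Summing these probabilities telescopes to $1$, confirming that $\tau(t,i)$ indeed takes values in $\{0,1,\dots,t-1\}$.

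The main technical care is simply bookkeeping around the boundary at $t = 1$: without the assumption that all devices are active at round $1$, the upper value $k = t-1$ could occur with probability $(1-p_i)^{t-1}$ while $\tau(t,i)$ would in principle be undefined if the device had never been active. The deterministic first round truncates the geometric distribution exactly at $t-1$ and fixes the total probability to $1$. I would state the final distribution explicitly as
\[
\Pr[\tau(t,i) = k] = \begin{cases} (1-p_i)^k p_i, & 0 \le k \le t-2, \\ (1-p_i)^{t-1}, & k = t-1, \end{cases}
\]
which matches the distribution of $\min\{G, t-1\}$ for $G$ a geometric random variable with success probability $p_i$, completing the proof. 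There is no substantive obstacle here; the lemma is essentially a restatement of the model in probabilistic language.
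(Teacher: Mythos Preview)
Your proposal is correct and follows essentially the same approach as the paper: both arguments identify the event $\{\tau(t,i)=k\}$ (or, in the paper's case, $\{\tau(t,i)\ge k\}$) with a specific pattern of inactivity at rounds $t,t-1,\dots,t-k+1$ and invoke independence of the Bernoulli trials. The only cosmetic difference is that the paper computes the tail probabilities $\Pr[\tau(t,i)\ge k]=(1-p_i)^k$ for $k<t$ directly, whereas you compute the point masses and then verify they sum to one; both characterizations pin down the same truncated geometric law.
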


\begin{proof}
Notice that for $ k < t$, the event $\{ \tau(t,i) \geq k \}$ is equivalent to the event that device $i$ is not active at round $t, t-1, \dots, t-k+1$, which means
\[
    \PP ({\tau(t,i) \geq k}) = (1-p_i)^k, \text{ for } k < t.
\]
Also, since we have assumed that all devices participate at the first round, we have $\PP ({\tau(t,i) \geq t}) = 0$. 
\end{proof}

\subsection{Proof of Theorem~\ref{thm:bernoulli_delay}}
\begin{proof}
By Lemma~\ref{lem:prop:tau}, we know that for all $k$,
\[
    \PP ({\tau(t,i) \geq k}) \leq (1-p_i)^k.
\]
For any fixed $0 < \delta_t < 1$, by setting $k = \lceil \frac{\log(1/\delta_t)}{\log (1/(1-p_i))} \rceil$, we have  $\PP ({\tau(t,i) \geq k}) \leq \delta_t$. This means with probability at least $1-\delta_t$, we have
\[
    \tau(t,i) \leq 1 + \frac{\log (1/\delta_t)}{ \log (1/(1-p_i))}.
\]
By choosing $\delta_t = \frac{6}{\pi^2} \cdot \frac{\delta}{t^2 N}$ and taking union bound over all $ t \geq 1$ and $i \in [N]$, we have with probability at least $1-\delta$, 
\[
    \tau(t,i) \leq 1 + \frac{\log ( \frac{\pi^2} {6}\cdot \frac{t^2 N}{\delta} )}{ \log (1/(1-p_i))} = 1 + \frac{1}{\log (1/(1-p_{i}))} \Big[ \log(\frac{\pi^2}{6\delta}) + 2\log t + \log N \Big].
\]
Using the inequality that $\frac{1}{\log (1/(1-p_{i}))} \leq 1/p_i$ (which is tight when $p_i \approx 0$), we further have 
\[
    \tau(t,i) \leq  1 + \frac{1}{p_i} \Big( 2\log t + \log N + \log \frac{\pi^2}{6\delta} \Big)  = \cO \Big( \frac{1}{p_i} (1 + \log (Nt/\delta)) \Big) .
\]

For \Cref{assumption:delay} to hold, We need to find a $t_0$ such that for all $t$,
\[
    1 + \frac{1}{p_{min}} \Big[ \log(\frac{\pi^2}{6\delta}) + 2\log t + \log N \Big] \leq t_0 + \frac{t}{b}.
\]
Solving this inequality, we get
\[
    t_0 \geq  \frac{2}{p_{min}} \Big (\log \frac{2b}{p_{min}}  - 1 \Big )  + \frac{1}{p_{min}} \log \frac{\pi^2 N}{6\delta} + 1 ,
\]
which is satisfied if 
\[
 t_0 \geq C \frac{1}{p_{min}} \log\frac{bN}{p_{min} \delta}
\]
for an absolute constant $C>0$. 
\end{proof}

\subsection{Proof of Theorem~\ref{thm:bernoulli_tau_T}}
\begin{proof}
By Lemma~\ref{lem:prop:tau}, we have 
\[
    \EE{\tau(t,i)} = \sum_{k =1 }^\infty \PP \Big( \tau(t,i) \geq k \Big) = \sum_{k =1 }^{t-1} (1-p_i)^k \leq \frac{1}{p_i}.
\]
Therefore, we can upper bound the expectation of $\bar{\tau}_T = \frac{1}{N(T-1)} \sum_{t=1}^{T-1} \sum _{i=1}^N \tau(t,i)$ as
\[
\EE{\bar{\tau}_T } \leq \frac{1}{N} \sum_{i=1}^N \frac{1}{p_i}.
\]

Furthermore, we know that $\tau(t,i)$ is sub-exponential with $\| \tau(t,i) \|_{\psi_1} \leq C_1 \frac{1}{p_i} $ \citep{vershynin2018high}. Then we know that $\bar{\tau}_T - \EE{\bar{\tau}_T}$ is sub-exponential with  $\|\bar{\tau}_T - \EE{\bar{\tau}_T} \|_{\psi_1} \leq C_2 \frac{1}{N} \sum_{i=1}^N \frac{1}{p_i}$. Therefore, by Bernstein's inequality \citep{vershynin2018high}, we have with probability at least $1-\delta$,
\[
    \bar{\tau}_T - \EE{\bar{\tau}_T} \leq C_3 \Big( \frac{1}{N} \sum_{i=1}^N \frac{1}{p_i} \Big) \cdot \max \Big( \log\frac{1}{\delta}, 1 \Big).
\]
We conclude that 
\[
    \bar\tau_T \leq \Big( \frac{1}{N} \sum_{i=1}^N \frac{1}{p_i} \Big) \cdot \cO \Big(1 + \log\frac{1}{\delta} \Big)  .
\]
Remark: $C_1, C_2, C_3 > 0$ are absolute constants.
\end{proof}

\subsection{Additional Discussion on the Expected Waiting Time}
\label{app:fedavg:expected_wait}

To accomplish a single global update, algorithms such as FedAvg and SCAFFOLD need to receive the local updates from a randomly sampled subset $\cS$ of devices. In our setting, the server needs to wait for a few rounds so that all devices in $\cS$ become active and return the computation result during the these rounds. For i.i.d. Bernoulli participation model, the expected rounds for the $i$-th device to become active is $1/p_i$. Therefore, the expected rounds for all the devices in $\cS$ to become active is at least $ \frac{1}{\min \{p_i | i \in \cS  \}} $. 

Denote by $T({\cS})$ the expected rounds for all the devices in $\cS$ to become active, under the setting that $\cS$ is randomly selected from $N$ devices without replacement, we have
\[
    \EE[\cS] {T({\cS})}  \geq \frac{1}{p_{min} } \PP _{\cS} ( \text{ the device with minimal $p_i$ is selected  } ) = \frac{S}{N}\frac{1}{p_{min} }.
\]

\section{Proof of Proposition~\ref{thm:lower}}\label{app:lower}

\begin{proof}
This lower bound actually holds even for centralized algorithms. We first show that a lower bound for centralized optimization implies a lower bound on our case. We then analyze the lower bound for the standard optimization setup.

\paragraph{Number of gradient evaluations.} Assume that we have $N$ devices, and each device respond every $2\tau$ rounds of communication.  Then by definition $\bar{\tau}_T = \Theta(\tau)$, and only $ \Theta(NKT/\bar{\tau}_T)$ stochastic gradients are evaluated. Hence, the theorem is proved if we can show that no algorithms can output a (potentially random) $w_T$ within $\mathcal{T}$ stochastic gradients evaluations satisfying
\begin{align*}
    \mathbb{E}[f(w_T) - f(w^*)] \ge c_0 \frac{ \sigma^2}{\mu \mathcal{T}}. 
\end{align*}

\paragraph{Uncontrained stochastic optimization lower bound.} The constrained version of the above inequality has been formally proved by multiple works (e.g.\cite{agarwal2009information, nemirovskij1983problem}). These results do not readily applied as we did not assume the function to be Lipschitz continuous. The smooth but not Liptschitz continuous case is a \emph{folklore} in optimization community (e.g. see \cite{ghadimi2012optimal} equation 1.3). We provide a short proof \emph{for completeness} following \citep{bubeck2013bandits, zhang2020adaptive}.

For a given $\mu \in (0, 1], \sigma > 1$, we consider the following simple one-dimensional function class parameterized by $b$:
\begin{equation}\label{eqn:function-class}
    \min_{x } \cbr*{f_{b}(x):= \tfrac{\mu}{2}(x - b)^2 }\,, \text{ for } b \in [0,1/2]\,.
\end{equation}


Note that $f_b$ is $1$-smooth and $\mu$-strongly convex.

Also suppose that for $b \in [0,1/2]$ the stochastic gradients are of the form:
\begin{equation}\label{eqn:noise-class}
    g(x) \sim \nabla f_b(x) + \chi_{b}\,, \E[g(x)] = \nabla f_b(x)\,, \text{ and } \E[|g(x) - \nabla f_b(x)|^2] \leq  \sigma^2.
\end{equation}
Note that the function class \eqref{eqn:function-class} has optimum value $f_b(b) = 0$. Thus, we want to prove the following:

\begin{theorem}
There exists a distribution $\chi_{b}$ such that the stochastic gradients satisfy \eqref{eqn:noise-class}. Further, for any (possibly randomized) algorithm $\Ac$, define $\Ac_k\rbr*{f_b + \chi_{b}}$ to be the output of the algorithm $\Ac$ after $k$ queries to the stochastic gradient $g(x)$, then:
\[
    \max_{b \in [0,1/2]} \E[f_b(\Ac_k(f_b + \chi_{b}))] \geq \frac{c_0 \sigma^2 }{k\mu}\,.
\]
\end{theorem}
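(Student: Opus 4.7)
The plan is to prove the theorem by a standard Le Cam two-point reduction. First I would fix two candidate parameters $b_1 = 0$ and $b_2 = \Delta$ for a small separation $\Delta \in (0, 1/2]$ to be chosen later, which yields two problem instances $f_{b_1}, f_{b_2}$ in the class \eqref{eqn:function-class}. The crucial separation lemma is that for any $w \in \mathbb{R}$, one has $f_{b_1}(w) + f_{b_2}(w) = \tfrac{\mu}{2}(w^2 + (w-\Delta)^2) \ge \tfrac{\mu \Delta^2}{4}$, so at least one of $f_{b_j}(w) \ge \tfrac{\mu \Delta^2}{8}$. Hence lower bounding the worst-case optimization error reduces to showing that no algorithm can reliably identify $b$ from its stochastic gradient observations.

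Next I would construct the stochastic oracle $\chi_b$ via a biased Bernoulli so that each sample is $\sigma$-bounded and has variance $\le \sigma^2$: at query $x$, return $g(x) = \mu x - \sigma Z$ where $Z \in \{-1, +1\}$ takes value $+1$ with probability $\tfrac{1}{2} + \tfrac{\mu b}{2\sigma}$. Then $\mathbb{E}[g(x)] = \mu(x - b) = \nabla f_b(x)$ and $\mathrm{Var}[g(x)] = \sigma^2(1 - (\mu b/\sigma)^2) \le \sigma^2$; the Bernoulli parameter lies in $[0,1]$ because $\mu b \le \mu/2 < \sigma$ by assumption. Under this oracle, the per-query KL divergence between the response laws for $b_1$ and $b_2$ is between two Bernoullis whose means are within $\tfrac{1}{2} \pm O(\mu \Delta / \sigma)$ of each other, giving $\mathrm{KL}(P_{b_1}(\cdot \mid x) \,\|\, P_{b_2}(\cdot \mid x)) \le C \mu^2 \Delta^2 / \sigma^2$ uniformly in $x$.

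Then I would apply the KL chain rule along the adaptive query sequence (which is valid because each query $x_i$ is measurable with respect to the past transcript in both worlds): the total KL divergence between the laws of the length-$k$ transcripts in the two problems is at most $C k \mu^2 \Delta^2 / \sigma^2$. Pinsker's inequality converts this to a total variation bound on the law of $\mathcal{A}_k(f_b + \chi_b)$ in the two worlds. Choosing $\Delta = c \sigma /(\mu \sqrt{k})$ for a small absolute constant $c$ makes the total variation bounded by $\tfrac{1}{4}$, say, so no estimator can be within $\Delta/2$ of $b$ in both worlds with probability exceeding $\tfrac{3}{4}$; combined with the separation lemma this forces $\max_{b \in \{b_1, b_2\}} \mathbb{E}[f_b(\mathcal{A}_k(f_b + \chi_b))] \gtrsim \mu \Delta^2 \gtrsim \sigma^2/(\mu k)$, which is the claimed bound after absorbing constants into $c_0$. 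The main obstacle, though technically standard, is the careful bookkeeping for adaptive algorithms in the KL chain rule; additionally one must verify that the candidate values $b_1, b_2$ remain in $[0, 1/2]$, which holds automatically for $k$ larger than an absolute constant and can be patched for small $k$ by a trivial argument.
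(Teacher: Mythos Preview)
Your proposal is correct and follows essentially the same Le Cam two-point reduction as the paper: both pick two parameters separated by $\Theta(\sigma/(\mu\sqrt{k}))$, use a two-point (Bernoulli) noise oracle whose per-query KL divergence is $\Theta(\mu^2\Delta^2/\sigma^2)$, and conclude via Pinsker. The cosmetic differences are that the paper uses Yao's minimax plus a bijective mapping to reduce the transcript to i.i.d.\ Bernoulli samples (and a $\{0,-1/(2\gamma)\}$-valued noise), whereas you handle adaptivity directly via the KL chain rule with a symmetric $\pm\sigma$ noise; both are standard and equivalent here.
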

We assume the algorithm of interest is stable, i.e. $\max_{b \in [0,1/2]} \E[f_b(\Ac_k(f_b + \chi_{b}))] \le \infty.$ Otherwise, the theorem is true.

Let $\Ac_k(f_b + \chi_{b})$ denote the output of any possibly randomized algorithm $\Ac$ after processing $k$ stochastic gradients of the function $f_b$ (with noise drawn i.i.d. from distribution $\chi_{b}$). Similarly, let $\Dc_k(f_b + \chi_{b})$ denote the output of a \emph{deterministic} algorithm after processing the $k$ stochastic gradients. Then from Yao's minimax principle we know that for any fixed distribution $\Bc$ over $[0,1/2]$,
\[
    \min_{\Ac}\max_{b \in [0,1/2]}\E_{\Ac}\sbr*{ \E_{\chi_b}f_b(\Ac_k(f_b + \chi_{b}))} \geq \min_{\Dc} \E_{b \sim \Bc}\sbr*{\E_{\chi_b}f_b(\Dc_k(f_b + \chi_{b}))}\,.
\]
Here we denote $\E_{\Ac}$ to be expectation over the randomness of the algorithm $\Ac$ and $\E_{\chi_b}$ to be over the stochasticity of the the noise distribution $\chi_b$.
Hence, we only have to analyze deterministic algorithms to establish the lower-bound. Further, since $\Dc_k$ is deterministic, for any \emph{bijective} transformation $h$ which transforms the stochastic gradients, there exists a deterministic algorithm $\tDc$ such that $\tDc_k(h(f_b + \chi_{b})) = \Dc_k(f_b + \chi_{b})$. This implies that for any bijective transformation $h(\cdot)$ of the gradients:
\[
    \min_{\Dc} \E_{b \sim \Bc}\sbr*{\E_{\chi_b}f_b(\Dc_k(f_b + \chi_{b}))} =   \min_{\Dc} \E_{b \sim \Bc}\sbr*{\E_{\chi_b} f_b(\tDc_k(h(f_b + \chi_{b})))}\,.
\]
In this rest of the proof, we will try obtain a lower bound for the right hand side above.

We now describe our construction of the three quantities to be defined: the problem distribution $\Bc$, the noise distribution $\chi_{b}$, and the bijective mapping $h(\cdot)$. All of our definitions are parameterized by $\epsilon \in (0,1/8]$ (which represents the desired target accuracy). We will pick $\epsilon$ to be a fixed constant which depends on the problem parameters (e.g. $k$) and should be thought of as being small.
\begin{itemize}
    \item{Problem distribution:} $\Bc$ picks $b_0 = 2\epsilon \sigma/\mu$ or $b_1 = \epsilon \sigma/\mu$ at random i.e. $\nu \in \{0,1\}$ is chosen by an unbiased coin toss and then we pick
    \begin{equation}\label{eqn:lower-prob}
        b_{\nu} = (2-\nu)\epsilon \frac{\sigma}{\mu} \,.
    \end{equation}
    \item{Noise distribution:} Define a constant $\gamma = 4\epsilon / \sigma$ and $p_\nu = (16\epsilon^2 - 8\nu \epsilon^2) $. Simple computations verify that $\gamma \in (0,1/2]$ and that
    \[
        p_\nu =  (4-2\nu)\rbr*{4 \epsilon^2} \in (0,1)\,.
    \]
    Then, for a given $\nu \in \{0,1\}$ the stochastic gradient $g(x)$ is defined as
    \begin{equation}\label{eqn:lower-noise}
        g(x) = \begin{cases}
                \mu x - \frac{1}{2\gamma} &\text{ with prob. } p_\nu\,,\\
                \mu x &\text{ with prob. } 1 - p_\nu\,.
            \end{cases}
    \end{equation}
    To see that we have the correct gradient in expectation verify that 
    \[
        \E[g(x)] = \mu x - \frac{ p_\nu}{2\gamma} =  \mu x -  \mu b_\nu = \nabla f_{b_{\nu}}(x)\,.
    \]
    Next to bound the variance of $g(x)$. We see that
    \[
        \E[\abs{g(x) -  \nabla f_b (x)}^2] \leq p_\nu \rbr*{ \frac{1}{2\gamma}}^{2} + (1- p_\nu ) \mu ^2 b_\nu^{2} \leq \sigma^2\,.
    \]
    Thus $g(x)$ defined in \eqref{eqn:lower-noise} satisfies condition \eqref{eqn:noise-class}.
    \item{Bijective mapping:} Note that here the only unknown variable is $\nu$ which only affects $p_\nu$. Thus the mapping is bijective as long as the \emph{frequencies} of the events are preserved. Hence given a stochastic gradient $g(x_i)$ the mapping we use is:
    \begin{equation}\label{eqn:lower-map}
        h(g(x_i)) = \begin{cases}
                        0 &\text{ if } g(x_i) =   \mu x_i \,,\\
                        1 &\text{ otherwise.}
                        \end{cases}
    \end{equation}
\end{itemize}


Given the definitions above, the output of algorithm $\Dc_k$ is thus simply a function of $k$ i.i.d. samples drawn from the Bernoulli distribution with parameter $p_\nu$ (which is denoted by $\ber(p_\nu)$). We now show how achieving a small optimization error implies being able to guess the value of $\nu$.

\begin{lemma}\label{lem:guessing-game}
Suppose we are given problem and noise distributions defined as in \eqref{eqn:lower-prob} and \eqref{eqn:lower-noise}, and an bijective mapping $h(\cdot)$ as in \eqref{eqn:lower-map}. Further suppose that there is a deterministic algorithm $\Dc_k$ whose output after processing $k$ stochastic gradients satisfies
\[
    \E_{b \sim \Bc}\sbr*{\E_{\chi_b} f_b(\Dc_k(h(f_b + \chi_{b})))} < \frac{\epsilon^2\sigma^2}{ 64\mu }\,.
\]
Then, there exists a deterministic function $\tDc_k$ which given $k$ independent samples of $\ber(p_\nu)$ outputs $\nu' = \tDc_k(\ber(p_\nu)) \in \{0,1\}$ such that
\[
    \Pr\sbr*{\tDc_k(\ber(p_\nu)) = \nu} \geq \frac{3}{4}\,.
\]
\end{lemma}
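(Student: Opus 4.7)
The plan is to build $\tilde{\Dc}_k$ by first reconstructing the output $w := \Dc_k(h(f_{b_\nu}+\chi_{b_\nu}))$ from the $k$ Bernoulli samples, and then thresholding $w$ at the midpoint of $b_0$ and $b_1$ to guess $\nu$. I would proceed in three stages.

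First, I would show that $\tilde{\Dc}_k$ can simulate $\Dc_k$ using only samples from $\ber(p_\nu)$. Since $\Dc_k$ is deterministic, its initial query $x_1$ is fixed, and $x_i$ is a fixed function of the previously returned gradients. Given a Bernoulli bit $b_i$ and the reconstructed query $x_i$, the bijection $h$ from~\eqref{eqn:lower-map} lets me recover $g(x_i)=\mu x_i$ if $b_i=0$ and $g(x_i)=\mu x_i - 1/(2\gamma)$ if $b_i=1$. Iterating through all $k$ samples recovers the entire trajectory of $\Dc_k$ and hence $w$. The joint distributions match because, conditioned on any history, $h(g(x_i))$ is an independent $\ber(p_\nu)$ draw by the definition of $\chi_{b_\nu}$, matching the i.i.d.\ Bernoulli stream exactly.

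Second, I would define the threshold rule: output $\nu'=1$ if $w<\tfrac{3\epsilon\sigma}{2\mu}$ (the midpoint of $b_0=2\epsilon\sigma/\mu$ and $b_1=\epsilon\sigma/\mu$) and $\nu'=0$ otherwise. Since $|b_0-b_1|=\epsilon\sigma/\mu$, this rule is correct whenever $|w-b_\nu|<\epsilon\sigma/(2\mu)$. Third, I would apply Markov's inequality. The quadratic structure $f_b(x)=\tfrac{\mu}{2}(x-b)^2$ converts the hypothesis $\E[f_{b_\nu}(w)]<\epsilon^2\sigma^2/(64\mu)$ into $\E[(w-b_\nu)^2]<\epsilon^2\sigma^2/(32\mu^2)$, so
\[
\PP\left(|w-b_\nu|\ge \tfrac{\epsilon\sigma}{2\mu}\right)\le \frac{\epsilon^2\sigma^2/(32\mu^2)}{\epsilon^2\sigma^2/(4\mu^2)}=\frac{1}{8},
\]
yielding a success probability of at least $7/8 \ge 3/4$, as required.

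The main subtlety lies in the first step: one must justify that driving the simulation of $\Dc_k$ with an externally supplied i.i.d.\ $\ber(p_\nu)$ stream produces a trajectory with the same joint law as running $\Dc_k$ on true stochastic gradients drawn from $f_{b_\nu}+\chi_{b_\nu}$. This coupling argument is where the bijectivity of $h$ is crucial; once that is established, both the thresholding rule and the Markov step are routine, and the constant $1/64$ in the hypothesis is precisely chosen to make the two-level Markov bound give $1/8$.
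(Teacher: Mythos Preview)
Your proposal is correct and follows essentially the same approach as the paper: simulate $\Dc_k$ from the Bernoulli stream via the bijection $h$, threshold the output at the midpoint $\tfrac{3\epsilon\sigma}{2\mu}$, and use Markov's inequality on $(w-b_\nu)^2$. The only cosmetic difference is that you apply Markov directly to the joint law over $(\nu,\chi_b)$ to get failure probability $1/8$, whereas the paper first conditions on $\nu$ (each conditional second moment is below $\epsilon^2\sigma^2/(16\mu^2)$ since both terms are nonnegative and their average is below $\epsilon^2\sigma^2/(32\mu^2)$) and obtains $1/4$; either route suffices.
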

\begin{proof}
Suppose that we are given access to $k$ samples of $\ber(p_\nu)$. Use these $k$ samples as the input $h(f_b + \chi_{b}))$ to the procedure $\Dc_k$ (this is valid as previously discussed), and let the output of $\Dc_k$ be $x^{(\nu)}_k$. The assumption in the lemma states that
\[
    \E_\nu\sbr*{\E_{\chi_b} \frac{\mu}{2} \abs{x^{(\nu)}_k - b_\nu}^2} < \frac{ \epsilon^2\sigma^2}{64 \mu} \text{, which implies that } \E_{\chi_b}\abs{x^{(\nu)}_k - b_\nu}^2 < \frac{ \epsilon^2\sigma^2}{16  \mu^2} \text{ almost surely.}
\]
Then, using Markov's inequality (and then taking square-roots on both sides) gives
\[
    \Pr\sbr*{\abs{x^{(\nu)}_k - b_\nu} \geq \frac{\epsilon \sigma}{2 \mu}} \leq \frac{1}{4}\,.
\]
Consider a simple procedure $\tDc_k$ which outputs $\nu' = 0$ if $x^{(\nu)}_k \geq \frac{3\epsilon  \sigma }{2 \mu}$, and $\nu' = 1$ otherwise. Recall that $\abs{b_0 - b_1} = \epsilon \sigma/\mu$ with $b_0 = 2\epsilon  \sigma/\mu$ and $b_1 = \epsilon \sigma/\mu$. With probability $\frac{3}{4}$, $\abs{x^{(\nu)}_k - b_\nu} < \frac{\epsilon}{2}\sigma/\mu$ and hence the output $\nu'$ is correct.
\end{proof}

Lemma~\ref{lem:guessing-game} shows that if the optimization error of $\Dc_k$ is small, there exists a procedure $\tDc_k$ which distinguishes between the Bernoulli distributions with parameters $p_0$ and $p_1$ using $k$ samples. To argue that the optimization error is large, one simply has to argue that a large number of samples are required to distinguish between $\ber(p_0)$ and $\ber(p_1)$.
\begin{lemma}\label{lem:sample-complex}
    For any deterministic procedure $\tDc_k(\ber(p_\nu))$ which processes $k$ samples of $\ber(p_\nu)$ and outputs $\nu'$
    \[
        \Pr\sbr*{\nu' = \nu} \leq \frac{1}{2} + \sqrt{ k  \rbr*{4\epsilon}^{2}}\,.
    \]
\end{lemma}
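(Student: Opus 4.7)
\textbf{Proof plan for Lemma~\ref{lem:sample-complex}.}

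The plan is to reduce the guessing problem to a two-point hypothesis test between the product measures $P_0 := \ber(p_0)^{\otimes k}$ and $P_1 := \ber(p_1)^{\otimes k}$, where $p_0 = 16\epsilon^2$ and $p_1 = 8\epsilon^2$. Since $\tDc_k$ is a deterministic function of its $k$ Bernoulli inputs, the joint law of $\tDc_k(\ber(p_\nu))$ is entirely determined by $\nu$, so the success probability under the implicit uniform prior on $\nu$ splits as
\[
    \Pr[\tDc_k(\ber(p_\nu)) = \nu] = \tfrac12 P_0(\tDc_k = 0) + \tfrac12 P_1(\tDc_k = 1).
\]

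First I would apply the standard Le Cam / Neyman--Pearson identity to get
\[
    \Pr[\tDc_k = \nu] \;\le\; \tfrac12 + \tfrac12\, \mathrm{TV}(P_0, P_1),
\]
which is maximized when $\tDc_k$ is the likelihood-ratio test. Next I would tensorize and invoke Pinsker's inequality combined with the chain rule for KL divergence:
\[
    \mathrm{TV}(P_0, P_1)^2 \;\le\; \tfrac12\, \mathrm{KL}(P_0 \,\|\, P_1) \;=\; \tfrac{k}{2}\, \mathrm{KL}(\ber(p_0) \,\|\, \ber(p_1)).
\]

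The third step is to bound the per-sample KL divergence. Using the closed form
\[
    \mathrm{KL}(\ber(p_0)\|\ber(p_1)) \;=\; p_0 \log\tfrac{p_0}{p_1} + (1-p_0)\log\tfrac{1-p_0}{1-p_1},
\]
I would plug in $p_0/p_1 = 2$ to get a first term of $16\epsilon^2 \log 2$, then bound the second term using $\log(1-x)\le -x$ after writing $(1-p_0)/(1-p_1) = 1 - 8\epsilon^2/(1-8\epsilon^2)$. Since $\epsilon \le 1/8$ gives $1-8\epsilon^2 \ge 7/8$, the second term is negative and the first dominates, yielding $\mathrm{KL} \le c\,\epsilon^2$ for a small absolute constant $c$. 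Combining with Pinsker gives $\mathrm{TV}(P_0, P_1) \le \sqrt{ck\epsilon^2/2}$, and choosing the normalization so that $c/2 \le 64$ (which the computation easily supplies, e.g.\ with $c = 16\log 2 < 12$) yields
\[
    \tfrac12\,\mathrm{TV}(P_0, P_1) \;\le\; \sqrt{k\,(4\epsilon)^2},
\]
which is the claimed bound.

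The only delicate piece is matching the numerical constant $(4\epsilon)^2$ in the statement: one needs the per-sample KL to be at most $32\epsilon^2$, and the Taylor/monotonicity bookkeeping in Step~3 must be done with enough care to see that $p_0 \le 1/4$ (which follows from $\epsilon \le 1/8$) is sufficient. Otherwise the proof is a textbook Le Cam + Pinsker + Bernoulli-KL computation; no additional ideas about the algorithm or the loss class are needed, since the reduction in Lemma~\ref{lem:guessing-game} has already abstracted all of that away.
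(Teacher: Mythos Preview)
Your proposal is correct and follows essentially the same route as the paper: Le Cam's two-point bound $\Pr[\nu'=\nu]\le \tfrac12+\tfrac12\mathrm{TV}(P_0,P_1)$, then Pinsker plus tensorization to reduce to a per-sample Bernoulli KL, which is easily seen to be $O(\epsilon^2)$. The only cosmetic difference is that the paper bounds the KL in the other direction via the $\chi^2$-type inequality $\KL(\ber(p_1)\Vert\ber(p_0))\le (p_0-p_1)^2/(p_0(1-p_0))$, whereas you bound $\KL(\ber(p_0)\Vert\ber(p_1))$ directly by observing that the second term in its closed form is negative; both yield the claimed constant with room to spare.
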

\begin{proof}

Here it would be convenient to make the dependence on the samples explicitly. Denote $\ss^{(\nu)}_k = \rbr*{s_1^{(\nu)}}$
${,\dots,s_k^{(\nu)}} \in \{0,1\}^k$ to be the $k$ samples drawn from $\ber(p_\nu)$ and denote the output as $\nu' = \tDc(\ss^{(\nu)}_k)$. With some slight abuse of notation where we use the same symbols to denote the realization and their distributions, we have:
\[
    \Pr\sbr*{\tDc(\ss^{(\nu)}_k) = \nu} = \frac{1}{2}\Pr\sbr*{\tDc(\ss^{(1)}_k) = 1} + \frac{1}{2}\Pr\sbr*{\tDc(\ss^{(0)}_k) = 0} = \frac{1}{2} + \frac{1}{2}\E\sbr*{\tDc(\ss^{(1)}_k) - \tDc(\ss^{(0)}_k)}\,.
\]
Next using Pinsker's inequality we can upper bound the right hand side as:
\[\E\sbr*{\tDc(\ss^{(1)}_k) - \tDc(\ss^{(0)}_k)} \leq \abs*{\tDc(\ss^{(1)}_k) - \tDc(\ss^{(0)}_k)}_{TV} \leq \sqrt{\frac{1}{2}\KL\rbr*{\tDc\rbr*{\ss^{(1)}_k}, \tDc\rbr*{\ss^{(0)}_k}}}\,,    
\]
where $\abs{\cdot}_{TV}$ denotes the total-variation distance and $\KL(\cdot, \cdot)$ denotes the KL-divergence. Recall two properties of KL-divergence: i) for a product measures defined over the same measurable space $(p_1,\dots,p_k)$ and $(q_1,\dots,q_k)$, 
\[
    \KL((p_1,\dots,p_k), (q_1,\dots,q_k)) = \sum_{i=1}^k \KL(p_i,q_i)\,,
\] and ii) for any deterministic function $\tDc$, 
\[
    \KL(p,q) \geq \KL(\tDc(p), \tDc(q))\,.
\]
Thus, we can simplify as
\begin{align*}
    \Pr\sbr*{\tDc(\ss^{(\nu)}_k) = \nu} &\leq  \frac{1}{2} +  \sqrt{\frac{k}{8}\KL\rbr*{\ber(p_1), \ber(p_0)}}\\
    &\leq  \frac{1}{2} + \sqrt{\frac{k}{8}\frac{(p_0 - p_1)^2}{p_0 ( 1- p_0)}}\\
    &\leq  \frac{1}{2} + \sqrt{  k  \rbr*{4\epsilon}^2}
\end{align*}
\end{proof}
If we pick $\epsilon$ to be
\[
    \epsilon = \frac{1}{16  k^{1/2}}\,,
\]
we have that
\[
    \frac{1}{2} + \sqrt{ k \rbr*{4\epsilon}^2} = \frac{3}{4}\,.
\]
Given Lemmas~\ref{lem:guessing-game} and~\ref{lem:sample-complex}, this implies that for the above choice of $\epsilon$,
\[
  \E_{b \sim \Bc}\sbr*{\E_{\chi_b} f_b(\Dc_k(h(f_b + \chi_{b})))} \geq \epsilon^2\frac{\sigma^2}{64 \mu}  = \frac{\sigma^2}{\mu 2^{14} k}\,.
\]

\end{proof}

\section{Proof of Proposition~\ref{thm:lower-noncvx}}
This proof is almost the same as the proof in Appendix~\ref{app:lower}, except that we use the result from Theorem 3 of \citep{arjevani2019lower} instead of from \citep{bubeck2013bandits}.

\end{document}